\documentclass{article}

\usepackage[preprint]{neurips_2026}

\usepackage[utf8]{inputenc} % allow utf-8 input
\usepackage[T1]{fontenc}    % use 8-bit T1 fonts
\usepackage{hyperref}       % hyperlinks
\usepackage{url}            % simple URL typesetting
\usepackage{booktabs}       % professional-quality tables
\usepackage{amsfonts}       % blackboard math symbols
\usepackage{nicefrac}       % compact symbols for 1/2, etc.
\usepackage{microtype}      % microtypography
\usepackage{xcolor}     
\usepackage{graphicx}
\usepackage{subcaption}
\usepackage{booktabs}
\usepackage{multirow}
\usepackage{hyperref}
\usepackage{booktabs}
\usepackage{amsmath}
\usepackage{amssymb}
\usepackage{mathtools}
\usepackage{amsthm}% colors
\usepackage{microtype}
\usepackage{xcolor}

\usepackage{algorithm}
\usepackage{algorithmic}

\theoremstyle{plain}
\newtheorem{theorem}{Theorem}[section]

\newtheorem{lemma}[theorem]{Lemma}
\newtheorem{corollary}[theorem]{Corollary}
\theoremstyle{definition}

\theoremstyle{remark}

\usepackage{float}

\title{Posterior Continuation with Noise-Conditioned Frequency Exposure for Diffusion Inverse Problems}

\author{%
{\small\bfseries
Feng Tian\textsuperscript{1},\,
Yixuan Li\textsuperscript{1},\,
Weili Zeng\textsuperscript{1},\,
Weitian Zhang\textsuperscript{1},\,
Yichao Yan\textsuperscript{1,*},\,
Xiaokang Yang\textsuperscript{1}
}\\[0.25em]
{\normalfont
\textsuperscript{1}Shanghai Jiao Tong University
}
}

\begin{document}

\maketitle
\begingroup
\renewcommand{\thefootnote}{\fnsymbol{footnote}}
\footnotetext[1]{Corresponding authors: yanyichao@sjtu.edu.cn}
\endgroup
\begin{abstract}
Diffusion posterior sampling solves inverse problems by combining a pretrained diffusion prior with measurement-consistency guidance. However, full-band guidance can be unreliable at high noise levels, where clean estimates contain
score-induced errors and high-frequency measurement directions are weakly
identifiable. We argue that posterior guidance should expose measurement
frequencies according to the instantaneous diffusion noise level. Based on this
principle, we propose a posterior continuation framework that constructs a
family of intermediate posteriors whose likelihood emphasizes currently reliable
frequency bands and gradually returns to full-band consistency. We instantiate
this framework with a stabilized sampler that combines a diffusion predictor,
frequency-limited likelihood refinement, and a Haar-domain commitment rule that
commits reliable coarse corrections while deferring weakly identifiable details.
Across super-resolution, inpainting, and deblurring, our method achieves
competitive-to-state-of-the-art restoration performance, including up to 5 dB
PSNR improvement on motion deblurring over strong baselines  in evaluations on
FFHQ and ImageNet.

\end{abstract}

\section{Introduction}
Diffusion models \cite{ddpm, ddim, scoresde, cm} have emerged as powerful priors for solving challenging inverse problems \cite{ddrm, ddnm} without task-specific training, by combining pretrained diffusion priors with measurement-consistency guidance during sampling. Despite their empirical success, training-free diffusion posterior sampling methods often struggle to recover fine details under severe degradations such as strong blur, heavy downsampling, or ill-conditioned operators. These failures persist even when the forward model is known exactly, suggesting a limitation that is not merely algorithmic, but structural. A key observation underlying these failures is that what is reliably recoverable during sampling depends jointly on the diffusion noise level and the signal frequency \cite{freq_3, freq}. As shown in Fig.~\ref{method_1}, in early and intermediate stages, the iterate remains heavily corrupted by noise, and high-frequency components exhibit intrinsically low signal-to-noise ratio, making them weakly identifiable. In contrast, low-frequency structure tends to be more stable and often preserves global geometry. Nevertheless, most existing posterior samplers inject measurement information
uniformly across frequencies and noise levels, without explicitly considering
their noise-dependent reliability.
\begin{figure}[t]
\centering
\includegraphics[width=\linewidth]{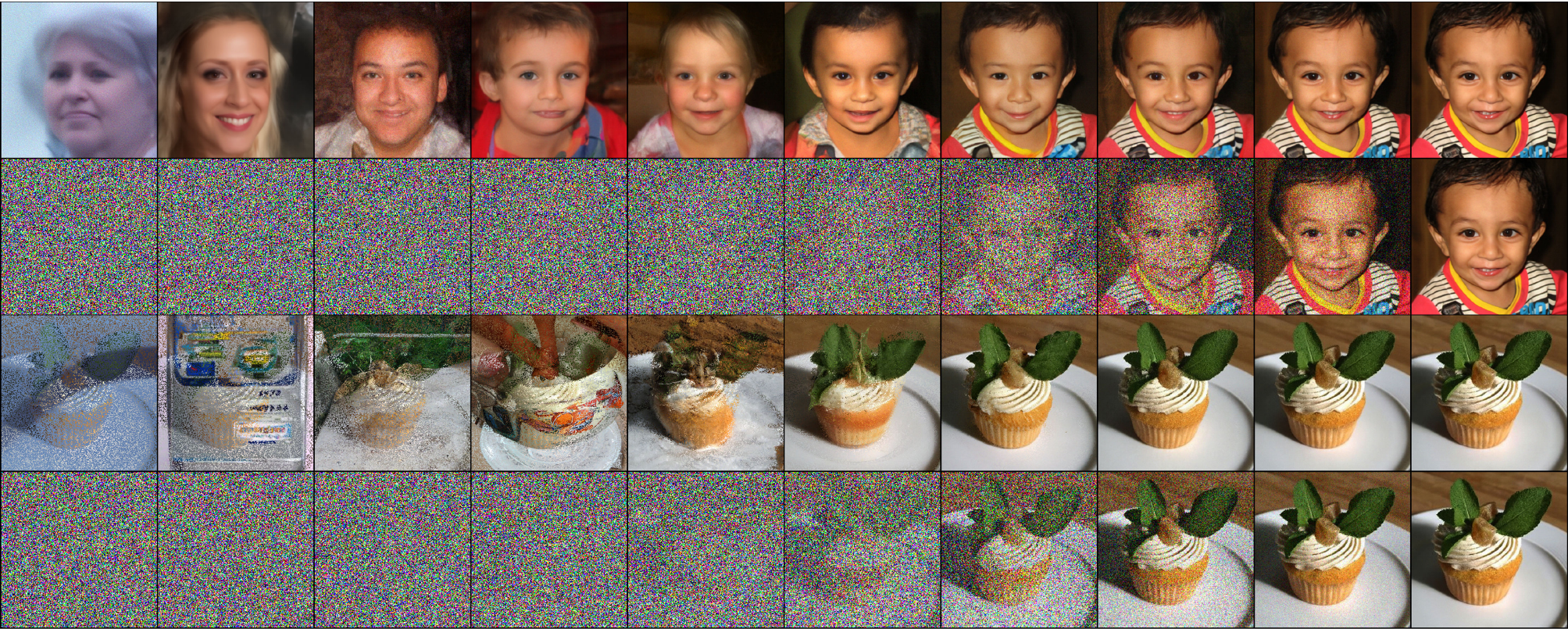}
\caption{\textbf{Noise-dependent recoverability along the sampling trajectory.}
The first and third rows show estimated clean images on FFHQ and ImageNet, respectively, while the second and fourth rows show the corresponding noisy samples at different timesteps. Coarse layouts remain relatively stable throughout sampling, whereas fine details become reliable only as the noise level decreases.}
\label{method_1}
\vspace{-6mm}
\end{figure}

Early influential works reflect related intuitions from complementary angles, including ILVR \cite{ilvr} for low-frequency conditioning, DDRM \cite{ddrm} for efficient linear inverse sampling via operator spectral factorization, DDNM \cite{ddnm} for zero-shot restoration through a null-space formulation, and DPS \cite{dps} for general inverse problems using approximate posterior score guidance. However, under severe corruption, these samplers exhibit recurring instabilities. Fine details are often under-recovered or distorted, producing ringing or texture wash-out when constraints are applied aggressively or without spectral awareness. Moreover, at high noise levels, likelihood gradients computed from inaccurate estimates can be misaligned with the local posterior geometry, causing early structural drift that is hard to undo. Finally, the interaction between noise level and constraint strength is usually tuned heuristically, making performance sensitive to schedules, step sizes, and operator conditioning, with the largest degradation on ill-conditioned operators, such as strong motion deblurring and super-resolution.

A rich line of work has attempted to mitigate these issues by modifying either the posterior correction mechanism or the sampling trajectory. Optimization-based methods strengthen proximal interpretations of sampling, such as DiffPIR \cite{diffpir} and DPnP \cite{dpnp}. Particle-based methods reduce approximation error and better represent multi-modality, such as FPS \cite{fps} and DCDP \cite{dcdp}. PostEdit \cite{postedit} directly optimizes the estimated clean data to enable larger global corrections. Most relevant to stability and detail recovery, frequency-aware designs inject spectral structure into likelihood guidance. Frequency-domain moving-average sampling \cite{freq_1} damps unstable
high-frequency fluctuations, while FGPS \cite{freq} progressively incorporates
frequencies along a sampling schedule. While these approaches substantially improve robustness, they often treat noise scheduling and frequency shaping as separable design choices, rather than as two aspects of a single posterior geometry. In particular, existing frequency curricula are commonly parameterized by discrete sampling steps or solver-specific time indices. As a result, their semantics can shift when changing the number of function evaluations or switching solvers, complicating tuning and obscuring a principled understanding of stability. More fundamentally, these methods lack an explicit posterior-level prescription for how measurement information should be injected as a function of the instantaneous noise level.

In this work, we propose a posterior continuation framework based on noise-conditioned frequency exposure. Our central principle is that measurement frequencies should be exposed according to the instantaneous diffusion noise level. Instead of enforcing full-band measurement consistency uniformly throughout sampling, we construct a family of intermediate posteriors whose likelihood emphasizes frequency bands that are reliable at the current noise level and gradually returns to full-band consistency as the noise decreases. This gives frequency exposure a solver-independent meaning tied to the noise scale, rather than to an arbitrary step index. We instantiate this principle with an efficient training-free sampler. At each noise level, a diffusion predictor first estimates a clean image, and a frequency-limited likelihood refinement corrects it using the currently exposed measurement bands. Because early band-limited likelihoods leave high-frequency detail directions weakly constrained, naive refinement can introduce operator-dependent artifacts. We therefore introduce a post-refinement Haar-domain commitment rule that commits reliable coarse corrections immediately while deferring weakly identifiable details until their corresponding measurement frequencies are exposed. Across super-resolution, inpainting, and deblurring, our method achieves competitive-to-state-of-the-art restoration performance, with particularly large gains on ill-conditioned operators such as motion blur. Our main contributions are summarized as follows.
\begin{itemize}

\item We introduce a posterior-continuation view of diffusion inverse problems,
where measurement bandwidth is coupled to the instantaneous diffusion noise
rather than to solver-specific step indices, to obtain a solver-independent
exposure schedule.

\item We provide a theoretical characterization of high-noise likelihood
mismatch, showing how score-induced clean-estimate errors enter measurement
gradients and how frequency truncation removes unexposed high-frequency
mismatch.

\item We design a practical sampler with noise-conditioned frequency exposure
and a post-refinement Haar commitment rule that commits reliable coarse
corrections while gating weakly identifiable details across the sampling
trajectory.

\item Across super-resolution, inpainting, and deblurring, our method achieves
competitive-to-state-of-the-art restoration performance, including up to 5 dB
PSNR improvement on motion deblurring in evaluations on FFHQ and ImageNet.

%\item We introduce a Noise--Frequency Continuation view of diffusion posterior sampling by constructing a continuous family of intermediate posteriors that couples noise level and measurement bandwidth, rather than enforcing full-band measurement guidance uniformly across the trajectory.
%\item We design a practical sampler that realizes this view via a frequency-controlled likelihood with an explicit bandwidth schedule and a principled transition back to full measurement consistency, improving stability under ill-conditioned degradations.
%\item We propose a Haar-based multi-resolution consistency strategy that addresses the semantic information gap induced by band-limited likelihoods, enabling reliable coarse correction while preventing spurious high-frequency artifacts.
%\item Extensive experiments on super-resolution, inpainting, and deblurring demonstrate consistent improvements over prior training-free diffusion posterior samplers, including around \(5\) dB PSNR gains on motion deblurring while maintaining strong SSIM and LPIPS.
\end{itemize}

\section{Preliminary}\label{inverse_pre}
We consider an inverse problem that aims to sample the clean signal
\(\boldsymbol{x}_0\in\mathbb{R}^d\) from the posterior
\(p(\boldsymbol{x}_0\mid\boldsymbol{y})\), given a degraded measurement
\(\boldsymbol{y}\in\mathbb{R}^n\):
\begin{equation}\label{inv_define}
    \boldsymbol{y} =\mathcal{A}\left(\boldsymbol{x}_0\right) + \boldsymbol{n}, \quad \boldsymbol{n}\sim\mathcal{N}\left(\boldsymbol{0}, \sigma_y^2\boldsymbol{I}\right)
\end{equation}
Log density terms \(\nabla_{\boldsymbol{x}_t} \log p(\boldsymbol{x}_t)\) are
applied to solve the inverse problem, and we can derive
\(\nabla_{\boldsymbol{x}_t} \log p(\boldsymbol{x}_t|\boldsymbol{y})\) by
Bayes' rule under posterior score guidance
\begin{equation}\label{invsc}
    \nabla_{\boldsymbol{x}_t} \log p(\boldsymbol{x}_t|\boldsymbol{y}) = \nabla_{\boldsymbol{x}_t} \log p(\boldsymbol{x}_t) + \nabla_{\boldsymbol{x}_t}\log p( \boldsymbol{y}|\boldsymbol{x}_t),
\end{equation}
where the first term on the right-hand side is provided by the pre-trained diffusion model, whereas the second term is typically intractable. The measurement $\boldsymbol{y}$ induces an observation-dependent constraint that guides the sampling trajectory toward reconstructions consistent with the input, thereby complementing the prior $p(\boldsymbol{x})$ encoded by the diffusion model. To obtain an explicit and tractable surrogate for the second term, the existing
method DPS~\cite{dps} adopts the following approximation to enable tractable
guidance
\begin{equation}
\begin{aligned}
    p(\boldsymbol{y}|\boldsymbol{x}_t) = \mathbb{E}_{\boldsymbol{x}_0 \sim p(\boldsymbol{x}_0|\boldsymbol{x}_t)}[p(\boldsymbol{y}|\boldsymbol{x}_0)] 
    \approx p(\boldsymbol{y}|\hat{\boldsymbol{x}}_0), \ \ \hat{\boldsymbol{x}}_0=\mathbb{E}_{\boldsymbol{x}_0 \sim p(\boldsymbol{x}_0|\boldsymbol{x}_t)}[\boldsymbol{x}_0].
\end{aligned}
\end{equation}
The posterior mean \(\hat{\boldsymbol{x}}_0\) can be estimated by the pretrained
diffusion model, for example via Tweedie's formula under VP-SDE~\cite{scoresde}
or DDIM-style denoising~\cite{ddim}. This yields a tractable approximation to
\(p(\boldsymbol{x}_{t-1}\mid \boldsymbol{x}_{t},\boldsymbol{y})\) under the
posterior score guidance in Eq.~\ref{invsc}.

\section{Methods}\label{sec:method}
\subsection{Theoretical Analysis and Method Overview}
We use image restoration as a running example to motivate our method. 
Posterior sampling for restoration must recover both coarse structure and 
fine details, but the likelihood term in Eq.~\ref{invsc} is typically evaluated 
through an estimated clean image $\hat{\boldsymbol{x}}_0$ rather than the 
unknown clean signal. At high noise levels, this estimate can contain 
non-negligible score-induced errors. These errors do not imply unconditional 
monotone amplification for every score model and forward operator. Instead, 
they induce a noise-dependent sensitivity pathway. Measurement-driven 
corrections computed from an inaccurate $\hat{\boldsymbol{x}}_0$ can therefore 
be misaligned with the local posterior geometry. By contrast, low-frequency 
content remains comparatively stable under high noise and provides a reliable 
scaffold for preserving global layout, as illustrated in Fig.~\ref{method_1}.
 
We formalize this observation in Appendix~\ref{proof}. Theorem~\ref{gradient_mismatch} first gives a worst-case sensitivity bound for the likelihood-gradient error. Theorem~\ref{exact} then provides an exact spectral characterization: measured score-error components enter the likelihood-gradient mismatch through a \(\sigma^2 \mathcal{A}^{\top}\mathcal{A}\) factor. Consequently, under a non-degeneracy condition on the measured projection of
the score error, the corresponding projected mismatch component grows with the diffusion noise level. Corollary~\ref{truncation} further shows that frequency truncation removes the mismatch contributed by unexposed high-frequency bands. This directly motivates our noise-conditioned frequency exposure: at high noise,
the likelihood should primarily emphasize reliable frequency components, and as
the noise decreases, the passband is gradually expanded toward full-band
consistency. Finally, Theorem~\ref{thm:unstable} shows that such gradient errors can break descent
even under otherwise valid step sizes, while Corollary~\ref{sensitive} shows that band-limited
likelihoods reduce effective curvature and make the resulting updates less
schedule- and operator-sensitive during high-noise posterior refinement.

To address these issues, we introduce a posterior-continuation sampler with
noise-conditioned frequency exposure and a post-update multi-resolution
commitment rule. Let \(\{\tau_i\}_{i=0}^{N}\) denote the outer reverse-time grid and
\(\sigma_i=\sigma(\tau_i)\) the corresponding noise levels, with
\(\sigma_N>\cdots>\sigma_0=0\). We iterate from \(i=N\) to \(1\). Starting from \(x_{\tau_N}\sim\mathcal N(\boldsymbol{0},\sigma_N^2\boldsymbol{I})\), we
first use a pretrained diffusion solver to obtain a clean estimate
\(\hat{\boldsymbol{x}}_{0,i}\) at noise level \(\sigma_i\). We then refine this estimate under a noise-conditioned likelihood that
emphasizes reliable measurement frequencies at the current noise level and
gradually transitions to full-band consistency. The cutoff
\(\omega_i\) expands as \(\sigma_i\) decreases, so the sampler moves from coarse
measurement correction to full-band consistency. Finally, because early band-limited likelihoods leave detail coefficients weakly
constrained, we apply a Haar-domain commitment step: coarse corrections are
committed immediately, whereas refined details are gated until the corresponding
measurement frequencies become active. The procedure is summarized in Alg.~\ref{alg:main} and Fig.~\ref{main}, which
illustrate the algorithmic flow and module interactions.

In the following two sections, Sec.~\ref{log_density} details the noise-conditioned frequency-exposure
likelihood used to refine \(\hat{\boldsymbol{x}}_{0,i}\), and Sec.~\ref{sec_har} describes the Haar-domain commitment rule.
\begin{figure}[t]
    \centering
    \includegraphics[width=\linewidth]{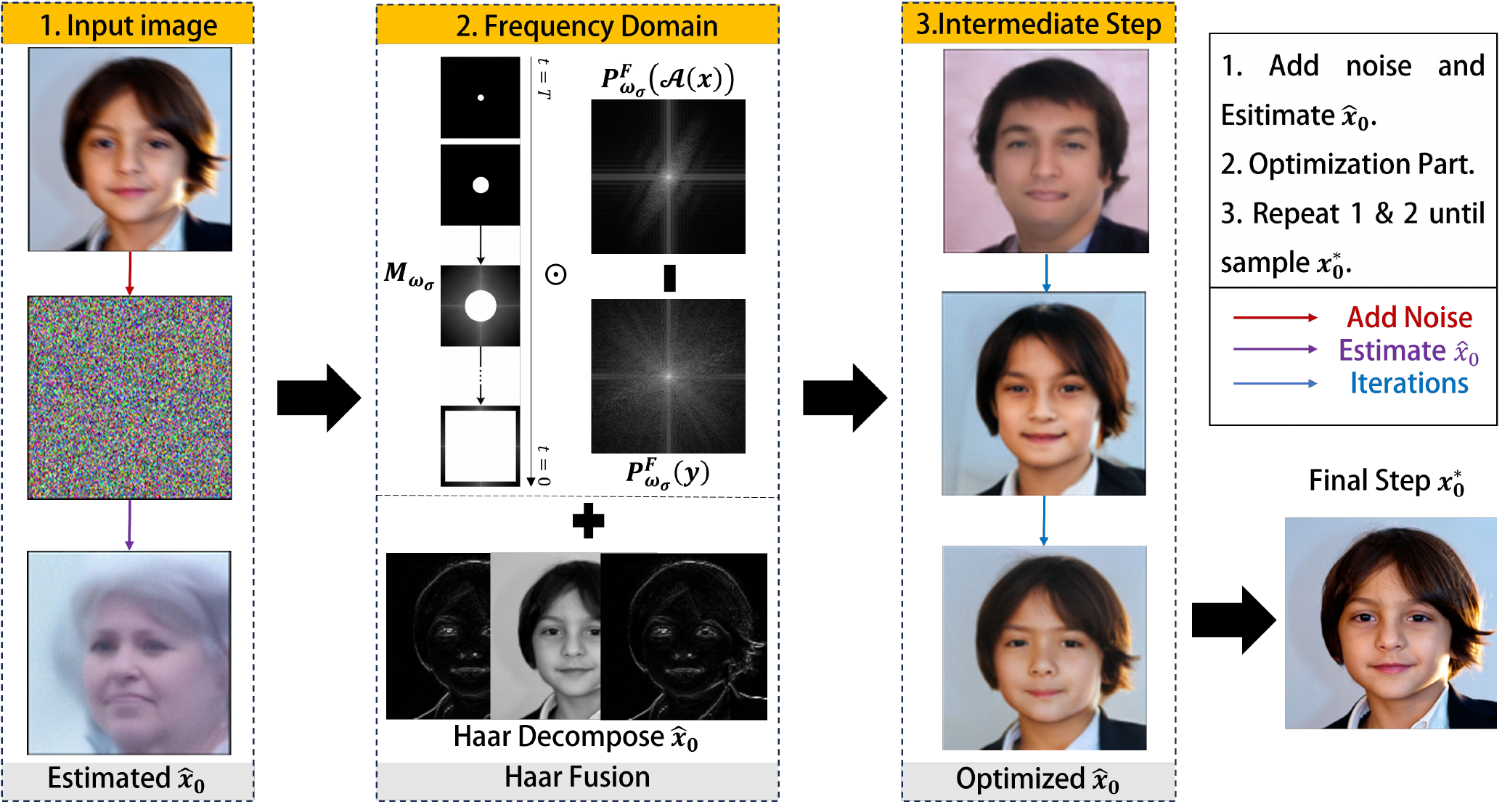}
    \vspace{-15pt}
    \caption{\textbf{Method overview.} At each noise level, a pretrained diffusion
    solver first predicts a clean estimate \(\hat{\boldsymbol{x}}_{0,i}\). We refine
    it with a noise-conditioned frequency-exposure likelihood that emphasizes
    reliable bands and gradually returns to full-band consistency. A Haar-domain commitment step then commits coarse corrections while gating
    details until they become identifiable.}
    \label{main}
    \vspace{-5mm}
\end{figure}

\subsection{Noise-Conditioned Frequency Exposure as Likelihood Design}
\label{log_density}
We now define the noise-conditioned likelihood used to refine
\(\hat{\boldsymbol{x}}_{0,i}\). Given an observation \(\boldsymbol{y}\) and a
forward operator \(\mathcal{A}\), the goal is to emphasize measurement
consistency on frequency components that are reliable at the current noise level,
while gradually returning to full-band consistency as the noise decreases. To
selectively enforce measurement consistency within a controllable frequency
band, we define the (normalized) 2D discrete Fourier transform for a
multi-channel tensor \(\boldsymbol{z}\in\mathbb{R}^{C\times H\times W}\) by
\begin{equation}
\begin{aligned}
    \mathcal{F}(\boldsymbol{z})_{c,u,v}
=
\frac{1}{\sqrt{HW}}
&\sum_{h=0}^{H-1}
\sum_{w=0}^{W-1}
\boldsymbol{z}_{c,h,w}\exp\!\left(
-2\pi i\left(\frac{uh}{H}+\frac{vw}{W}\right)
\right).
\end{aligned}
\end{equation}
Let \(\mathcal{J}\) denote the frequency-shift operator that centers the
zero-frequency component. We define coordinates \((u,v)\) on the shifted
frequency grid and let \(r_{\mathrm{Nyq}}\) be the Nyquist radius of the
observable residual grid. For a normalized cutoff \(\omega\in[0,1]\), the radial
low-pass mask is
\begin{equation}
M_\omega(u,v)
=
\mathbf{1}\left(
\frac{\sqrt{u^2+v^2}}{r_{\mathrm{Nyq}}}<\omega
\right),
\end{equation}
with the mask broadcast across channels.
We then define the masked, shifted spectrum
\begin{equation}\label{eq:PF_def}
P^F_\omega(\boldsymbol{z})
:=
\mathcal{J}\big(\mathcal{F}(\boldsymbol{z})\big)
\odot
M_\omega .
\end{equation}
where \(\odot\) denotes elementwise multiplication. Throughout,
\(\|\cdot\|_2^2\) applied to complex spectra denotes the sum of squared
magnitudes, i.e., \(\|Z\|_2^2=\sum_{c,u,v}|Z_{c,u,v}|^2\). With this
normalization, Parseval's identity implies that
\(\|P^{F}_{\omega}(\boldsymbol{r})\|_2^2\) measures the band-limited residual energy.

The frequency mask is applied in the observable residual space. For operators
whose outputs remain image-like, such as blur and masking, \(P^F_{\omega_i}\)
acts directly on \(\mathcal{A}(\boldsymbol{x})-\boldsymbol{y}\). For
resolution-changing operators, such as super-resolution, the same construction
is applied on the corresponding measurement grid. The band-limited measurement
loss is
\begin{equation}
\mathcal{L}_{\mathrm{freq}}(\boldsymbol{x}, \boldsymbol{y}, \omega_i)
=
\left\|P^{F}_{\omega_i}\!\big(\mathcal{A}(\boldsymbol{x})-\boldsymbol{y}\big)\right\|_2^{2}.
\label{eq:Lfreq}
\end{equation}
The band-limited term measures measurement consistency within the passband
specified by \(\omega_i\). Let \(\sigma_i\) denote the current noise level on the sampling trajectory. We schedule a cutoff \(\omega_i=\omega(\sigma_i)\) and a mixing weight \(\lambda_i=\lambda(\sigma_i)\in[0,1]\). The frequency-guided measurement objective is
\begin{equation}\label{eq:fg_loss}
\begin{aligned}
\mathcal{L}_{i}(\boldsymbol{x};\boldsymbol{y})
:=
(1-\lambda_i)\,\|\mathcal{A}(\boldsymbol{x})-\boldsymbol{y}\|_2^2
+
\lambda_i\,\mathcal{L}_{\mathrm{freq}}(\boldsymbol{x},\boldsymbol{y},\omega_i).
\end{aligned}
\end{equation}
Eq.~\eqref{eq:fg_loss} defines a noise-dependent likelihood that interpolates 
between band-limited and full-band measurement consistency. Early in the trajectory, a positive \(\lambda_i\) and a small \(\omega_i\)
attenuate unexposed high-frequency residuals. As \(\sigma_i\) decreases,
\(\omega_i\) expands and \(\lambda_i\) decays, recovering full-band consistency
near the terminal stage. Corollary~\ref{cor:mixed_gradient_error} shows that this 
mixed objective directly attenuates the likelihood-gradient mismatch: outside 
the exposed passband, the score-induced error contribution is weighted by 
$1-\lambda_i$. This provides a posterior-level justification for using 
noise-conditioned frequency exposure rather than full-band guidance throughout 
sampling.

We then formulate an intermediate posterior density whose log-gradient will be
used to refine the estimated \(\hat{\boldsymbol{x}}_{0,i}\) from the PF-ODE at
noise level \(\sigma_i\). Specifically, using the frequency-guided measurement
objective \(\mathcal{L}_i(\cdot\,;\,\boldsymbol{y})\) in Eq.~\eqref{eq:fg_loss},
we define the intermediate target distribution that combines measurement
consistency with a Gaussian anchor centered at
\(\hat{\boldsymbol{x}}_{0,i}\)

\begin{equation}\label{had}
\pi_i(\boldsymbol{x}_0)
\ \propto\ 
\exp\!\left(
-\frac{1}{2\beta_i^{2}}\,
\mathcal{L}_{i}(\boldsymbol{x}_0;\boldsymbol{y})
\right)
\cdot 
\mathcal{N}\!\left(
\boldsymbol{x}_0;
\hat{\boldsymbol{x}}_{0,i},
\sigma_i^2 \boldsymbol{I}
\right),
\end{equation}
where \(\beta_i>0\) is a temperature coefficient controlling the strength of the measurement term. To maintain stochastic exploration under the intermediate posterior, we employ Langevin dynamics \cite{lgvd} and perform $T$ refinement steps (indexed by \(\ell\)) at each outer step \(i\).
Starting from \(\boldsymbol{x}_{0,i}^{(0)}=\boldsymbol{\hat{x}}_{0,i}\), the Langevin refinement update is written as
\begin{equation}\label{ula_update}
\begin{aligned}
\boldsymbol{x}_{0,i}^{(\ell+1)}
=
\boldsymbol{x}_{0,i}^{(\ell)}
+
\eta_{i,\ell}\,
\nabla_{\boldsymbol{x}_0}
\log \pi_i\!\left(\boldsymbol{x}_{0,i}^{(\ell)}\right)
+
\sqrt{2\eta_{i,\ell}}\,
\boldsymbol{\xi}_{i,\ell},
\end{aligned}
\end{equation}
where $\boldsymbol{\xi}_{i,\ell}\sim\mathcal{N}(\boldsymbol{0},\boldsymbol{I})$,
and we take
\(\boldsymbol{x}_{0,i}^{\mathrm{ref}}:=\boldsymbol{x}_{0,i}^{(T)}\)
as the refined estimate.
In practice, \(\eta_{i,\ell}\) can be reduced as \(\beta_i\) decreases to maintain stable updates.
All implementation details and hyperparameter settings are provided in Appendix~\ref{imple}.

\subsection{Post-update Haar-domain Detail Commitment} \label{sec_har} 
The frequency-exposure refinement in Sec.~\ref{log_density} is designed to reduce high-noise drift by restricting the likelihood to identifiable frequency bands. However,
early band-limited likelihoods also leave detail coefficients weakly constrained.
Under the idealized low-pass model in Lemma~\ref{lp_insensitive_detail}, the data
term has vanishing gradient along high-frequency subspaces, so the corresponding
posterior directions are dominated by the Gaussian anchor around
\(\hat{\boldsymbol{x}}_{0,i}\). When the noise level is large, the refinement
process in Eq.~\ref{ula_update} can therefore inject high-variance,
operator-dependent detail components, including null-space or near-null-space
artifacts. This motivates Haar fusion as an explicit commitment mechanism:
coarse corrections are committed aggressively, while refined details are adopted
only when they become identifiable under the evolving posterior geometry.

Corollary~\ref{detail_prior_dominated} further explains this effect. When the
likelihood provides little information about Haar-detail components, their
conditional posterior is dominated by the Gaussian anchor centered at
\(\hat{\boldsymbol{x}}_{0,i}\) after being mapped into the Haar domain, i.e.,
\[
\boldsymbol{z}_{\mathrm{d}} \mid
(\boldsymbol{y},\boldsymbol{z}_{\mathrm{c}})
\approx
\mathcal{N}(\hat{\boldsymbol{z}}_{\mathrm{d},i},
\sigma_i^2 \boldsymbol{I}).
\] At high noise levels, this posterior yields refined high-frequency coefficients
with high variance and pronounced operator dependence. Let
\(\hat{\boldsymbol{x}}_{0,i}\) denote the image estimated by PF-ODE and
\(\boldsymbol{x}^{\mathrm{ref}}_{0,i}\) the refined estimate produced by the
Langevin update in Eq.~\ref{ula_update}. We use an orthonormal Haar transform \(W\). The coarse block
\(\mathrm{c}\) denotes the low-pass Haar coefficients, and the detail block
\(\mathrm{d}\) collects the remaining high-pass coefficients across orientations
and levels. We apply \(W\) to both \(\hat{\boldsymbol{x}}_{0,i}\) and
\(\boldsymbol{x}^{\mathrm{ref}}_{0,i}\):
\begin{equation}\label{haar_decomp}
\hat{\boldsymbol{z}}_i = W\hat{\boldsymbol{x}}_{0,i},
\qquad
\boldsymbol{z}^{\mathrm{ref}}_i = W\boldsymbol{x}^{\mathrm{ref}}_{0,i}.
\end{equation}
A Haar-domain commitment step that fuses coefficients bandwise is performed as
\begin{equation}\label{haar_fusion}
\boldsymbol{z}^{\mathrm{fuse}}_{i,b}
=
(1-w_{i,b})\,\hat{\boldsymbol{z}}_{i,b}
+
w_{i,b}\,\boldsymbol{z}^{\mathrm{ref}}_{i,b},
\qquad
b\in\{\mathrm{c},\mathrm{d}\},
\end{equation}
followed by \(\boldsymbol{x}^{\mathrm{fuse}}_{0,i}=W^\top \boldsymbol{z}^{\mathrm{fuse}}_i\). Importantly, Haar fusion is a post-refinement commitment operator rather than
an additional likelihood term. The frequency-limited likelihood in
Eq.~\eqref{eq:fg_loss} determines the Langevin gradient in Eq.~\eqref{ula_update}
and produces \(\boldsymbol{x}^{\mathrm{ref}}_{0,i}\). Haar fusion is then applied only after
this refinement step to decide which refined components are committed. Thus, it does not alter the likelihood gradient; instead, it
prevents weakly constrained detail directions from being committed before the
corresponding measurement frequencies are exposed. We commit coarse corrections aggressively by setting \(w_{i,\mathrm{c}}=1\), while adopting refined details progressively by a gated schedule
\begin{equation}\label{eq:detail_gate}
w_{i,\mathrm{d}} = 
\underbrace{\Big(d_s+(d_e-d_s)\rho_i^{\gamma}\Big)}_{\text{unlock}(i)}
\cdot 
\underbrace{(1-\lambda_i)}_{\text{gated schedule}},
\qquad
\rho_i=\frac{N-i}{N-1}.
\end{equation}
Here, \((1-\lambda_i)\) uses the exposure schedule as a clock for detail
commitment. It plays different roles in the two modules. In
Eq.~\eqref{eq:fg_loss}, \(\lambda_i\) changes the likelihood gradient before
refinement. In Eq.~\eqref{eq:detail_gate}, it only gates the post-refinement commitment and introduces no
additional likelihood term. This synchronization
prevents details from being committed before the corresponding measurement
frequencies are exposed. The factor \(\text{unlock}(i)\) increases monotonically from \(d_s\) to \(d_e\),
with \(\gamma>0\) controlling its curvature. Since \(0\le d_s\le d_e\le1\)
and \(\lambda_i\in[0,1]\), we have \(w_{i,d}\in[0,1]\).

Finally, we obtain the next state by re-noising the fused estimate according to the noise schedule
\begin{equation}
\boldsymbol{x}_{\tau_{i-1}}
\sim 
\mathcal{N}\!\left(
\boldsymbol{x}^{\mathrm{fuse}}_{0,i},
\sigma_{i-1}^2 \boldsymbol{I}
\right).
\end{equation}

\section{Experiments}\label{sec:exp}
\begin{figure*}[!t]
    \centering
    \includegraphics[width=\linewidth]{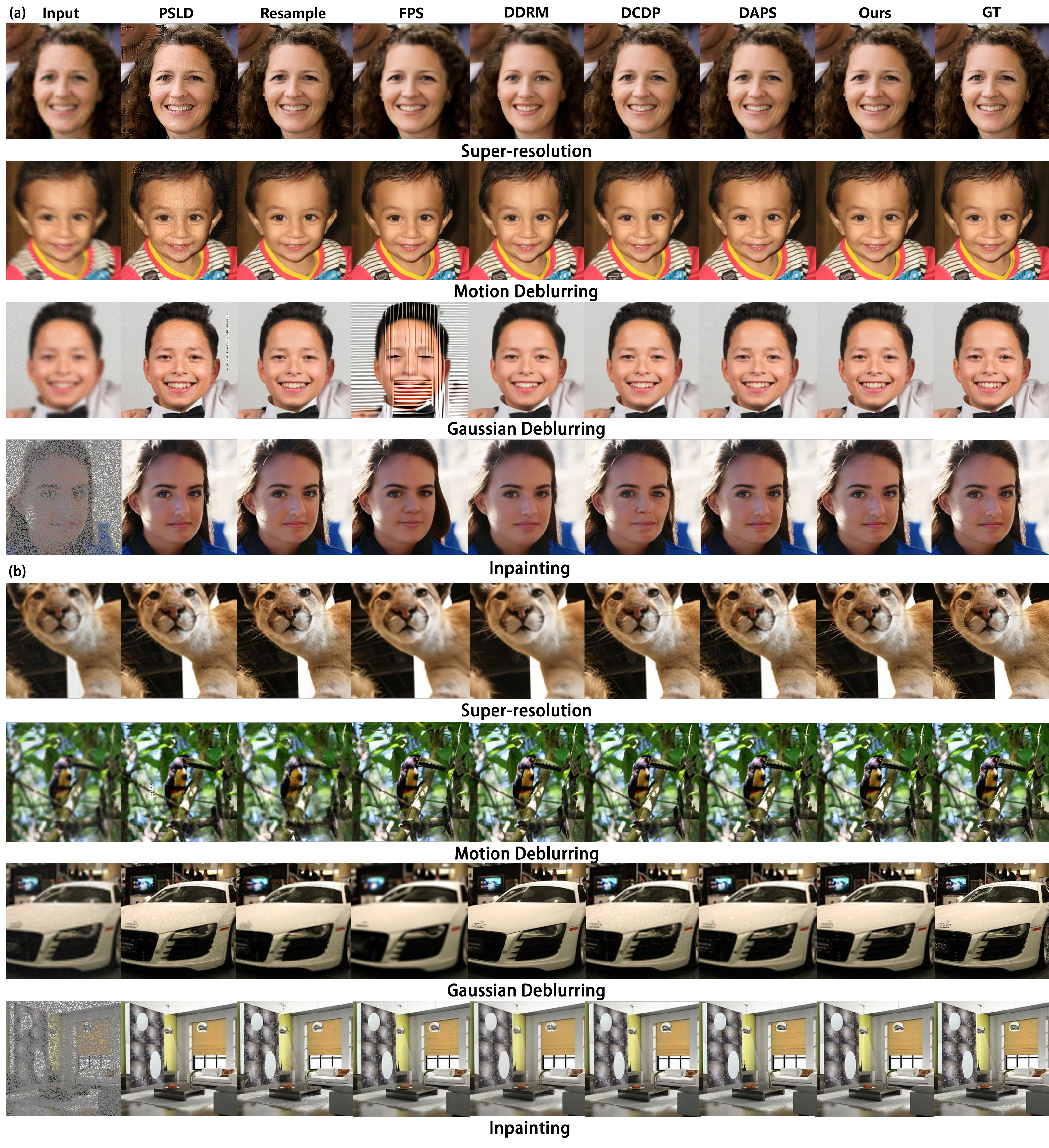}
    \vspace{-15pt}
    \caption{\textbf{Qualitative comparisons on FFHQ and ImageNet.}
The first four rows show FFHQ results, and the last four rows show ImageNet results. Our method better preserves structure and details while suppressing ringing and spurious high-frequency artifacts under challenging degradations.}
    \label{fig_main}
    \vspace{-7mm}
\end{figure*}

\subsection{Experimental Setup}\label{sec:set_up}
We use the pretrained FFHQ diffusion model from~\cite{dps} and the pretrained ImageNet diffusion model from~\cite{diffusion_beat}. We evaluate four representative linear inverse problems: \(4\times\) bicubic super-resolution, motion deblurring, Gaussian deblurring, and random inpainting. Following~\cite{dps,daps}, we synthesize degraded observations using \(61\times61\) blur kernels for deblurring and 70\% random masks for inpainting. All experiments target \(256\times256\) outputs and are evaluated by PSNR, SSIM, and LPIPS. We set the measurement noise level to \(\sigma_y=0.05\) in Eq.~\ref{inv_define} and, unless otherwise specified, evaluate on 100 validation images from each dataset. Additional implementation details and hyperparameters are provided in Appendix~\ref{imple}. 

We compare with representative baselines, including posterior-sampling methods such as DAPS~\cite{daps}, FGPS~\cite{freq}, FPS~\cite{fps}, DCDP~\cite{dcdp}, DPS~\cite{dps}, and ReSample~\cite{resample}, as well as diffusion-prior solvers such as DDNM~\cite{ddnm} and DDRM~\cite{ddrm}. We use official implementations whenever available and otherwise follow the settings reported in~\cite{daps}. Across methods, we align the measurement operators, noise levels, output resolution, and metric implementation. Since official sampler configurations of listed baselines differ in score evaluations, inner optimization steps, solvers, and backbones, we report computational cost alongside restoration quality in Tab.~\ref{main_results}. Unless otherwise stated, our method uses a single default hyperparameter
configuration across datasets and degradation types, without per-image or
per-baseline tuning. % Additional experimental results are provided in Appendix~\ref{add_results}, including trajectory visualizations, box-inpainting results, robustness analyses under real-world blur and operator mismatch, Stable-Diffusion-based \(512\times512\) experiments, parameter sensitivity studies, boundary-condition analyses, and additional visualization.

\begin{table*}[!h]
\caption{\textbf{Quantitative results on four linear inverse problems.}
The best and second-best results are marked in $\boldsymbol{bold}$ and $\underline{underlined}$, respectively. Runtime (s/image), peak memory (GB), and diffusion NFE are reported in the rightmost columns.} % We follow the evaluation protocol of \cite{daps} and compare against representative training-free posterior samplers and diffusion-prior solvers. 
\centering
\small
\setlength{\tabcolsep}{2.5pt}
\renewcommand{\arraystretch}{1.15}
\resizebox{\textwidth}{!}{%
\begin{tabular}
{|l|l|rrr|rrr|rrr|rrr|rrr|}
\toprule
\multirow{2}{*}{Dataset} & \multirow{2}{*}{Method}
& \multicolumn{3}{c|}{$4\times$ Super-resolution}
& \multicolumn{3}{c|}{Motion Deblurring}
& \multicolumn{3}{c|}{Gaussian Deblurring}
& \multicolumn{3}{c|}{Inpainting (random)}
& \multicolumn{3}{c|}{Efficiency}
\\
\cmidrule(lr){3-5}
\cmidrule(lr){6-8}
\cmidrule(lr){9-11}
\cmidrule(lr){12-14}
\cmidrule(lr){15-17}
& & PSNR$\uparrow$ & SSIM$\uparrow$ & LPIPS$\downarrow$
  & PSNR$\uparrow$ & SSIM$\uparrow$ & LPIPS$\downarrow$
  & PSNR$\uparrow$ & SSIM$\uparrow$ & LPIPS$\downarrow$
  & PSNR$\uparrow$ & SSIM$\uparrow$ & LPIPS$\downarrow$
  & Time & Mem. & NFE \\
\midrule

\multirow{8}{*}{FFHQ}
& DAPS
& \underline{29.35} & 0.782 & 0.193
& 29.66 & 0.847 & \underline{0.157}
& \underline{29.19} & \underline{0.817} & \textbf{0.165}
& \underline{30.72} & 0.800 & 0.159
& 80 & \underline{2.28} & 1000 \\

& DCDP
& 28.66 & 0.807 & \underline{0.178}
& 25.08 & 0.512 & 0.364
& 27.50 & 0.699 & 0.304
& 30.69 & \underline{0.842} & 0.142
& \textbf{2} & 4.93 & 181 \\

& DDNM
& 28.03 & 0.795 & 0.197
& 31.12 & 0.871 & 0.152
& 28.20 & 0.804 & 0.216
& 29.91 & 0.817 & \underline{0.121}
& \underline{5} & 4.70 & \underline{100} \\

& DDRM
& 26.58 & 0.782 & 0.282
& \underline{31.54} & \underline{0.853} & 0.173
& 24.93 & 0.732 & 0.239
& 28.46 & 0.822 & 0.183
& \underline{5} & 4.77 & \textbf{20} \\

& DPS
& 25.86 & 0.753 & 0.269
& 24.52 & 0.801 & 0.246
& 25.87 & 0.764 & 0.219
& 25.46 & 0.823 & 0.203
& 65 & 9.44 & 1000 \\

& FGPS
& - & - & -
& 21.70 & 0.574 & 0.294
& 22.01 & 0.601 & 0.267
& - & - & -
& 110 & 4.65 & 1000 \\

& FPS
& 28.42 & \underline{0.813} & 0.204
& 27.39 & 0.826 & 0.227
& 26.54 & 0.773 & 0.253
& 28.21 & 0.823 & 0.261
& 100 & 21.56 & 1000 \\

& ReSample
& 23.29 & 0.594 & 0.392
& 27.41 & 0.823 & 0.198
& 26.39 & 0.714 & 0.255
& 29.61 & 0.746 & 0.140
& 780 & 6.77 & 500 \\

\cmidrule(lr){2-17}
& Ours
& \textbf{31.88} & \textbf{0.890} & \textbf{0.090}
& \textbf{36.69} & \textbf{0.940} & \textbf{0.054}
& \textbf{29.99} & \textbf{0.821} & \underline{0.175}
& \textbf{32.86} & \textbf{0.909} & \textbf{0.066}
& 130 & \textbf{2.23} & 1000 \\

\midrule

\multirow{8}{*}{ImageNet}
& DAPS
& \underline{25.47} & 0.637 & \underline{0.294}
& 27.86 & 0.766 & 0.196
& 25.89 & 0.658 & 0.268
& \underline{28.44} & 0.775 & 0.135
& 160 & \underline{4.68} & 1000 \\

& DCDP
& 25.17 & 0.688 & 0.213
& - & - & -
& 26.08 & \textbf{0.727} & \textbf{0.195}
& 20.55 & \underline{0.864} & \underline{0.101}
& \textbf{5} & 12.55 & 181 \\

& DDNM
& 23.61 & 0.658 & 0.428
& 28.41 & 0.791 & 0.230
& \textbf{28.06} & 0.703 & 0.278
& 23.64 & 0.821 & 0.107
& \underline{15} & 6.37 & \underline{100} \\

& DDRM
& 24.13 & 0.633 & 0.304
& 28.48 & 0.784 & 0.182
& \underline{26.33} & \underline{0.713} & \underline{0.243}
& 26.34 & 0.764 & 0.181
& \textbf{5} & 6.44 & \textbf{20} \\

& DPS
& 21.13 & 0.489 & 0.361
& 18.96 & 0.629 & 0.423
& 20.31 & 0.598 & 0.397
& 23.52 & 0.745 & 0.297
& 200 & 9.44 & 1000 \\

& FGPS
& - & - & -
& 21.87 & 0.560 & 0.288
& 23.53 & 0.583 & 0.305
& - & - & -
& 375 & 11.08 & 1000 \\

& FPS
& 24.82 & \underline{0.703} & 0.313
& 24.52 & 0.647 & 0.326
& 23.91 & 0.601 & 0.387
& 24.52 & 0.701 & 0.316
& 150$^{*}$ & 57.66 & 1000 \\

& ReSample
& 22.61 & 0.576 & 0.370
& 26.94 & 0.738 & 0.227
& 25.97 & 0.703 & 0.254
& 27.50 & 0.756 & 0.143
& 780 & 6.77 & 500 \\

\cmidrule(lr){2-17}
& Ours
& \textbf{27.67} & \textbf{0.765} & \textbf{0.170}
& \textbf{34.70} & \textbf{0.935} & \underline{0.155}
& 26.02 & 0.672 & 0.283
& \textbf{28.60} & \textbf{0.887} & \textbf{0.042}
& 200 & \textbf{4.56} & 1000 \\

\bottomrule
\end{tabular}%
}
\begin{flushleft}
\footnotesize
$^{*}$Measured on an RTX Pro 6000 Blackwell due to A6000 memory limits, a GPU approximately \(2.5\times\) faster than the A6000.
\end{flushleft}
\vspace{-7mm}
\label{main_results}
\end{table*}

\subsection{Main Results}\label{sec_main}
We report quantitative results across four restoration tasks on both FFHQ and ImageNet in Tab.~\ref{main_results}. Overall, our method achieves comparable or superior performance across datasets and degradation types, with particularly large gains on \(4\times\) super-resolution, motion deblurring, and random inpainting. Although several baselines are competitive on ImageNet Gaussian deblurring, their advantages do not consistently transfer to other degradations, whereas our method maintains strong overall performance across diverse linear inverse problems. The ``-'' entries denote settings that are not natively supported by official implementations, and are omitted to avoid unreliable re-implementations. For the listed baselines, we adopt the default settings specified in their corresponding papers and open-source implementations, with a brief summary provided in Appendix~\ref{imple}.

Additional results are provided in Appendix~\ref{add_results}. Appendix~\ref{box} reports box-inpainting results, where our method remains competitive and achieves the best LPIPS on both FFHQ and ImageNet. Tab.~\ref{tab:same_budget_motion_deblur} reports a low-runtime setting, where a reduced-budget variant with \(N=9,T=90\) runs in approximately \(2\) seconds per image and outperforms the runtime-matched baseline across all metrics. Robustness analyses, Stable-Diffusion-based \(512\times512\) restoration experiments, parameter sensitivity studies, and additional qualitative comparisons are also included in Appendix~\ref{add_results}.

Complementing these quantitative results, we provide qualitative comparisons in Fig.~\ref{fig_main} using representative samples from both datasets. The FFHQ examples cover diverse facial attributes, expressions, and accessories, while the ImageNet examples include complex backgrounds, rich textures, and detailed object structures. Consistent with Tab.~\ref{main_results}, our method better preserves global
structure and object identity, restores sharper local textures, and suppresses
ringing artifacts and spurious high-frequency patterns under challenging
degradations.

To interpret the progressive frequency-exposure mechanism, we provide additional trajectory visualizations in Appendix~\ref{trajectory}, showing that the proposed continuation follows the intended coarse-to-fine behavior: early updates mainly stabilize low-frequency structure, while later updates progressively introduce higher-frequency details as the effective bandwidth expands. This supports the role of noise-conditioned frequency exposure in avoiding premature high-frequency commitment.

\begin{figure}[!h]
    \centering
    \vspace{-2mm}
    \includegraphics[width=0.65\linewidth]{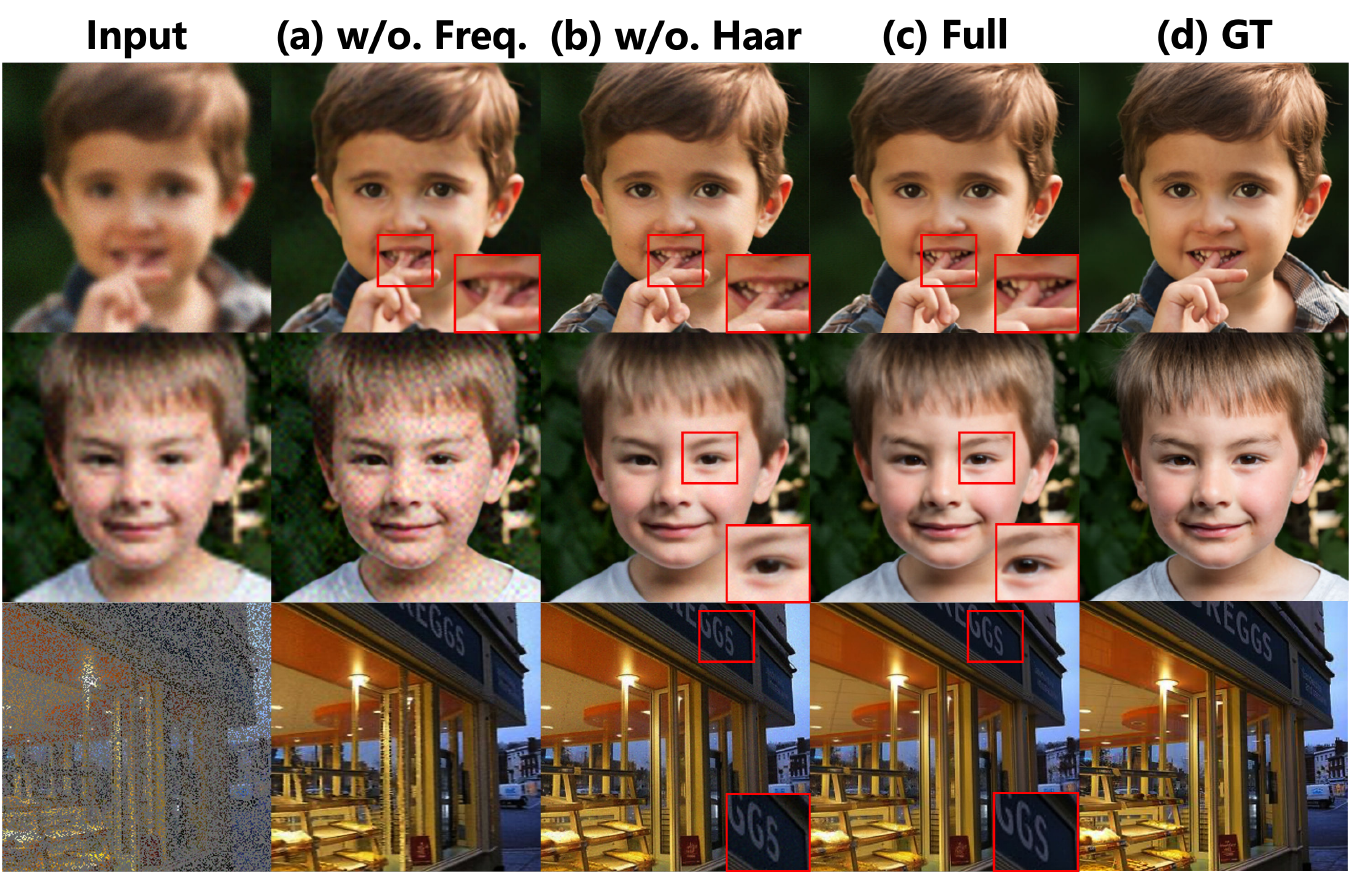}
    \caption{\textbf{Visual ablation of frequency continuation and Haar commitment.}
We visualize two ablation variants: column (a) replaces noise-conditioned frequency continuation with full-band guidance, and column (b) removes the Haar fusion step in Eq.~\ref{haar_fusion}. The first two rows are evaluated on FFHQ, and the third row is evaluated on ImageNet.}
    \label{fig_abl}
    \vspace{-4mm}
\end{figure}

\subsection{Ablation Studies}\label{sec:abl}
We ablate the proposed posterior sampling framework to isolate the contribution of each design choice and to verify the mechanistic claims in Sec.~\ref{log_density} and Sec.~\ref{sec_har}. Specifically, we examine frequency continuation, Haar commitment, and alternatives to the Haar-based fusion design. Parameter sensitivity is reported in Appendix~\ref{sensitivity}. % All ablation variants are evaluated using the same settings as the full method.

\textbf{Frequency continuation and Haar commitment.} We first ablate the two core components, as visualized in Fig.~\ref{fig_abl}. We remove frequency continuation by replacing the band-limited likelihood with full-band measurement enforcement, where \(\omega(\sigma_i)\) is set to a large fixed value and \(\lambda_i=0\) in Eq.~\eqref{eq:fg_loss}. We remove Haar commitment by directly outputting \(\boldsymbol{x}_0^{(T)}\) from Eq.~\ref{ula_update}, without the coarse-detail fusion in Eq.~\ref{haar_fusion}. Columns (a) and (b) in Fig.~\ref{fig_abl} visualize these two removals, respectively. Full-band guidance tends to introduce drift and operator-sensitive artifacts at high noise, consistent with Theorem~\ref{thm:unstable} and Corollary~\ref{sensitive}, while removing Haar fusion degrades detail quality, matching Lemma~\ref{lp_insensitive_detail} and Corollary~\ref{detail_prior_dominated}. These variants separate the two roles: frequency continuation controls trusted measurement directions, whereas Haar commitment controls reliable detail adoption. These observations are further supported by the quantitative results in Tab.~\ref{tab:abl}.

\begin{table*}[!h]
\caption{\textbf{Quantitative ablation of frequency continuation and Haar commitment.}
We compare the full model with variants that remove frequency continuation or Haar commitment on FFHQ.}
    \centering
    
    \resizebox{\textwidth}{!}{%
    \begin{tabular}{|l|ccc|ccc|ccc|ccc|}
    \toprule
    \multirow{3}{*}{\centering {Method}}
    & \multicolumn{3}{c|}{{Super-resolution}} 
    & \multicolumn{3}{c|}{{Motion Deblurring}} 
    & \multicolumn{3}{c|}{{Gaussian Deblurring}} 
    & \multicolumn{3}{c|}{{Inpainting}} \\ 
    \cmidrule(lr){2-4} \cmidrule(lr){5-7} \cmidrule(lr){8-10} \cmidrule(lr){11-13}
    & {PSNR}$\uparrow$  
    & {SSIM}$\uparrow$ & {LPIPS}$\downarrow$ 
    & {PSNR}$\uparrow$  
    & {SSIM}$\uparrow$ & {LPIPS}$\downarrow$
    & {PSNR}$\uparrow$  
    & {SSIM}$\uparrow$ & {LPIPS}$\downarrow$
    & {PSNR}$\uparrow$  
    & {SSIM}$\uparrow$ & {LPIPS}$\downarrow$ \\
    \midrule 
    w/o. Freq
    & 27.17 & 0.655 & 0.342 
    & 29.46 & 0.723 & 0.204
    & 24.15 & \underline{0.774} & \textbf{0.160}
    & 24.67 & 0.784 & 0.163  \\
    
    w/o. Haar 
    & \underline{29.60} & \underline{0.804} & \underline{0.186} 
    & \underline{32.78} & \underline{0.879} & \underline{0.132}
    & \underline{24.54} & 0.745 & 0.177
    & \underline{29.58} & \underline{0.841} & \underline{0.140} \\
    \midrule
    
    Full
    & \textbf{31.88} & \textbf{0.890} & \textbf{0.090} 
    & \textbf{36.69} & \textbf{0.940} & \textbf{0.054}
    & \textbf{29.99} & \textbf{0.821} & \underline{0.175}
    & \textbf{32.86} & \textbf{0.909} & \textbf{0.066} \\
    \bottomrule
    \end{tabular}
    }
    \label{tab:abl}
    \vspace{-2mm}
\end{table*}

\textbf{Haar basis and frequency-reweighting alternatives.} We further examine the design choice of Haar fusion and whether it can be replaced by simpler frequency-domain reweighting. We use motion deblurring as a representative ill-conditioned setting because it is particularly sensitive to high-frequency artifacts and operator-induced drift. Tab.~\ref{tab:wavelet_freq_ablation} shows that Haar achieves the best PSNR and SSIM on both datasets and the best overall trade-off among the tested Daubechies-2 and Biorthogonal 4.4 bases in this setting, with qualitative comparisons provided in Appendix~\ref{add_qual} and Fig.~\ref{fig_wavelet_basis}. The table also includes a frequency-dependent measurement-reweighting alternative without Haar fusion, which performs substantially worse than the full model. This suggests that pre-update spectral weighting alone is insufficient to replace post-refinement detail commitment.

\begin{table*}[!h]
\caption{\textbf{Ablation studies of wavelet bases and frequency-reweighted updates} on motion deblurring. The first block compares different wavelet bases for the commitment module, while the second block tests replacing Haar commitment with frequency-dependent measurement-consistency updates.}
\centering
\small
\setlength{\tabcolsep}{3.5pt}
\renewcommand{\arraystretch}{1.15}
\resizebox{0.8\textwidth}{!}{%
\begin{tabular}{|l|l|rrr|rrr|}
\toprule
\multirow{2}{*}{Ablation} & \multirow{2}{*}{Method}
& \multicolumn{3}{c|}{FFHQ}
& \multicolumn{3}{c|}{ImageNet}
\\
\cmidrule(lr){3-5}
\cmidrule(lr){6-8}
& & PSNR$\uparrow$ & SSIM$\uparrow$ & LPIPS$\downarrow$
  & PSNR$\uparrow$ & SSIM$\uparrow$ & LPIPS$\downarrow$ \\
\midrule

\multirow{3}{*}{Wavelet basis}
& Daubechies-2
& 26.82 & \underline{0.890} & \underline{0.119}
& 24.26 & \underline{0.856} & \textbf{0.142} \\

& Biorthogonal 4.4
& \underline{30.59} & 0.876 & 0.151
& \underline{27.64} & 0.811 & 0.186 \\

& Ours (Haar)
& \textbf{36.69} & \textbf{0.940} & \textbf{0.054}
& \textbf{34.70} & \textbf{0.935} & \underline{0.155} \\

\midrule

\multirow{2}{*}{Freq. reweighting}
& Freq.-weighted update w/o Haar
& 14.88 & 0.179 & 0.583
& 15.02 & 0.244 & 0.482 \\

& Ours (Haar + Freq.)
& \textbf{36.69} & \textbf{0.940} & \textbf{0.054}
& \textbf{34.70} & \textbf{0.935} & \textbf{0.155} \\

\bottomrule
\end{tabular}%
}
\label{tab:wavelet_freq_ablation}
\vspace{-3mm}
\end{table*}

\section{Conclusion}

We presented a posterior continuation framework for diffusion inverse problems
based on noise-conditioned frequency exposure. The central principle is to expose
measurement information according to the instantaneous diffusion noise level,
rather than enforcing full-band consistency uniformly throughout sampling. This
defines a sequence of intermediate posteriors that emphasize reliable frequency
bands at high noise and gradually return to full-band measurement consistency as
the noise decreases. To realize this principle, we introduced a training-free
sampler that combines diffusion prediction, frequency-limited likelihood
refinement, and a post-refinement Haar-domain commitment rule. The Haar
commitment synchronizes the adoption of refined details with the exposure
schedule, committing reliable coarse corrections early while deferring weakly
identifiable details. Experiments on super-resolution, inpainting, and Gaussian
and motion deblurring demonstrate strong restoration performance across datasets
and operators, with particularly large gains on ill-conditioned motion blur.
Ablations further validate the coupled design of frequency exposure and Haar
commitment, showing that stable posterior sampling benefits from both
noise-conditioned measurement guidance and schedule-synchronized detail
commitment.

\bibliographystyle{plainnat}
\bibliography{example_paper}

%%%%%%%%%%%%%%%%%%%%%%%%%%%%%%%%%%%%%%%%%%%%%%%%%%%%%%%%%%%%

\appendix
\newpage
\appendix
\onecolumn
\section{Appendix}
\subsection{Related Work}\label{related}
Recent posterior sampling methods for inverse problems increasingly adopt diffusion and score based generative models as expressive implicit priors. Foundational results connect reverse time diffusion to score estimation and denoising, including Reverse time diffusion equation models~\cite{rela_0} and A connection between score matching and denoising autoencoders~\cite{rela_1}. Modern diffusion backbones include Denoising diffusion probabilistic models~\cite{ddpm} and implicit sampling via Denoising diffusion implicit models~\cite{ddim}, with a continuous time formulation in Score based generative modeling through stochastic differential equations~\cite{scoresde}. Sampling quality and efficiency further benefit from EDM~\cite{score3}, fast ODE solvers such as DPM-Solver~\cite{dpmsolver}, and guidance mechanisms such as Classifier-free diffusion guidance~\cite{cfg}. These advances make diffusion priors practical plug in components for Bayesian posterior inference under ill posed degradations.

Prior to diffusion priors, inverse problems were largely addressed by explicit regularization and convex recovery, exemplified by Stable signal recovery from incomplete and inaccurate measurements~\cite{rela_2} and Exact matrix completion via convex optimization~\cite{rela_3}. Learned generative priors then enabled model based recovery with neural generators, including Compressed sensing using generative models~\cite{rela_4} and high fidelity GAN backbones such as A style based generator architecture for generative adversarial networks~\cite{ffhq}. Plug and play optimization further linked model based solvers with learned denoisers, with convergence oriented instantiations such as Plug and play ADMM for image restoration Fixed point convergence and applications~\cite{rela_6} grounded in ADMM~\cite{rela_7}. Related directions include algorithm unrolling~\cite{rela_8} and posterior sampling toolkits beyond MAP, such as SGLD~\cite{lgvd}, HMC~\cite{rela_10}, and Randomize then optimize~\cite{rela_11}. This trajectory motivates diffusion priors as a stronger learned prior that still interfaces naturally with likelihood constrained inference.

Building on these foundations, diffusion based posterior sampling directly targets Bayesian inference with diffusion priors, including DPS~\cite{dps}. Sequential viewpoints formalize diffusion inverse sampling through filtering style inference~\cite{fps}, while other efforts improve likelihood incorporation and posterior score surrogates, including Monte Carlo guided variants~\cite{rela_12} and Tweedie moment--based projected dynamics~\cite{rela_13}. To address local correction induced by small reverse time steps, decoupling strategies such as DAPS~\cite{daps} enable larger and more non local posterior moves. In parallel, projection and data consistency based diffusion restorers remain influential, including DDRM~\cite{ddrm} and operator aware null space formulations such as DDNM~\cite{ddnm}. For conditional editing and inpainting, representative mechanisms include RePaint~\cite{rela_14} and ILVR~\cite{ilvr}.

Beyond standard degradations, robustness and distribution shift are increasingly emphasized. Steerable conditional diffusion explicitly targets OOD inverse problems~\cite{rela_15}, aligning with broader robustness benchmarks~\cite{rela_16}. Equilibrium formulations provide an alternative lens via deep equilibrium architectures~\cite{rela_17}. Recent surveys consolidate diffusion inverse methods and taxonomies~\cite{rela_18}.

\subsection{Pseudo-code of The Algorithm}\label{add_results}
We provide the corresponding Pseudo-code for our method in Alg.~\ref{alg:main}.
\begin{algorithm}[t]
\caption{Noise-Conditioned Posterior Continuation}
\label{alg:main}
\begin{algorithmic}[1]
\STATE \textbf{Require:} Diffusion model $\varepsilon_\theta$, clean-estimate solver $f$, measurement $(\boldsymbol{y},\mathcal{A})$, 
number of outer sampling steps $N$, sampling times $\{\tau_i\}_{i=0}^{N}$, noise levels $\{\sigma_i\}_{i=0}^{N}$ with $\sigma_i=\sigma(\tau_i)$ and $\sigma_N>\cdots>\sigma_0=0$, 
exposure schedules $\omega(\sigma)$ and $\lambda(\sigma)$, temperature schedule $\{\beta_i\}_{i=1}^{N}$, 
Langevin step sizes $\{\eta_{i,\ell}\}$, Langevin steps $T$, Haar transform $W$, Haar gate parameters $d_s,d_e,\gamma$.

\STATE Initialize $\boldsymbol{x}_{\tau_N}\sim\mathcal{N}(\boldsymbol{0},\sigma_N^2\boldsymbol{I})$.

\FOR{$i=N$ to $1$}
    \STATE $\boldsymbol{\hat{x}}_{0,i} \leftarrow f(\boldsymbol{x}_{\tau_i},\varepsilon_\theta,\sigma_i)$
    \STATE \emph{// Diffusion predictor. Estimate the clean image at the current noise level.}

    \STATE $\omega_i \leftarrow \omega(\sigma_i)$, $\lambda_i \leftarrow \lambda(\sigma_i)$
    \STATE $\mathcal{L}_i(\boldsymbol{x};\boldsymbol{y}) \leftarrow 
    (1-\lambda_i)\|\mathcal{A}(\boldsymbol{x})-\boldsymbol{y}\|_2^2
    +\lambda_i\|P^F_{\omega_i}(\mathcal{A}(\boldsymbol{x})-\boldsymbol{y})\|_2^2$

    \STATE $\pi_i(\boldsymbol{x}_0) \propto 
    \exp\left(-\frac{1}{2\beta_i^2}\mathcal{L}_i(\boldsymbol{x}_0;\boldsymbol{y})\right)
    \mathcal{N}(\boldsymbol{x}_0;\boldsymbol{\hat{x}}_{0,i},\sigma_i^2\boldsymbol{I})$
    \STATE \emph{// Frequency exposure. Unreliable high-frequency residuals are attenuated.}

    \STATE $\boldsymbol{x}_{0,i}^{(0)} \leftarrow \boldsymbol{\hat{x}}_{0,i}$
    \FOR{$\ell=0$ to $T-1$}
        \STATE Sample $\boldsymbol{\xi}_{i,\ell}\sim\mathcal{N}(\boldsymbol{0},\boldsymbol{I})$
        \STATE $\boldsymbol{x}_{0,i}^{(\ell+1)} \leftarrow 
        \boldsymbol{x}_{0,i}^{(\ell)}
        +\eta_{i,\ell}\nabla_{\boldsymbol{x}_0}\log\pi_i(\boldsymbol{x}_{0,i}^{(\ell)})
        +\sqrt{2\eta_{i,\ell}}\boldsymbol{\xi}_{i,\ell}$
    \ENDFOR
    \STATE $\boldsymbol{x}_{0,i}^{\mathrm{ref}}\leftarrow \boldsymbol{x}_{0,i}^{(T)}$
    \STATE \emph{// Langevin refinement under the exposed likelihood.}

    \STATE $\boldsymbol{\hat{z}}_i \leftarrow W\boldsymbol{\hat{x}}_{0,i}$, 
    $\boldsymbol{z}_i^{\mathrm{ref}}\leftarrow W\boldsymbol{x}_{0,i}^{\mathrm{ref}}$
    \STATE $\rho_i\leftarrow (N-i)/(N-1)$
    \STATE $w_{i,\mathrm{c}}\leftarrow 1$, 
    $w_{i,\mathrm{d}}\leftarrow \left[d_s+(d_e-d_s)\rho_i^\gamma\right](1-\lambda_i)$
    \STATE $\boldsymbol{z}_{i,b}^{\mathrm{fuse}}\leftarrow 
    (1-w_{i,b})\boldsymbol{\hat{z}}_{i,b}+w_{i,b}\boldsymbol{z}_{i,b}^{\mathrm{ref}},
    \quad b\in\{\mathrm{c},\mathrm{d}\}$
    \STATE $\boldsymbol{x}_{0,i}^{\mathrm{fuse}}\leftarrow W^\top\boldsymbol{z}_i^{\mathrm{fuse}}$
    \STATE \emph{// Post-refinement Haar commitment. Coarse corrections are committed, details are gated.}

    \STATE $\boldsymbol{x}_{\tau_{i-1}}\sim
    \mathcal{N}(\boldsymbol{x}_{0,i}^{\mathrm{fuse}},\sigma_{i-1}^2\boldsymbol{I})$
    \STATE \emph{// Re-noise the committed estimate for the next reverse step.}
\ENDFOR

\STATE \textbf{return} $\boldsymbol{x}_{0,1}^{\mathrm{fuse}}$
\end{algorithmic}
\end{algorithm}

% Experiments that report how quantitative metrics vary with the sampling steps for Gaussian deblurring are shown in Fig.~\ref{fig_app_1}.

% More generated results on FFHQ $256\times 256$ and ImageNet $256\times 256$ are shown in Fig.~\ref{fig_app_2} and Fig.~\ref{fig_app_3} repectively.

\subsection{Additional Experimental Results}\label{add_results}

\subsubsection{Trajectory Visualization of Noise-Conditioned Frequency Exposure}\label{trajectory}
To better understand the sampling dynamics, we visualize the intermediate trajectory induced by the noise-conditioned frequency exposure schedule. Fig.~\ref{fig_exp_1}(a) shows the masked Fourier magnitude maps at different stages, where the exposed bandwidth gradually expands as the diffusion noise decreases. Fig.~\ref{fig_exp_1}(b) presents the corresponding spatial-domain reconstructions. At early stages, only low-frequency measurement directions are exposed, so the reconstruction mainly captures coarse layout, silhouette, and global illumination. As the cutoff frequency increases, higher-frequency components are progressively introduced, leading to sharper edges, textures, and fine-scale details. Fig.~\ref{fig_exp_1}(c) further visualizes the sampling trajectory and shows that the reconstruction evolves consistently with the frequency-exposure schedule.

\begin{figure}[!h]
    \centering
    \includegraphics[width=0.75\linewidth]{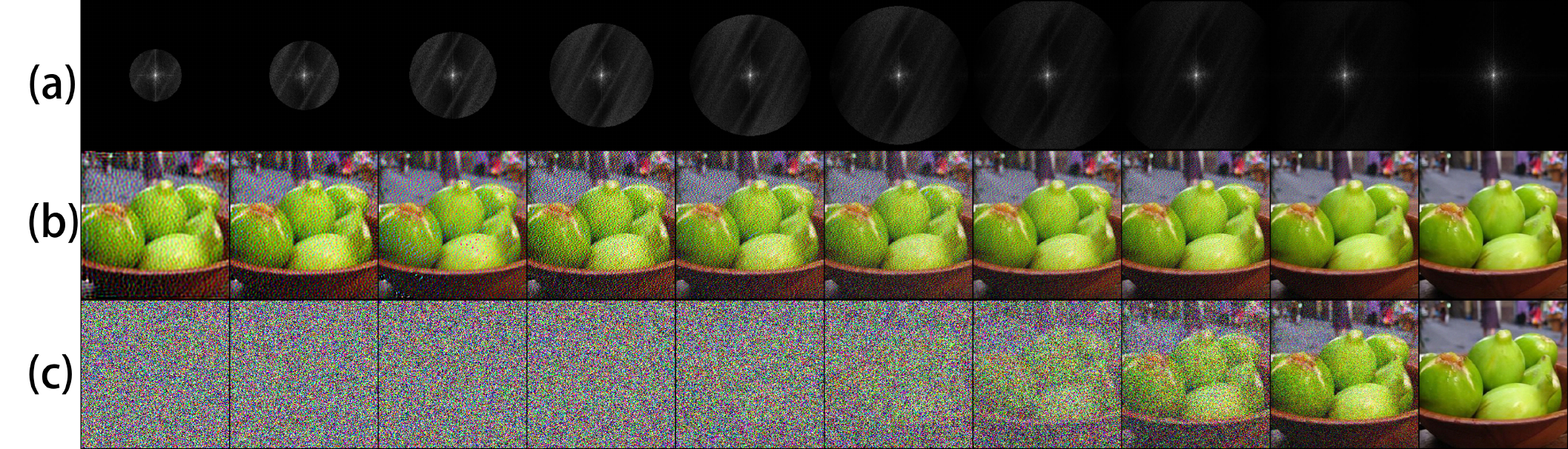}
    \caption{\textbf{Example visualization of noise-conditioned frequency exposure on ImageNet.}
    (a) Masked Fourier magnitude maps induced by the cutoff-frequency schedule.
    (b) Spatial-domain reconstructions along the progressive frequency-exposure trajectory.
    (c) Corresponding sampling trajectory. }
    \label{fig_exp_1}
\end{figure}

We also track quantitative metrics along the sampling process in Fig.~\ref{fig_exp_2} and Fig.~\ref{fig_app_1}. Fig.~\ref{fig_exp_2} reports the metric evolution for FFHQ motion deblurring, while Fig.~\ref{fig_app_1} shows the corresponding results for FFHQ Gaussian deblurring. In both figures, the orange curves report metrics computed from the intermediate estimates produced by the refinement update in Eq.~\ref{ula_update}, while the red/green curves report metrics computed from the denoised estimates after each solver step. Overall, PSNR and SSIM improve while LPIPS decreases along the trajectory, indicating that the proposed sampler progressively enhances both measurement consistency and perceptual quality. These results further support the intended coarse-to-fine behavior of our continuation process: early updates stabilize reliable low-frequency structure, while later updates commit increasingly fine details as they become more identifiable.

\begin{figure*}[!h]
    \centering
    \includegraphics[width=\linewidth]{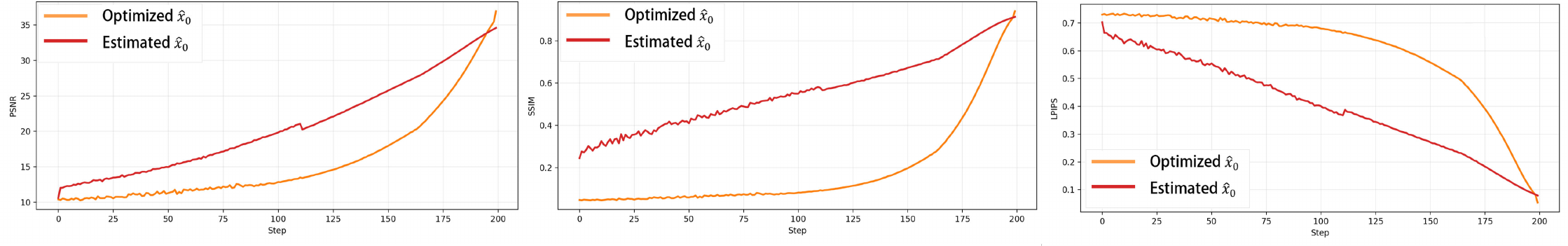}
    \caption{\textbf{Metric evolution during sampling for FFHQ motion deblurring.}
    The orange curves report metrics computed from the intermediate estimates produced by the refinement update in Eq.~\ref{ula_update}, while the red curves report metrics computed from the denoised estimates after each solver step.}
    \label{fig_exp_2}
\end{figure*}

\begin{figure*}[!h]
    \centering
    \includegraphics[width=\linewidth]{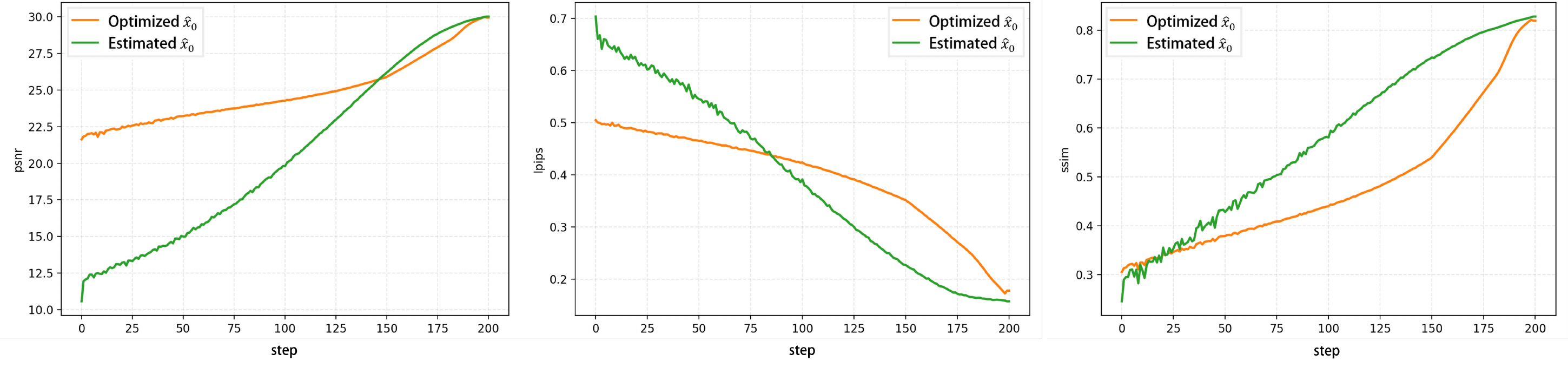}
    \vspace{-15pt}
    \caption{\textbf{Metric evolution during sampling for FFHQ Gaussian deblurring.}
    The orange curves report metrics computed from the intermediate estimates produced by the refinement update in Eq.~\ref{ula_update}, while the green curves report metrics computed from the denoised estimates after each solver step.}
    \label{fig_app_1}
\end{figure*}

\subsubsection{Additional Results on Box Inpainting}\label{box}

We further evaluate our method on box inpainting, where the missing region is spatially contiguous. As shown in Tab.~\ref{tab:box_inpainting}, our method remains competitive and achieves the best LPIPS on both FFHQ and ImageNet. It also obtains the best SSIM on FFHQ and stays close to the strongest baselines in PSNR and SSIM on ImageNet. The gains are smaller than in deblurring or super-resolution because the box mask only constrains visible context, leaving the missing region unobserved. Since masking mixes frequencies in the Fourier domain, the low-frequency residual mainly enforces visible-region consistency and boundary compatibility rather than revealing the true masked structure. Thus, Haar fusion cannot create missing semantics, and it primarily prevents premature commitment to unreliable details, as qualitatively illustrated in Fig.~\ref{fig:box_inpainting_vis}.

\begin{table*}[!h]
\caption{\textbf{Quantitative results on box inpainting.}
We report results on FFHQ and ImageNet at \(256\times256\). The best and second-best results are marked in \(\boldsymbol{bold}\) and \(\underline{underlined}\), respectively.}
\centering
\small
\setlength{\tabcolsep}{4.5pt}
\renewcommand{\arraystretch}{1.15}
\resizebox{0.7\textwidth}{!}{%
\begin{tabular}{|l|rrr|rrr|}
\toprule
\multirow{2}{*}{Method}
& \multicolumn{3}{c|}{FFHQ}
& \multicolumn{3}{c|}{ImageNet} \\
\cmidrule(lr){2-4}
\cmidrule(lr){5-7}
& PSNR$\uparrow$ & SSIM$\uparrow$ & LPIPS$\downarrow$
& PSNR$\uparrow$ & SSIM$\uparrow$ & LPIPS$\downarrow$ \\
\midrule
DAPS
& 24.07 & 0.814 & 0.133
& 21.43 & 0.725 & 0.214 \\

DCDP
& 23.89 & 0.760 & 0.163
& - & - & - \\

DDNM
& 24.47 & \underline{0.837} & 0.235
& 21.64 & \textbf{0.748} & 0.319 \\

DDRM
& 22.26 & 0.801 & 0.207
& 18.63 & 0.733 & 0.254 \\

DPS
& 22.51 & 0.792 & 0.209
& 18.94 & 0.722 & 0.257 \\

FPS
& \textbf{24.86} & 0.823 & 0.146
& \textbf{22.16} & 0.726 & \underline{0.208} \\

ReSample
& 20.06 & 0.749 & 0.184
& 18.29 & 0.631 & 0.262 \\

% LatentDAPS
% & 23.99 & 0.802 & 0.194
% & 17.19 & 0.624 & 0.340 \\

% PSLD
% & 24.22 & 0.813 & 0.158
% & 20.10 & 0.694 & 0.465 \\

Ours
& \underline{24.57} & \textbf{0.838} & \textbf{0.126}
& \underline{21.58} & \underline{0.739} & \textbf{0.207} \\
\bottomrule
\end{tabular}%
}
\label{tab:box_inpainting}
\end{table*}

\begin{figure}[t]
    \centering
    \begin{minipage}{0.49\linewidth}
        \centering
        \includegraphics[width=\linewidth]{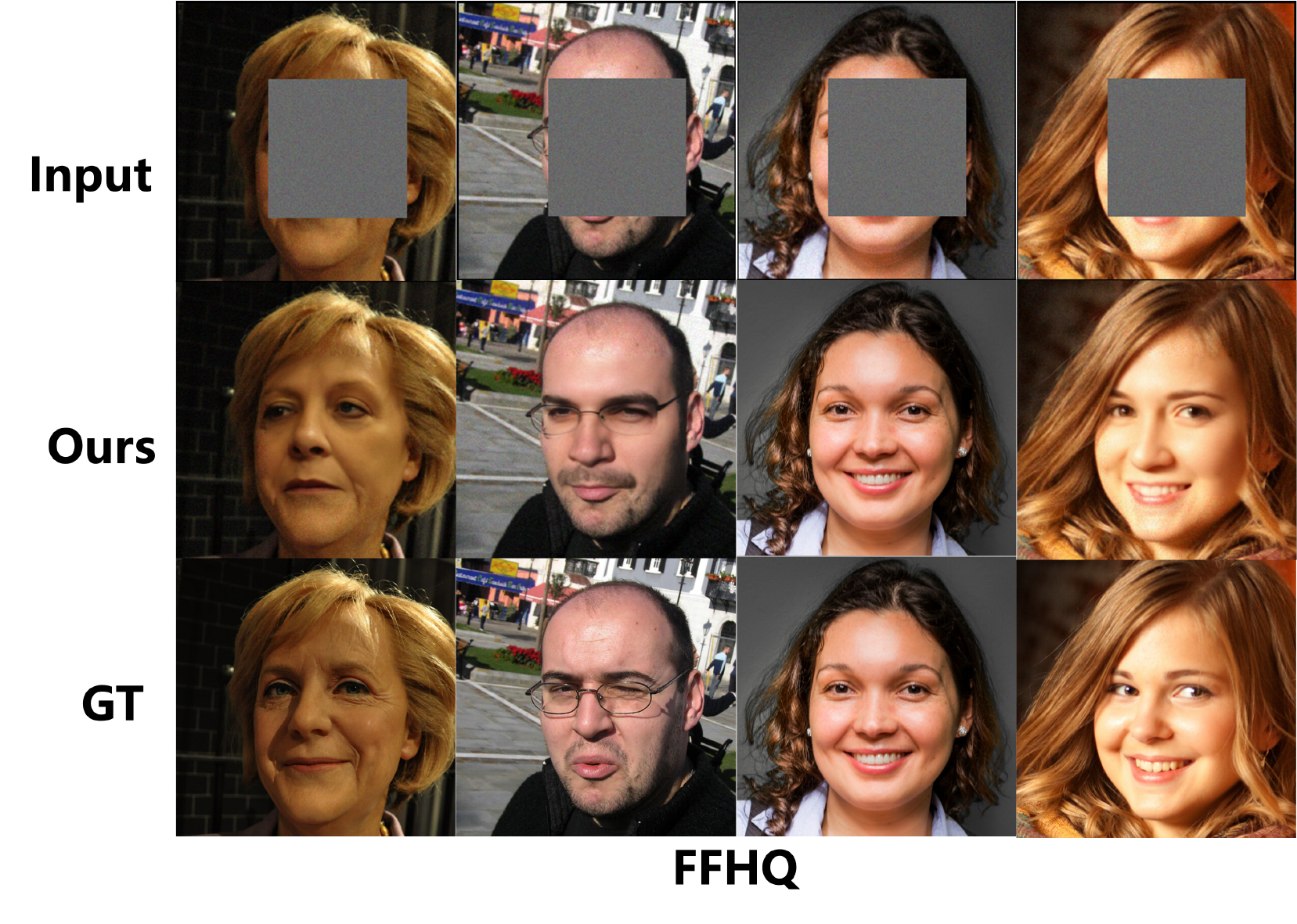}
    \end{minipage}
    \hfill
    \begin{minipage}{0.49\linewidth}
        \centering
        \includegraphics[width=\linewidth]{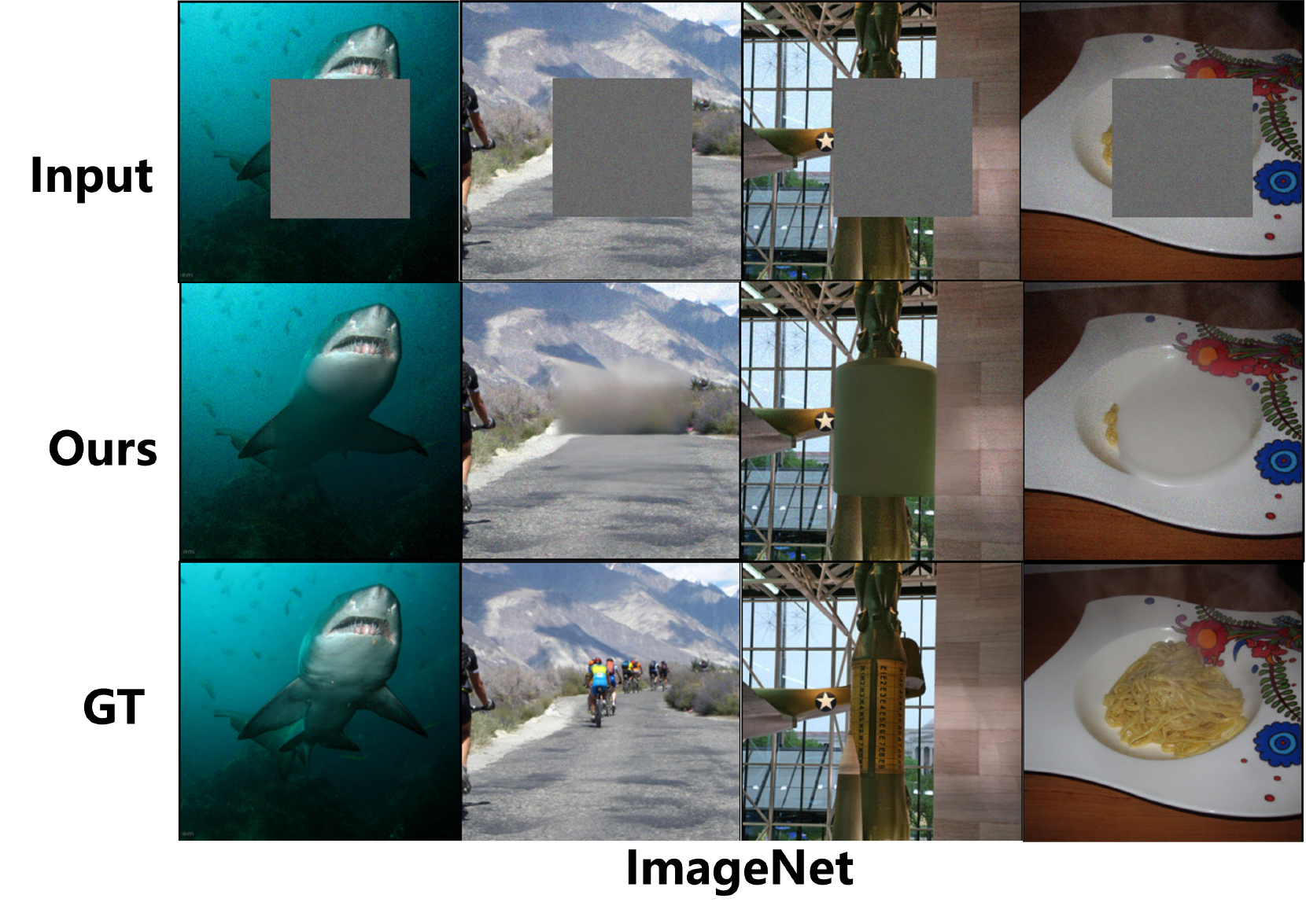}
    \end{minipage}
    \caption{\textbf{Qualitative examples of box inpainting.} Box masks leave the missing region unobserved, making semantic recovery more challenging than deblurring or super-resolution.}
    \label{fig:box_inpainting_vis}
\end{figure}

\subsubsection{Robustness, Scalability, and Boundary Conditions}

We further evaluate the robustness, scalability, and boundary conditions of our design under real-world blur, \(512\times512\) latent-space restoration, and operator mismatch. These experiments clarify when the proposed continuation remains effective and where its assumptions become limiting. Overall, our method is most effective when the assumed likelihood is approximately correct and reliable measurements follow a coarse-to-fine structure.

\paragraph{Real-world blur.} For real-world blur, we follow BlindDPS \cite{blinddps}. We provide qualitative examples under real-world blur in Fig.~\ref{fig:real_blur_vis}. The results show that our method remains robust beyond matched synthetic degradations and still outperforms the compared baseline.

\begin{figure}[!h]
    \centering
    \begin{minipage}{0.49\linewidth}
        \centering
        \includegraphics[width=\linewidth]{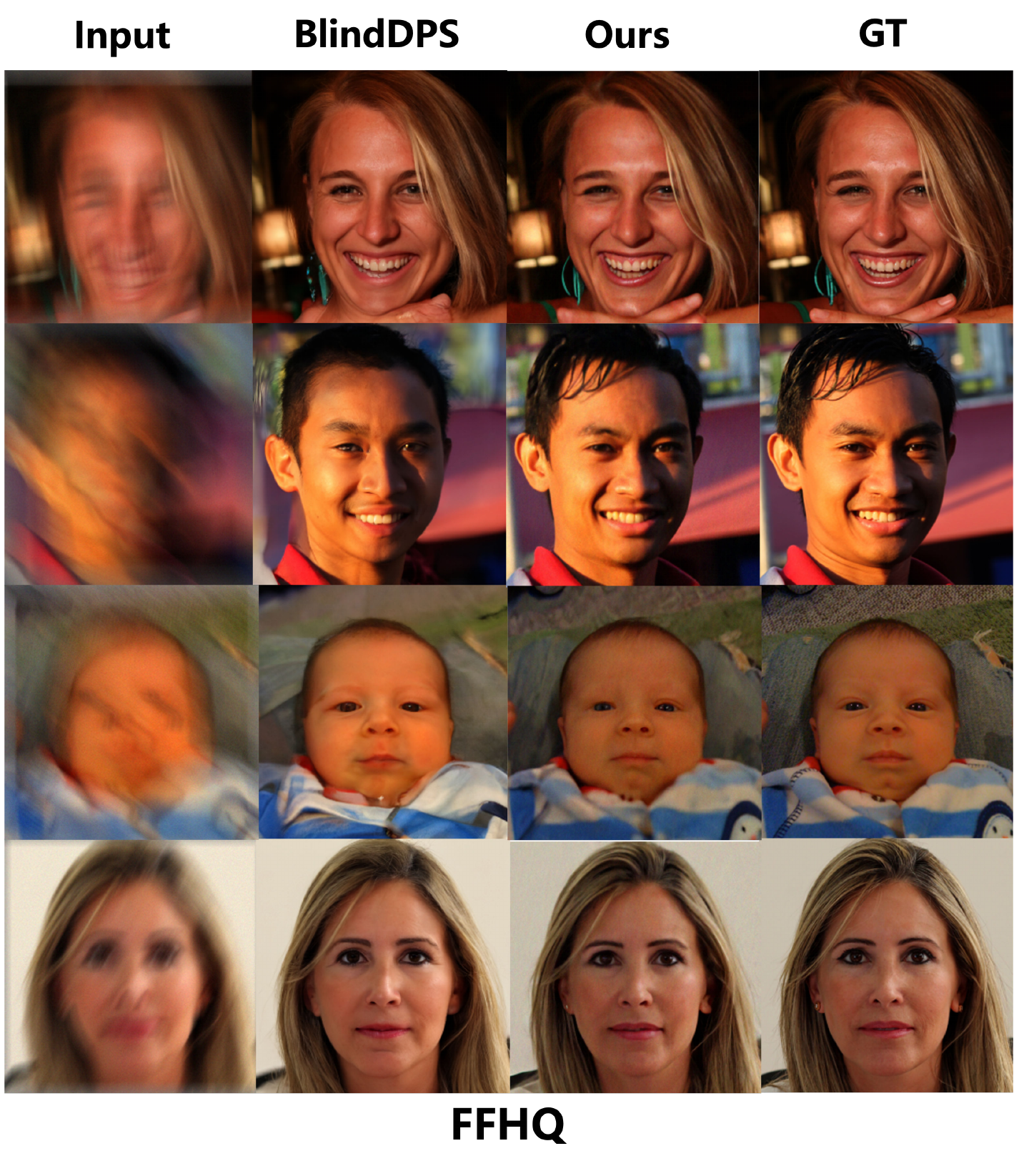}
    \end{minipage}
    \hfill
    \begin{minipage}{0.50\linewidth}
        \centering
        \includegraphics[width=\linewidth]{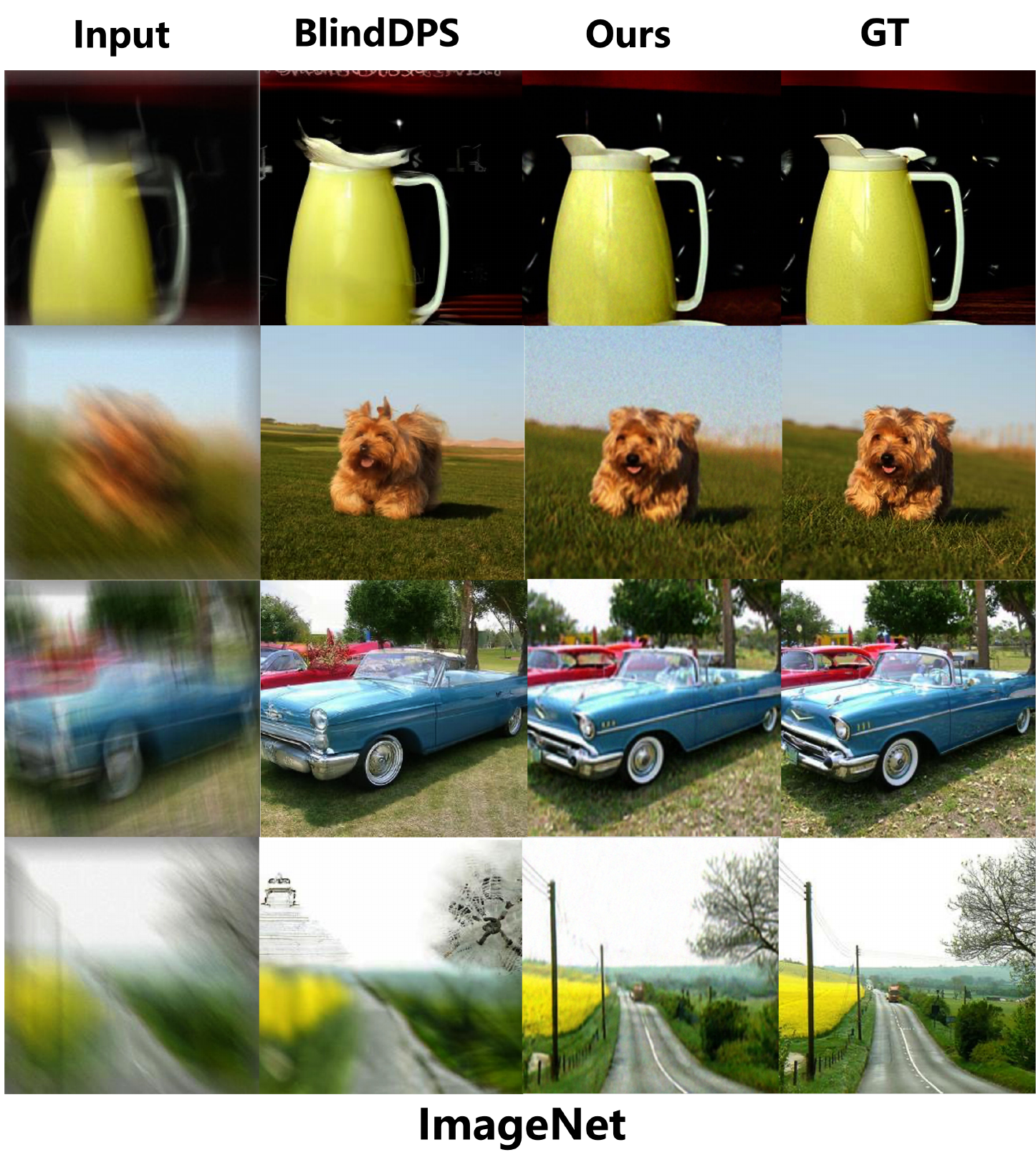}
    \end{minipage}
    \caption{\textbf{Qualitative examples of real-world blur.}
    Our method remains robust under real-world blur and produces clearer restorations than the compared baseline.}
    \label{fig:real_blur_vis}
\end{figure}

\paragraph{Stable-Diffusion-based restoration at \(512\times512\).}
Since different pretrained diffusion models have different native resolution preferences, we select Stable Diffusion v1.5 as a representative latent diffusion backbone for \(512\times512\) restoration. To verify the effectiveness of our method for higher-resolution restoration in latent space, we compare it with the recent SILO method~\cite{silo}, which also adopts Stable Diffusion v1.5 and operates at \(512\times512\) resolution. Tab.~\ref{tab:silo_ours_ffhq_512} and Fig.~\ref{silo} report the quantitative and qualitative comparison between SILO and our method. Compared with pixel-space restoration, latent-space posterior sampling is more sensitive to posterior geometry, decoder distortion, representation bottlenecks, and text-conditioning effects.

For box inpainting, the improvement is smaller because a contiguous mask only constrains the visible context and does not directly reveal the missing-region semantics. Nevertheless, our method remains competitive, achieving better SSIM and LPIPS than SILO under the aligned \(512\times512\) setting.

\begin{figure}[!h]
    \centering
    \includegraphics[width=0.6\linewidth]{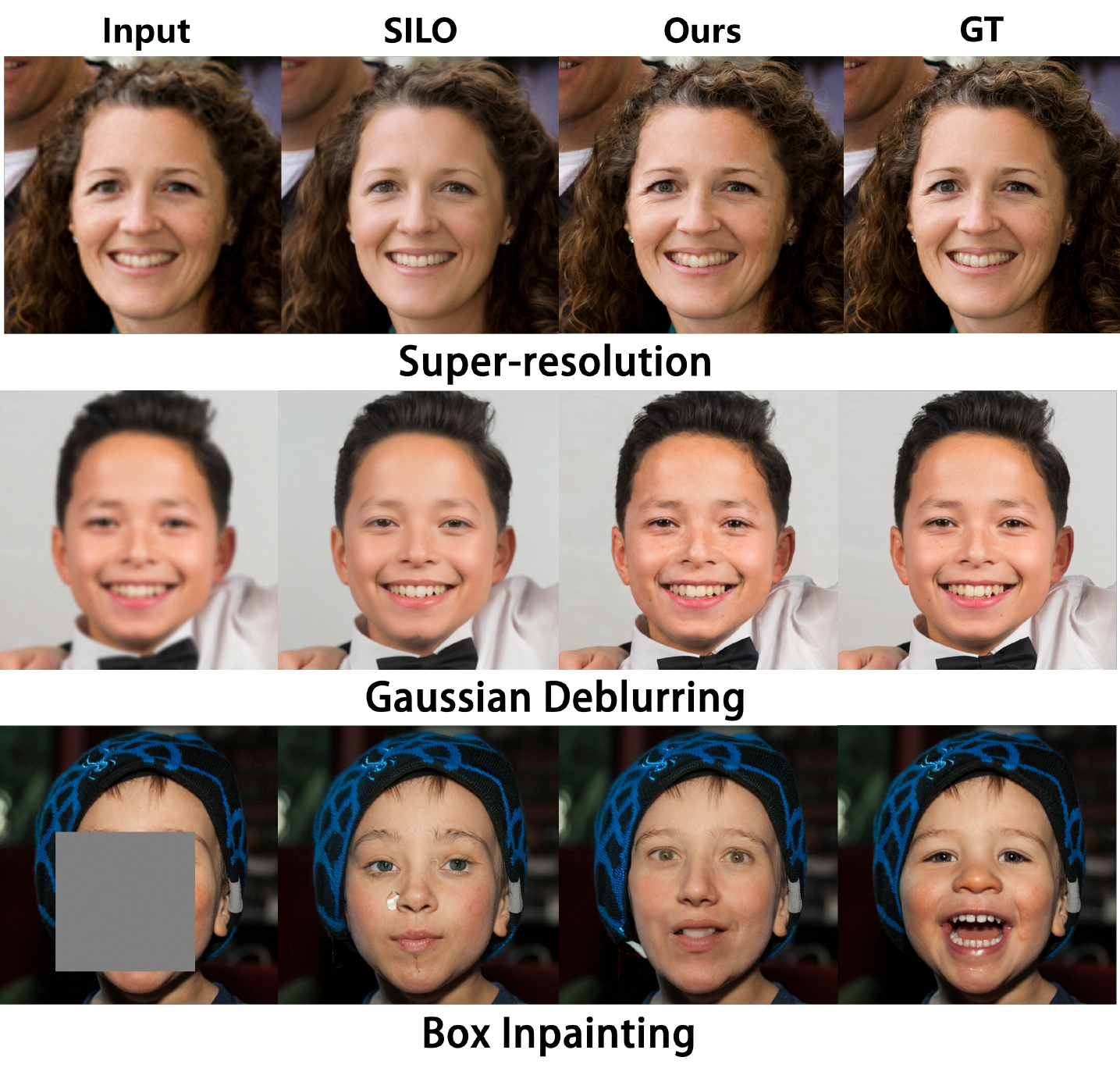}
    \caption{\textbf{Qualitative comparison with SILO on FFHQ \(512\times512\).}
Both methods use Stable Diffusion v1.5 as the backbone. Our method produces competitive restorations with sharper visible details in these examples.}
    \label{silo}
\end{figure}

\begin{table*}[!h]
\caption{\textbf{Quantitative comparison with SILO on FFHQ (\(512\times512\)).}
We report results on three restoration tasks.}
\centering
\small
\setlength{\tabcolsep}{4.5pt}
\renewcommand{\arraystretch}{1.15}
\resizebox{0.95\textwidth}{!}{%
\begin{tabular}{|l|rrr|rrr|rrr|}
\toprule
\multirow{2}{*}{Methods}
& \multicolumn{3}{c|}{Gaussian Deblurring}
& \multicolumn{3}{c|}{Super-resolution}
& \multicolumn{3}{c|}{Box Inpainting} \\
\cmidrule(lr){2-4}
\cmidrule(lr){5-7}
\cmidrule(lr){8-10}
& PSNR$\uparrow$ & SSIM$\uparrow$ & LPIPS$\downarrow$
& PSNR$\uparrow$ & SSIM$\uparrow$ & LPIPS$\downarrow$
& PSNR$\uparrow$ & SSIM$\uparrow$ & LPIPS$\downarrow$ \\
\midrule
SILO \((512 \times 512)\)
& \underline{26.53} & \underline{0.745} & \underline{0.317}
& \underline{26.79} & \underline{0.761} & \underline{0.297}
& \textbf{22.73} & \underline{0.782} & \underline{0.237} \\

Ours \((512 \times 512)\)
& \textbf{27.29} & \textbf{0.762} & \textbf{0.232}
& \textbf{28.04} & \textbf{0.803} & \textbf{0.177}
& \underline{22.62} & \textbf{0.785} & \textbf{0.232} \\
\bottomrule
\end{tabular}%
}
\label{tab:silo_ours_ffhq_512}
\end{table*}

\paragraph{Operator mismatch.}
We next examine the boundary of our method under operator mismatch. The observation is generated by the true blur operator \(A_{\mathrm{true}}\), with kernel size \(61\) and intensity \(0.50\), while reconstruction uses a perturbed operator \(A_{\mathrm{mismatch}}\). As shown in Tab.~\ref{tab:operator_mismatch}, performance drops substantially when the assumed operator deviates from the true one. This is expected because the sampler targets a biased posterior induced by the misspecified likelihood rather than the posterior induced by \(A_{\mathrm{true}}\). The measurement-consistency term is effectively computed as
\[
\|A_{\mathrm{mismatch}}(\hat{\boldsymbol{x}}_0)-A_{\mathrm{true}}(\boldsymbol{x}_{\mathrm{gt}})\|_2^2,
\]
where \(\hat{\boldsymbol{x}}_0\) and \(\boldsymbol{x}_{\mathrm{gt}}\) denote the estimated and ground-truth images. Thus, our method can stabilize sampling under a given likelihood, but it cannot correct a systematically misspecified forward model. This represents a natural failure mode when the assumed degradation operator is substantially wrong.

\begin{table}[t]
\caption{\textbf{Operator mismatch on FFHQ motion deblurring.}
The observation is generated with the true operator of kernel size \(61\) and intensity \(0.50\), while reconstruction uses a perturbed operator.}
\vspace{3mm}
\centering
\small
\setlength{\tabcolsep}{4.5pt}
\renewcommand{\arraystretch}{1.15}
\resizebox{0.75\columnwidth}{!}{%
\begin{tabular}{|l|ccc|}
\toprule
Assumed operator
& PSNR$\uparrow$ & SSIM$\uparrow$ & LPIPS$\downarrow$ \\
\midrule
Kernel size 51, intensity 0.05
& 19.36 & 0.477 & 0.416 \\

Kernel size 71, intensity 0.05
& 15.20 & 0.286 & 0.526 \\

Kernel size 61, intensity 0.65
& 9.69 & 0.155 & 0.689 \\

Kernel size 61, intensity 0.08
& 10.03 & 0.150 & 0.672 \\

Kernel size 61, intensity 0.50 (Matched)
& \textbf{36.69} & \textbf{0.940} & \textbf{0.054} \\
\bottomrule
\end{tabular}%
}
\label{tab:operator_mismatch}
\end{table}

\subsubsection{Parameter Sensitivity}\label{sensitivity}

We analyze the sensitivity of key sampling and fusion parameters on FFHQ motion deblurring. We first study the interaction between the continuation steps \(N\) and inner refinement steps \(T\), which controls the quality--cost trade-off. We then examine the effects of the measurement noise level \(\sigma_y\) and representative continuation and fusion parameters.

\paragraph{Sampling budget \(N,T\).} Tab.~\ref{tab:nt_sensitivity} reports the joint sensitivity to the number of continuation steps \(N\) and inner refinement steps \(T\). Increasing \(N\) makes the continuation trajectory finer, so the transition from low-frequency guidance to full-band consistency becomes smoother. Increasing \(T\) improves local refinement at each fixed stage. The two factors are complementary: a large \(T\) cannot fully compensate for a coarse continuation path, while a large \(N\) cannot fully compensate for insufficient local refinement. The results show that the gain saturates beyond moderate budgets and can degrade when the budget is overly large. We therefore use \(N=T=200\) as a practical quality--cost trade-off.

\begin{table*}[!h]
\caption{\textbf{Sensitivity to sampling budget.}
We report quantitative results for different combinations of the continuation steps \(N\) and inner refinement steps \(T\) of Alg.~\ref{alg:main} on FFHQ motion deblurring.}
\centering
\small
\setlength{\tabcolsep}{5pt}
\renewcommand{\arraystretch}{1.15}
\resizebox{0.75\textwidth}{!}{%
\begin{tabular}{|l|ccc|c|}
\toprule
\((N,T)\)
& PSNR$\uparrow$ & SSIM$\uparrow$ & LPIPS$\downarrow$ & Runtime (s/image) \\
\midrule
\((50, 200)\)
& 35.82 & 0.924 & 0.069 & \(\sim 35\) \\

\((100, 200)\)
& 36.28 & 0.935 & 0.055 & \(\sim 65\) \\

\((300, 200)\)
& \textbf{37.08} & \textbf{0.953} & \textbf{0.039} & \(\sim 195\) \\

\((500, 200)\)
& 37.04 & 0.949 & 0.046 & \(\sim 280\) \\

\((200, 50)\)
& 35.81 & 0.943 & 0.047 & \(\sim 70\) \\

\((200, 100)\)
& 36.86 & 0.945 & 0.050 & \(\sim 110\) \\

\((200, 300)\)
& 36.95 & 0.950 & 0.041 & \(\sim 160\) \\

\((200, 500)\)
& 35.80 & 0.913 & 0.075 & \(\sim 215\) \\

\((300, 100)\)
& 35.34 & 0.938 & 0.053 & \(\sim 175\) \\

\((400, 50)\)
& 35.84 & 0.943 & 0.046 & \(\sim 200\) \\

\((500, 25)\)
& 36.40 & 0.950 & 0.040 & \(\sim 240\) \\

\((300, 300)\)
& 36.97 & 0.948 & 0.047 & \(\sim 225\) \\

\((400, 400)\)
& 36.80 & 0.940 & 0.054 & \(\sim 320\) \\

\((500, 500)\)
& 36.41 & 0.928 & 0.063 & \(\sim 425\) \\

Ours \((200, 200)\)
& 36.69 & 0.940 & 0.054 & \(\sim 130\) \\
\bottomrule
\end{tabular}%
}
\label{tab:nt_sensitivity}
\end{table*}

To further examine the low-runtime regime, we compare our method with DCDP under a similar computational budget in Tab.~\ref{tab:same_budget_motion_deblur}. With \(N=9\) and \(T=90\), our method runs in approximately \(2\) seconds per image and outperforms DCDP on FFHQ motion deblurring across all metrics. This suggests that the proposed continuation can be scaled down for efficiency while retaining competitive restoration quality.

\begin{table}[t]
\caption{\textbf{Low-runtime comparison under a similar computational budget.}
We compare different methods on FFHQ \(256\times256\) motion deblurring under a comparable runtime budget. Our method uses a reduced sampling budget with \(N=9,T=90\).}
\vspace{3mm}
\centering
\small
\setlength{\tabcolsep}{5pt}
\renewcommand{\arraystretch}{1.15}
\resizebox{0.7\columnwidth}{!}{%
\begin{tabular}{|l|ccc|c|}
\toprule
Method
& PSNR$\uparrow$ & SSIM$\uparrow$ & LPIPS$\downarrow$ & Runtime (s/img) \\
\midrule

DCDP
& 25.08 & 0.512 & 0.364 & \(\sim 2\) \\

Ours \((N=9, T=90)\)
& \textbf{27.31} & \textbf{0.603} & \textbf{0.264} & \(\sim 2\) \\
\bottomrule
\end{tabular}%
}
\label{tab:same_budget_motion_deblur}
\end{table}

\paragraph{Measurement noise level.}
We also evaluate different measurement noise levels \(\sigma_y\). This experiment is not intended as a method-parameter ablation, since \(\sigma_y\) specifies the observation noise in the inverse problem. Instead, it illustrates how restoration quality changes with measurement SNR. As shown in Tab.~\ref{tab:sigma_sensitivity}, performance decreases as \(\sigma_y\) increases, because fine-scale components become less identifiable under noisier observations. We use \(\sigma_y=0.05\) in the main experiments to follow the standard noisy inverse-problem setting and maintain a fair comparison with baselines.

\begin{table}[!h]
\caption{\textbf{Sensitivity to measurement noise level.}
We vary the observation noise level \(\sigma_y\) of Eq.~\eqref{inv_define} on FFHQ motion deblurring.}
\vspace{3mm}
\centering
\small
\setlength{\tabcolsep}{5pt}
\renewcommand{\arraystretch}{1.15}
\resizebox{0.5\columnwidth}{!}{%
\begin{tabular}{|l|ccc|}
\toprule
\(\sigma_y\)
& PSNR$\uparrow$ & SSIM$\uparrow$ & LPIPS$\downarrow$ \\
\midrule
0.00
& \textbf{38.53} & \textbf{0.971} & \textbf{0.034} \\

0.03
& 37.49 & 0.963 & 0.042 \\

0.05 (Ours)
& 36.69 & 0.940 & 0.054 \\

0.07
& 34.95 & 0.935 & 0.082 \\
\bottomrule
\end{tabular}%
}
\label{tab:sigma_sensitivity}
\end{table}

\paragraph{Other parameters.}
Finally, we vary representative continuation and fusion parameters one at a time while keeping all other parameters fixed. As shown in Tab.~\ref{tab:other_param_sensitivity}, reducing the low-frequency commitment weight, weakening the continuation schedule, or using an overly aggressive initial detail-unlocking value can degrade performance. In particular, increasing \(d_s\) significantly hurts restoration quality, indicating that premature detail adoption is harmful when high-frequency directions are still weakly identifiable. These results support the adopted default configuration.

\begin{table}[!h]
\caption{\textbf{Sensitivity to other continuation and fusion parameters.}
Each row changes one parameter while keeping all remaining parameters identical to the default configuration of Tab.~\ref{tab:param_config}. Results are evaluated on FFHQ motion deblurring.}
\vspace{3mm}
\centering
\small
\setlength{\tabcolsep}{4pt}
\renewcommand{\arraystretch}{1.15}
\resizebox{0.6\columnwidth}{!}{%
\begin{tabular}{|l|ccc|}
\toprule
Settings
& PSNR$\uparrow$ & SSIM$\uparrow$ & LPIPS$\downarrow$ \\
\midrule
\(w_{i,c}=0.8\)
& 35.21 & 0.942 & \textbf{0.045} \\

\(\lambda_i\) starts at \(0.30\) and decays to \(0\)
& 35.33 & \textbf{0.943} & \textbf{0.045} \\

\(d_s=0.3\)
& 30.78 & 0.813 & 0.187 \\

Ours
& \textbf{36.69} & 0.940 & 0.054 \\
\bottomrule
\end{tabular}%
}
\label{tab:other_param_sensitivity}
\end{table}

\subsubsection{Additional Qualitative Results}\label{add_qual}

Fig.~\ref{fig_wavelet_basis} provides qualitative comparisons of different wavelet bases, complementing the quantitative ablation in Tab.~\ref{tab:wavelet_freq_ablation}. We further provide additional qualitative results on FFHQ and ImageNet in Fig.~\ref{fig_app_2} and Fig.~\ref{fig_app_3}, covering motion deblurring, super-resolution, Gaussian deblurring, and random inpainting.

\begin{figure}[!h]
    \centering
    \begin{minipage}{0.49\linewidth}
        \centering
        \includegraphics[width=\linewidth]{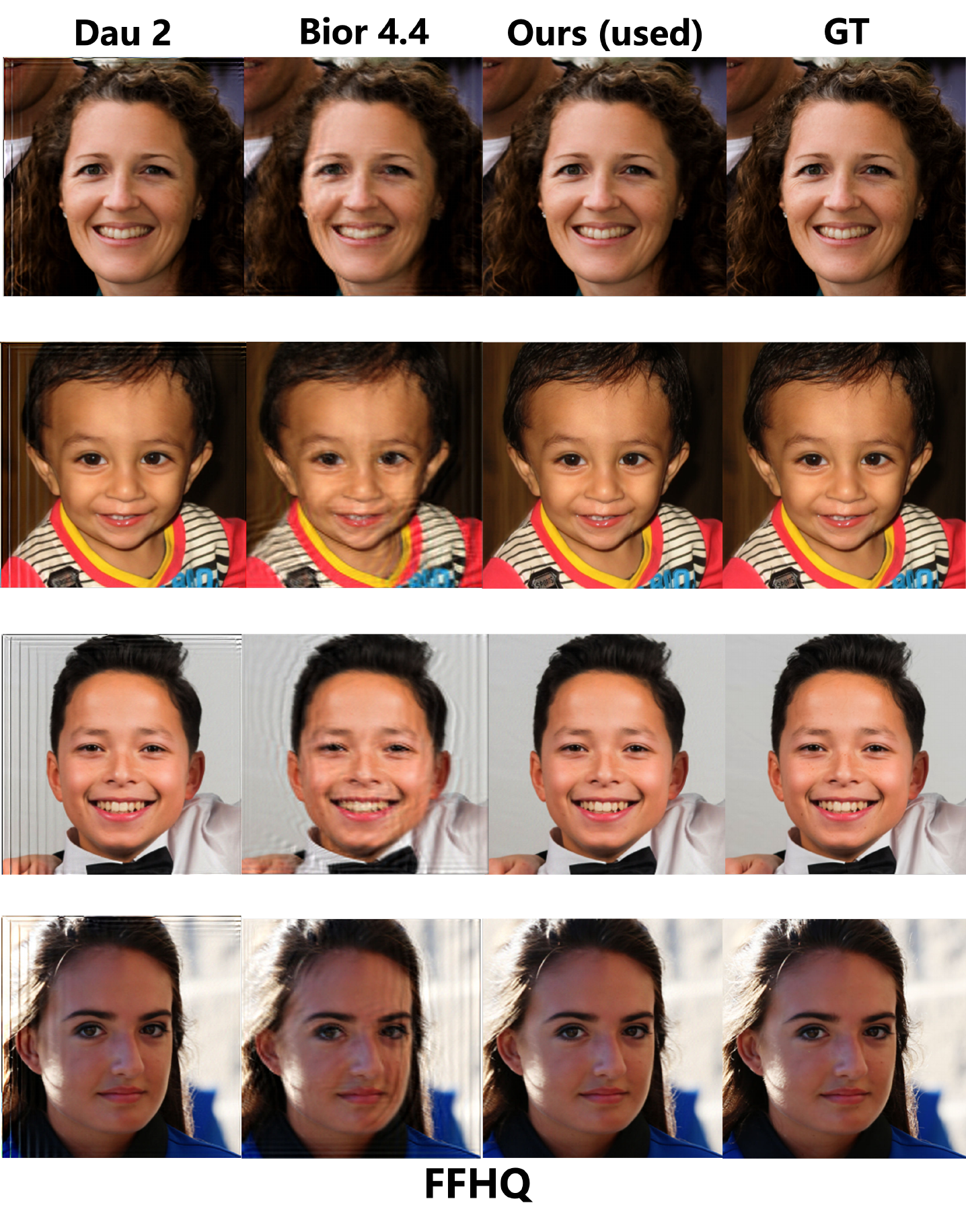}
    \end{minipage}
    \hfill
    \begin{minipage}{0.49\linewidth}
        \centering
        \includegraphics[width=\linewidth]{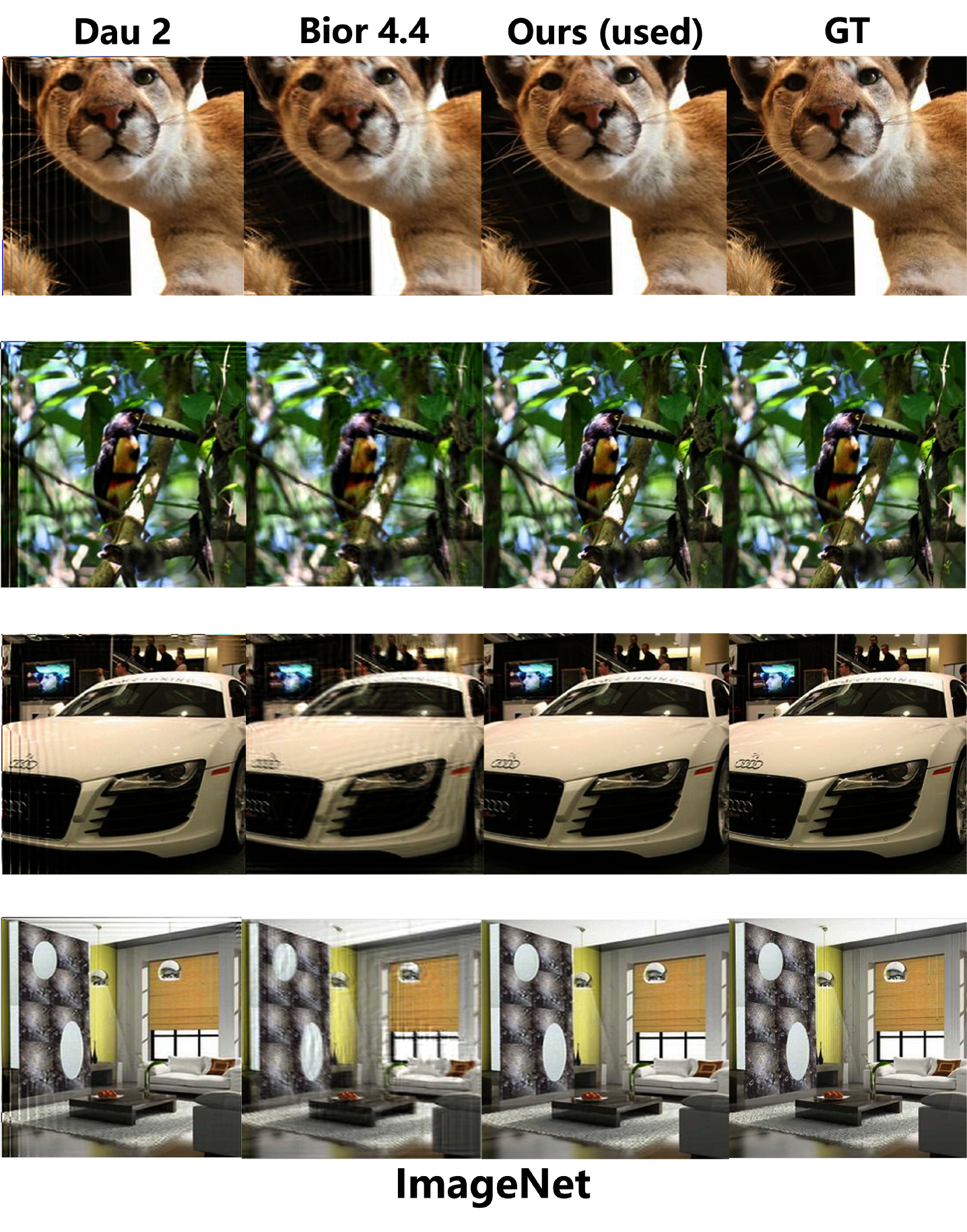}
    \end{minipage}
    \caption{\textbf{Qualitative comparison of different wavelet bases.}
    Results are evaluated on motion deblurring at \(256\times256\). Haar produces fewer artifacts and better preserves coarse structure and fine details among the tested bases.}
    \label{fig_wavelet_basis}
\end{figure}

\begin{figure*}[!h]
    \centering
    \includegraphics[width=\linewidth]{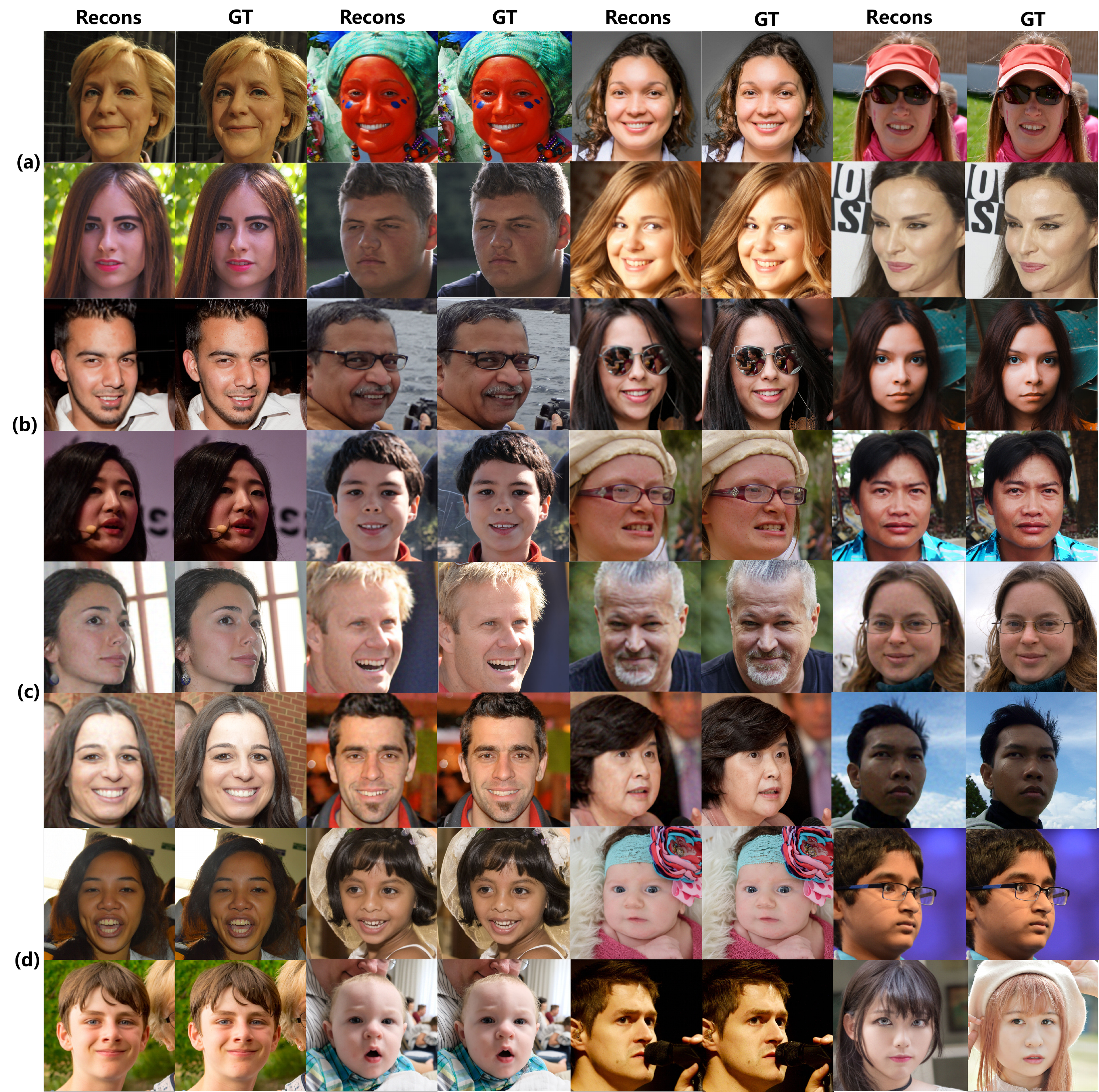}
    \vspace{-15pt}
    \caption{\textbf{Additional qualitative results on FFHQ \(256\times256\).}
    We show results for (a) motion deblurring, (b) super-resolution, (c) Gaussian deblurring, and (d) random inpainting. \textbf{Recons} denotes the reconstruction generated by our method.}
    \label{fig_app_2}
\end{figure*}

\begin{figure*}[!h]
    \centering
    \includegraphics[width=\linewidth]{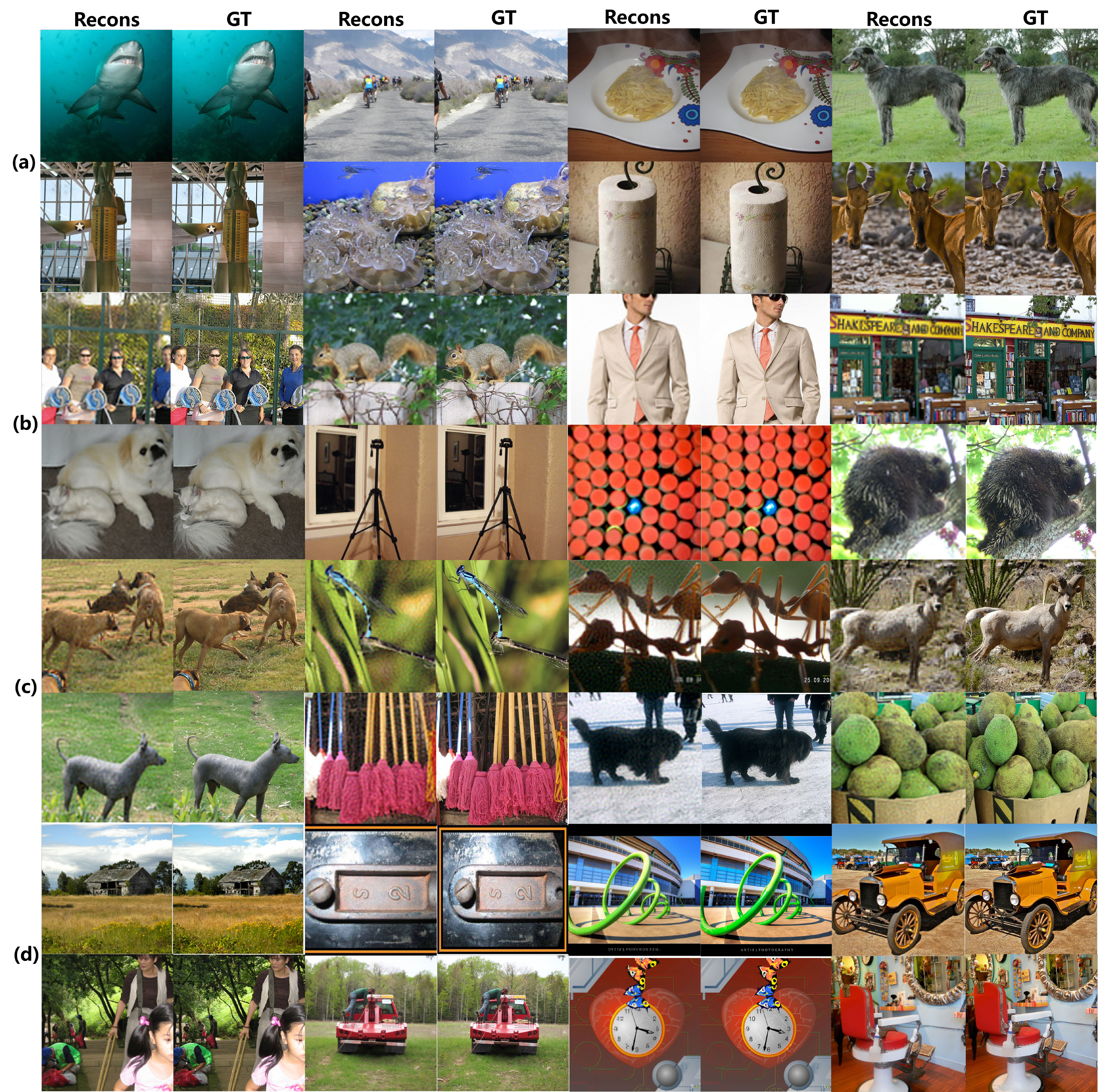}
    \vspace{-15pt}
    \caption{\textbf{Additional qualitative results on ImageNet \(256\times256\).}
    We show results for (a) motion deblurring, (b) super-resolution, (c) Gaussian deblurring, and (d) random inpainting. \textbf{Recons} denotes the reconstruction generated by our method.}
    \label{fig_app_3}
\end{figure*}

\subsection{Implementation Details}\label{imple}

\begin{table*}[t]
\caption{\textbf{Parameter configuration of our method.}
Default hyperparameters used in our main experiments.}
\vspace{3mm}
\centering
\small
\setlength{\tabcolsep}{5pt}
\renewcommand{\arraystretch}{1.15}
\resizebox{\textwidth}{!}{%
\begin{tabular}{|l|l|p{0.62\textwidth}|}
\toprule
Parameter & Default & Description \\
\midrule
\(N\)
& 200
& \texttt{num\_steps}, which determines the total number of diffusion iterations in line 3 of Alg.~\ref{alg:main}. \\

\(\sigma_{\max}\)
& 100
& Initial noise level of \(\sigma\) used in Alg.~\ref{alg:main}. \\

\(\sigma_{\min}\)
& 0.1
& Final noise level of \(\sigma\) used in Alg.~\ref{alg:main}. \\

\(\sigma_y\) in Eq.~\eqref{inv_define}
& 0.05
& Noise level for the measurement. \\

Timestep
& poly--7
& Time-step discretization scheme, polynomial with \(\rho=7\). \\

\(T\)
& 200
& Number of inner optimization steps in line 8 of Alg.~\ref{alg:main}. \\

Denoising Steps
& 5
& Number of denoising steps of the PF-ODE in line 4 of Alg.~\ref{alg:main}. \\

Cutoff-frequency schedule
& linear
& Coupled to the noise level, with start and end bounds set to \(0.4\) and \(1.0\), respectively. \\

\(\lambda_i\) in Eq.~\eqref{eq:fg_loss}
& cosine schedule
& Starts at \(0.35\) and decays to \(0\). \\

\(w_{i,c}\) in Eq.~\eqref{haar_fusion}
& 1.0
& Weight for the low-frequency approximation subband, where \(1.0\) means the refinement result is fully adopted. \\

\(w_{i,d}\) in Eq.~\eqref{haar_fusion}
& 0.2
& Base weight for the high-frequency subband. \\

\(d_s\) in Eq.~\eqref{eq:detail_gate}
& 0.05
& Initial value for detail unlocking. \\

\(d_e\) in Eq.~\eqref{eq:detail_gate}
& 1.0
& Final value for detail unlocking. \\
\bottomrule
\end{tabular}%
}
\label{tab:param_config}
\end{table*}

\paragraph{Diffusion Model.} We use the checkpoints based on EDM \cite{score3} provided by DPS \cite{dps} for all experiments, including ffhq\_10m.pt (357.1 MB) and imagenet256.pt (2.06 GB).

\paragraph{Hyper-parameters in Alg.~\ref{alg:main}.} To facilitate reproducibility, we provide the detailed parameter configurations of our method in Tab.~\ref{tab:param_config}. Unless otherwise stated, we use a single default hyperparameter configuration
across datasets and degradation types, without per-image or per-baseline tuning.
The default configuration is selected once and then
kept fixed for the main comparisons.

\paragraph{Configurations of Other Baselines:}\par\noindent

\noindent\textbf{DPS.}
We follow the official DPS configuration with a DDPM sampler with \(1000\) linear-schedule steps, \texttt{epsilon} prediction, learned-range variance, and clipping of the denoised estimate.

\noindent\textbf{DDRM.}
We follow the default configuration, setting \(\eta_B = 1.0\) and \(\eta = 0.85\), with 20 DDIM sampling steps.

\noindent\textbf{DDNM.}
We adopt the default configuration, setting \(\eta_B = 1.0\) and \(\eta = 0.85\), with 100 DDIM sampling steps.

\noindent\textbf{DCDP.}
We use the default setting described in the paper and directly employ the official open-source implementation for all reported results.

\noindent\textbf{FPS.}
We follow the default configuration in the paper, with \(M = 20\) and \(N = 1000\).

\noindent\textbf{DAPS.}
We follow the default configuration in the paper, with \(N = 200\) and 100 Langevin steps.

\noindent\textbf{FGPS.}
We follow the default configuration in the paper, with kernel size set to 61 and intensity set to 0.5 and 3.0 for motion blur and Gaussian blur, respectively.

\noindent\textbf{PSLD.}
We use the official implementation of PSLD with its default configurations. We use SD v1.5 for ImageNet as per its official default, providing a stronger backbone than the LDM-VQ4 used in other baselines.

\noindent\textbf{ReSample.}
All ReSample experiments are conducted based on the official implementation from the paper and open-source code, using a DDIM sampler with 500 steps.

\paragraph{Device.} All experiments on FFHQ \(256\times256\) are conducted on a single NVIDIA A6000
GPU. Experiments on ImageNet \(256\times256\) are performed on a single NVIDIA
A800/A100 GPU with 80 GB of memory, or on an RTX Pro 6000 Blackwell GPU with
96 GB of memory.

\subsection{Theorem, Lemma and Corollary}\label{proof}

We provide analysis under a standard linear-Gaussian measurement model
\begin{equation}
\boldsymbol{y}
=
\mathcal{A}\boldsymbol{x}^{\mathrm{gt}}
+
\boldsymbol{\varepsilon},
\qquad
\boldsymbol{\varepsilon}\sim \mathcal{N}(\boldsymbol{0},\sigma_y^2 \boldsymbol{I}),
\end{equation}
where \(\boldsymbol{x}^{\mathrm{gt}}\) denotes the ground-truth signal,
\(\mathcal{A}\) is a linear operator, and \(\sigma_y>0\) is the measurement noise level.
The full-band negative log-likelihood energy is
\begin{equation}
\mathcal{E}(\boldsymbol{x})
=
\frac{1}{2\sigma_y^2}
\|\mathcal{A}\boldsymbol{x}-\boldsymbol{y}\|_2^2,
\qquad
g(\boldsymbol{x})
:=
\nabla \mathcal{E}(\boldsymbol{x})
=
\frac{1}{\sigma_y^2}
\mathcal{A}^\top
(\mathcal{A}\boldsymbol{x}-\boldsymbol{y}).
\end{equation}
For notational simplicity, the first part of the analysis uses the measurement
noise scale \(\sigma_y\). In the intermediate posterior of Eq.~\eqref{had},
\(\beta_i\) plays the same role as an effective likelihood temperature, so the
same bounds apply after replacing \(\sigma_y\) by \(\beta_i\).

For a spectral cutoff radius \(\omega\), let \(P^F_{\omega}\) be the masked
spectrum operator in Eq.~\eqref{eq:PF_def}. We use \(\mathcal{P}_{\omega}\) to
denote the corresponding ideal residual-space orthogonal low-pass projector used
in the theoretical analysis, which satisfies \(\|\mathcal{P}_\omega\|_2\le 1\).
With the unitary Fourier normalization, \(\|P^F_\omega\boldsymbol{r}\|_2^2\)
equals the energy of the projected residual
\(\|\mathcal{P}_\omega\boldsymbol{r}\|_2^2\). Thus, \(P^F_\omega\) is used for
the implementation-oriented spectral notation in the main method, whereas
\(\mathcal{P}_\omega\) is used below as the equivalent residual-space
orthogonal projector.

Throughout this section, \(\sigma_i\) denotes the diffusion noise level at outer
step \(i\), \(\omega_i\) denotes the corresponding cutoff radius,
\(\lambda_i\in[0,1]\) denotes the continuation weight,
\(\hat{\boldsymbol{x}}_{0,i}\) denotes the PF-ODE anchor, and \(\beta_i\) is the
temperature coefficient in Eq.~\eqref{had}. We write
\[
\Delta_s(\boldsymbol{x},\sigma)
=
s_\theta(\boldsymbol{x},\sigma)
-
s^\star(\boldsymbol{x},\sigma)
\]
for the score-estimation error, where \(s^\star\) denotes the oracle score.

%========================================================
% Lemma 1: smoothness / conditioning
%========================================================
\begin{lemma}[Operator conditioning controls likelihood-gradient sensitivity]
\label{lem:cond_smooth}
The likelihood energy \(\mathcal{E}\) is \(L\)-smooth with
\begin{equation}
L=\frac{\|\mathcal{A}\|_2^2}{\sigma_y^2}.
\end{equation}
Moreover, for any \(\boldsymbol{x}_1,\boldsymbol{x}_2\),
\begin{equation}
\|g(\boldsymbol{x}_1)-g(\boldsymbol{x}_2)\|_2
=
\|\nabla \mathcal{E}(\boldsymbol{x}_1)-\nabla \mathcal{E}(\boldsymbol{x}_2)\|_2
\le
\frac{\|\mathcal{A}\|_2^2}{\sigma_y^2}
\|\boldsymbol{x}_1-\boldsymbol{x}_2\|_2.
\end{equation}
\end{lemma}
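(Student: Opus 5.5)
The plan is to compute the gradient difference in closed form and then bound it with the operator norm. Because $\mathcal{E}$ is quadratic in $\boldsymbol{x}$, the term involving $\boldsymbol{y}$ cancels exactly when two gradients are subtracted.

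First, for any $\boldsymbol{x}_1,\boldsymbol{x}_2$ we use the expression for $g$ given above and linearity of $\mathcal{A}$ and $\mathcal{A}^\top$ to write
\[
g(\boldsymbol{x}_1)-g(\boldsymbol{x}_2)=\frac{1}{\sigma_y^2}\mathcal{A}^\top\big((\mathcal{A}\boldsymbol{x}_1-\boldsymbol{y})-(\mathcal{A}\boldsymbol{x}_2-\boldsymbol{y})\big)=\frac{1}{\sigma_y^2}\mathcal{A}^\top\mathcal{A}(\boldsymbol{x}_1-\boldsymbol{x}_2).
\]

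Second, we take Euclidean norms and apply submultiplicativity of the induced $2$-norm. This gives $\|g(\boldsymbol{x}_1)-g(\boldsymbol{x}_2)\|_2\le \sigma_y^{-2}\|\mathcal{A}^\top\mathcal{A}\|_2\|\boldsymbol{x}_1-\boldsymbol{x}_2\|_2$.

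Third, we use the standard identity $\|\mathcal{A}^\top\mathcal{A}\|_2=\|\mathcal{A}\|_2^2$. To justify it, note that $\mathcal{A}^\top\mathcal{A}$ is symmetric positive semidefinite, so its spectral norm equals its largest eigenvalue. That eigenvalue is $\sup_{\|\boldsymbol{v}\|_2=1}\boldsymbol{v}^\top\mathcal{A}^\top\mathcal{A}\boldsymbol{v}=\sup_{\|\boldsymbol{v}\|_2=1}\|\mathcal{A}\boldsymbol{v}\|_2^2=\|\mathcal{A}\|_2^2$. Equivalently, it is the square of the largest singular value of $\mathcal{A}$. Substituting this into the second step yields the displayed Lipschitz bound.

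Fourth, $L$-smoothness of $\mathcal{E}$ means exactly that $\nabla\mathcal{E}$ is $L$-Lipschitz, so the bound just obtained establishes it with $L=\|\mathcal{A}\|_2^2/\sigma_y^2$. The Hessian $\nabla^2\mathcal{E}=\sigma_y^{-2}\mathcal{A}^\top\mathcal{A}$ is constant, and its top eigenvalue attains this value, so the constant is tight. If the paper's notion of smoothness is instead the quadratic upper bound $\mathcal{E}(\boldsymbol{x}_1)\le\mathcal{E}(\boldsymbol{x}_2)+\langle g(\boldsymbol{x}_2),\boldsymbol{x}_1-\boldsymbol{x}_2\rangle+\tfrac{L}{2}\|\boldsymbol{x}_1-\boldsymbol{x}_2\|_2^2$, it follows from the exact second-order Taylor expansion of the quadratic together with the same eigenvalue bound.

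There is no real obstacle here. The only point needing care is the norm identity $\|\mathcal{A}^\top\mathcal{A}\|_2=\|\mathcal{A}\|_2^2$, which should be stated explicitly rather than assumed. It would also be worth remarking that, as the paper notes, the same argument applies verbatim with $\sigma_y$ replaced by $\beta_i$.
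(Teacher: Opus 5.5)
Your proposal is correct and takes essentially the same approach as the paper, which also writes $\nabla\mathcal{E}(\boldsymbol{x}_1)-\nabla\mathcal{E}(\boldsymbol{x}_2)=\sigma_y^{-2}\mathcal{A}^\top\mathcal{A}(\boldsymbol{x}_1-\boldsymbol{x}_2)$, takes operator norms, and gets smoothness from the constant Hessian $\sigma_y^{-2}\mathcal{A}^\top\mathcal{A}$. Your explicit justification of $\|\mathcal{A}^\top\mathcal{A}\|_2=\|\mathcal{A}\|_2^2$ fills in a step the paper only asserts.
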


\begin{proof}
Since
\[
\mathcal{E}(\boldsymbol{x})
=
\frac{1}{2\sigma_y^2}
\|\mathcal{A}\boldsymbol{x}-\boldsymbol{y}\|_2^2,
\]
we have
\[
\nabla \mathcal{E}(\boldsymbol{x})
=
\frac{1}{\sigma_y^2}
\mathcal{A}^\top
(\mathcal{A}\boldsymbol{x}-\boldsymbol{y}),
\qquad
\nabla^2 \mathcal{E}(\boldsymbol{x})
=
\frac{1}{\sigma_y^2}
\mathcal{A}^\top \mathcal{A}.
\]
Hence
\[
\|\nabla^2\mathcal{E}(\boldsymbol{x})\|_2
=
\frac{1}{\sigma_y^2}
\|\mathcal{A}^\top \mathcal{A}\|_2
=
\frac{\|\mathcal{A}\|_2^2}{\sigma_y^2},
\]
so \(\mathcal{E}\) is \(L\)-smooth. Also,
\[
\nabla \mathcal{E}(\boldsymbol{x}_1)
-
\nabla \mathcal{E}(\boldsymbol{x}_2)
=
\frac{1}{\sigma_y^2}
\mathcal{A}^\top \mathcal{A}
(\boldsymbol{x}_1-\boldsymbol{x}_2),
\]
and taking operator norms yields the Lipschitz bound.
\end{proof}

To formalize the high-noise regime, consider a Tweedie-type estimator
\begin{equation}
\hat{\boldsymbol{x}}_0(\boldsymbol{x},\sigma)
=
\boldsymbol{x}
+
\sigma^2 s_\theta(\boldsymbol{x},\sigma),
\qquad
\boldsymbol{x}
=
\boldsymbol{x}^{\mathrm{gt}}
+
\sigma\boldsymbol{\epsilon},
\quad
\boldsymbol{\epsilon}\sim\mathcal{N}(\boldsymbol{0},\boldsymbol{I}),
\end{equation}
and let
\[
s^\star(\boldsymbol{x},\sigma)
=
\nabla_{\boldsymbol{x}}\log p_\sigma(\boldsymbol{x})
\]
denote the oracle score.

\begin{lemma}[Large-\(\sigma\) amplifies score errors into clean-estimate errors]
\label{lem:tweedie_amp}
For any \((\boldsymbol{x},\sigma)\),
\begin{equation}
\left\|
\hat{\boldsymbol{x}}_0(\boldsymbol{x},\sigma)
-
\boldsymbol{x}_0^\star(\boldsymbol{x},\sigma)
\right\|_2
=
\sigma^2
\left\|
s_\theta(\boldsymbol{x},\sigma)
-
s^\star(\boldsymbol{x},\sigma)
\right\|_2,
\end{equation}
where
\[
\boldsymbol{x}_0^\star(\boldsymbol{x},\sigma)
\triangleq
\boldsymbol{x}
+
\sigma^2 s^\star(\boldsymbol{x},\sigma)
\]
is the oracle Tweedie estimator. In particular, if
\[
\left\|
s_\theta(\boldsymbol{x},\sigma)
-
s^\star(\boldsymbol{x},\sigma)
\right\|_2
\le
\epsilon(\sigma),
\]
then
\begin{equation}
\left\|
\hat{\boldsymbol{x}}_0(\boldsymbol{x},\sigma)
-
\boldsymbol{x}_0^\star(\boldsymbol{x},\sigma)
\right\|_2
\le
\sigma^2 \epsilon(\sigma).
\end{equation}
\end{lemma}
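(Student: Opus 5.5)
The identity follows directly from the two definitions, since the learned and oracle Tweedie estimators share the same additive term \(\boldsymbol{x}\) and differ only through the score. I will fix an arbitrary pair \((\boldsymbol{x},\sigma)\) and subtract the oracle estimator \(\boldsymbol{x}_0^\star(\boldsymbol{x},\sigma)=\boldsymbol{x}+\sigma^2 s^\star(\boldsymbol{x},\sigma)\) from the learned estimator \(\hat{\boldsymbol{x}}_0(\boldsymbol{x},\sigma)=\boldsymbol{x}+\sigma^2 s_\theta(\boldsymbol{x},\sigma)\). The \(\boldsymbol{x}\) terms cancel, which gives
\[
\hat{\boldsymbol{x}}_0(\boldsymbol{x},\sigma)-\boldsymbol{x}_0^\star(\boldsymbol{x},\sigma)
=\sigma^2\bigl(s_\theta(\boldsymbol{x},\sigma)-s^\star(\boldsymbol{x},\sigma)\bigr)
=\sigma^2\Delta_s(\boldsymbol{x},\sigma).
\]

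Next, I will take the Euclidean norm of both sides. Absolute homogeneity of \(\|\cdot\|_2\), together with \(\sigma^2\ge 0\), turns the right-hand side into \(\sigma^2\|s_\theta-s^\star\|_2\), which is the claimed equality. The statement is pointwise in \((\boldsymbol{x},\sigma)\), so it holds in particular at \(\boldsymbol{x}=\boldsymbol{x}^{\mathrm{gt}}+\sigma\boldsymbol{\epsilon}\) for every realization of \(\boldsymbol{\epsilon}\). No expectation or distributional argument is needed.

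For the second claim, I will apply the uniform score-error hypothesis \(\|s_\theta-s^\star\|_2\le\epsilon(\sigma)\) to the equality just obtained, which yields the bound \(\sigma^2\epsilon(\sigma)\) immediately.

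There is no real obstacle. The only point to check is that both estimators are evaluated at the same \((\boldsymbol{x},\sigma)\), so that the \(\boldsymbol{x}\) terms cancel exactly; otherwise one would need a Lipschitz argument in \(\boldsymbol{x}\). It is also worth noting that, because the result is an equality rather than just an upper bound, the \(\sigma^2\) amplification is exact. This is what Theorem~\ref{gradient_mismatch} and Theorem~\ref{exact} later combine with Lemma~\ref{lem:cond_smooth} to push the error through \(\mathcal{A}^\top\mathcal{A}/\sigma_y^2\).
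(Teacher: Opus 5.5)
Your proposal is correct and is the same argument as the paper's proof. Subtracting the two Tweedie estimators at the same $(\boldsymbol{x},\sigma)$ cancels $\boldsymbol{x}$ and leaves $\sigma^2(s_\theta-s^\star)$; taking norms, using $\sigma^2\ge 0$, gives the identity, and the bound $\sigma^2\epsilon(\sigma)$ then follows immediately from the score-error hypothesis.
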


\begin{proof}
By definition,
\[
\hat{\boldsymbol{x}}_0
-
\boldsymbol{x}_0^\star
=
\left(
\boldsymbol{x}+\sigma^2 s_\theta
\right)
-
\left(
\boldsymbol{x}+\sigma^2 s^\star
\right)
=
\sigma^2(s_\theta-s^\star).
\]
Taking norms yields the claim.
\end{proof}

\begin{theorem}[Gradient-mismatch sensitivity upper bound]
\label{gradient_mismatch}
Under the linear-Gaussian likelihood above, for any \((\boldsymbol{x},\sigma)\),
\begin{equation}
\left\|
g(\hat{\boldsymbol{x}}_0(\boldsymbol{x},\sigma))
-
g(\boldsymbol{x}_0^\star(\boldsymbol{x},\sigma))
\right\|_2
\le
\frac{\|\mathcal{A}\|_2^2}{\sigma_y^2}
\left\|
\hat{\boldsymbol{x}}_0(\boldsymbol{x},\sigma)
-
\boldsymbol{x}_0^\star(\boldsymbol{x},\sigma)
\right\|_2.
\end{equation}
If additionally
\[
\left\|
s_\theta(\boldsymbol{x},\sigma)
-
s^\star(\boldsymbol{x},\sigma)
\right\|_2
\le
\epsilon(\sigma),
\]
then
\begin{equation}
\left\|
g(\hat{\boldsymbol{x}}_0(\boldsymbol{x},\sigma))
-
g(\boldsymbol{x}_0^\star(\boldsymbol{x},\sigma))
\right\|_2
\le
\frac{\|\mathcal{A}\|_2^2}{\sigma_y^2}
\sigma^2
\epsilon(\sigma).
\end{equation}
\end{theorem}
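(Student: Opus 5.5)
The plan is to chain the two preceding lemmas, since the statement is just their composition. For the first inequality, we apply Lemma~\ref{lem:cond_smooth} with the choice \(\boldsymbol{x}_1=\hat{\boldsymbol{x}}_0(\boldsymbol{x},\sigma)\) and \(\boldsymbol{x}_2=\boldsymbol{x}_0^\star(\boldsymbol{x},\sigma)\). Both points are fixed vectors in \(\mathbb{R}^d\) once \((\boldsymbol{x},\sigma)\) is given, so the Lipschitz bound on \(g\) applies verbatim. To keep the argument self-contained, we can also write the difference explicitly. Because \(g\) is affine, the \(\boldsymbol{y}\)-terms cancel:
\[
g(\hat{\boldsymbol{x}}_0)-g(\boldsymbol{x}_0^\star)=\frac{1}{\sigma_y^2}\mathcal{A}^\top\mathcal{A}\,(\hat{\boldsymbol{x}}_0-\boldsymbol{x}_0^\star).
\]
Then \(\|\mathcal{A}^\top\mathcal{A}\|_2=\|\mathcal{A}\|_2^2\) gives the first displayed inequality. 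Note that the measurement noise \(\boldsymbol{\varepsilon}\) inside \(\boldsymbol{y}\) plays no role, since \(\boldsymbol{y}\) drops out of the difference.

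For the second inequality, we substitute the bound from Lemma~\ref{lem:tweedie_amp}. Under the hypothesis \(\|s_\theta-s^\star\|_2\le\epsilon(\sigma)\), the lemma gives \(\|\hat{\boldsymbol{x}}_0-\boldsymbol{x}_0^\star\|_2=\sigma^2\|s_\theta-s^\star\|_2\le\sigma^2\epsilon(\sigma)\). Multiplying by the nonnegative constant \(\|\mathcal{A}\|_2^2/\sigma_y^2\) yields the claim. If desired, we can record the sharper intermediate identity
\[
g(\hat{\boldsymbol{x}}_0)-g(\boldsymbol{x}_0^\star)=\frac{\sigma^2}{\sigma_y^2}\mathcal{A}^\top\mathcal{A}\,\Delta_s(\boldsymbol{x},\sigma).
\]
This identity is what the later exact spectral characterization (Theorem~\ref{exact}) will refine. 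It also makes visible that the bound is tight in the top singular direction of \(\mathcal{A}\).

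There is no genuine obstacle here; the only points needing care are bookkeeping. First, both estimators must be evaluated at the same input \(\boldsymbol{x}\), so that the \(\boldsymbol{x}\)-terms cancel in the Tweedie difference. Second, the statement is deterministic and pointwise in \((\boldsymbol{x},\sigma)\), so no expectation over \(\boldsymbol{\epsilon}\) is required. Finally, we will remark that the same argument holds with \(\sigma_y\) replaced by \(\beta_i\) for the intermediate posterior of Eq.~\eqref{had}, as noted at the start of this section.
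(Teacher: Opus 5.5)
Your proposal is correct and matches the paper's proof: the first inequality is Lemma~\ref{lem:cond_smooth} applied with $\boldsymbol{x}_1=\hat{\boldsymbol{x}}_0(\boldsymbol{x},\sigma)$ and $\boldsymbol{x}_2=\boldsymbol{x}_0^\star(\boldsymbol{x},\sigma)$, and the second follows by inserting the bound from Lemma~\ref{lem:tweedie_amp}. Your explicit affine-difference computation and the remark on tightness along the top singular direction are accurate extras that the paper's proof does not need.
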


\begin{proof}
The first inequality is Lemma~\ref{lem:cond_smooth} with
\(\boldsymbol{x}_1=\hat{\boldsymbol{x}}_0(\boldsymbol{x},\sigma)\) and
\(\boldsymbol{x}_2=\boldsymbol{x}_0^\star(\boldsymbol{x},\sigma)\).
The second follows from Lemma~\ref{lem:tweedie_amp}.
\end{proof}

\paragraph{Exact spectral form and conditional amplification.}
The preceding result is an upper-bound statement. We next give an exact spectral
characterization in the linear-Gaussian setting, which clarifies when the
\(\sigma^2\) factor yields actual amplification on measured directions rather
than merely a worst-case sensitivity envelope.

\begin{theorem}[Exact gradient-mismatch scaling]
\label{exact}
Consider the linear-Gaussian likelihood
\[
\mathcal{E}(\boldsymbol{x})
=
\frac{1}{2\sigma_y^2}
\|\mathcal{A}\boldsymbol{x}-\boldsymbol{y}\|_2^2,
\qquad
g(\boldsymbol{x})
=
\nabla \mathcal{E}(\boldsymbol{x})
=
\frac{1}{\sigma_y^2}
\mathcal{A}^\top
(\mathcal{A}\boldsymbol{x}-\boldsymbol{y}).
\]
Let
\[
\hat{\boldsymbol{x}}_0(\boldsymbol{x},\sigma)
=
\boldsymbol{x}
+
\sigma^2 s_\theta(\boldsymbol{x},\sigma),
\qquad
\boldsymbol{x}_0^\star(\boldsymbol{x},\sigma)
=
\boldsymbol{x}
+
\sigma^2 s^\star(\boldsymbol{x},\sigma),
\]
and define the score error
\[
\Delta_s(\boldsymbol{x},\sigma)
=
s_\theta(\boldsymbol{x},\sigma)
-
s^\star(\boldsymbol{x},\sigma).
\]
Then the likelihood-gradient mismatch satisfies the exact identity
\begin{equation}
g(\hat{\boldsymbol{x}}_0(\boldsymbol{x},\sigma))
-
g(\boldsymbol{x}_0^\star(\boldsymbol{x},\sigma))
=
\frac{\sigma^2}{\sigma_y^2}
\mathcal{A}^\top\mathcal{A}
\Delta_s(\boldsymbol{x},\sigma).
\end{equation}
If
\[
\mathcal{A}
=
U\operatorname{diag}(a_1,\ldots,a_r)V^\top
\]
is a singular-value decomposition, then
\begin{equation}
\left\|
g(\hat{\boldsymbol{x}}_0(\boldsymbol{x},\sigma))
-
g(\boldsymbol{x}_0^\star(\boldsymbol{x},\sigma))
\right\|_2^2
=
\frac{\sigma^4}{\sigma_y^4}
\sum_{j=1}^{r}
a_j^4
\left|
\left\langle
\boldsymbol{v}_j,
\Delta_s(\boldsymbol{x},\sigma)
\right\rangle
\right|^2.
\end{equation}
Consequently, for any measured subspace
\[
\mathcal{S}
=
\operatorname{span}\{\boldsymbol{v}_j:j\in J\},
\]
define the projected mismatch
\[
e_{\mathcal{S}}(\boldsymbol{x},\sigma)
:=
\Pi_{\mathcal{S}}
\left(
g(\hat{\boldsymbol{x}}_0(\boldsymbol{x},\sigma))
-
g(\boldsymbol{x}_0^\star(\boldsymbol{x},\sigma))
\right).
\]
Then
\begin{equation}
\|e_{\mathcal{S}}(\boldsymbol{x},\sigma)\|_2^2
=
\frac{\sigma^4}{\sigma_y^4}
\sum_{j\in J}
a_j^4
\left|
\left\langle
\boldsymbol{v}_j,
\Delta_s(\boldsymbol{x},\sigma)
\right\rangle
\right|^2.
\end{equation}
Thus, the mismatch component on \(\mathcal{S}\) scales quadratically with
\(\sigma\), modulated by the score-error energy on the same measured directions.
In particular, for \(\sigma_h>\sigma_l\), consider two states
\((\boldsymbol{x}_h,\sigma_h)\) and \((\boldsymbol{x}_l,\sigma_l)\). If
\begin{equation}
\left(
\sum_{j\in J}
a_j^4
\left|
\left\langle
\boldsymbol{v}_j,
\Delta_s(\boldsymbol{x}_h,\sigma_h)
\right\rangle
\right|^2
\right)^{1/2}
\ge
\left(\frac{\sigma_l}{\sigma_h}\right)^2
\left(
\sum_{j\in J}
a_j^4
\left|
\left\langle
\boldsymbol{v}_j,
\Delta_s(\boldsymbol{x}_l,\sigma_l)
\right\rangle
\right|^2
\right)^{1/2},
\end{equation}
then
\begin{equation}
\|e_{\mathcal{S}}(\boldsymbol{x}_h,\sigma_h)\|_2
\ge
\|e_{\mathcal{S}}(\boldsymbol{x}_l,\sigma_l)\|_2 .
\end{equation}
\end{theorem}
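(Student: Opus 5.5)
The plan is to verify the identity by direct algebra, then diagonalize it in the SVD basis, then read off the two consequences.

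\textbf{Step 1: the exact identity.} Since \(g\) is affine, \(g(\boldsymbol{x}_1)-g(\boldsymbol{x}_2)=\sigma_y^{-2}\mathcal{A}^\top\mathcal{A}(\boldsymbol{x}_1-\boldsymbol{x}_2)\); note that \(\boldsymbol{y}\) cancels, exactly as in the proof of Lemma~\ref{lem:cond_smooth}. Take \(\boldsymbol{x}_1=\hat{\boldsymbol{x}}_0\) and \(\boldsymbol{x}_2=\boldsymbol{x}_0^\star\). By the computation in Lemma~\ref{lem:tweedie_amp},
\[
\hat{\boldsymbol{x}}_0-\boldsymbol{x}_0^\star=\sigma^2\Delta_s(\boldsymbol{x},\sigma),
\]
and substituting this gives the first display.

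\textbf{Step 2: the spectral norm formula.} Write \(\mathcal{A}=U\Sigma V^\top\) with \(\Sigma=\operatorname{diag}(a_1,\ldots,a_r)\) in compact form. Then \(V\) has orthonormal columns \(\boldsymbol{v}_1,\ldots,\boldsymbol{v}_r\), and
\[
\mathcal{A}^\top\mathcal{A}=V\Sigma^2V^\top=\sum_{j=1}^r a_j^2\boldsymbol{v}_j\boldsymbol{v}_j^\top .
\]
Applying this to \(\Delta_s\) gives
\[
\sum_j a_j^2\langle\boldsymbol{v}_j,\Delta_s\rangle\boldsymbol{v}_j .
\]
By orthonormality (Pythagoras), its squared norm is \(\sum_j a_j^4|\langle\boldsymbol{v}_j,\Delta_s\rangle|^2\). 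Multiplying by \(\sigma^4/\sigma_y^4\) yields the second display.

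\textbf{Step 3: the projected mismatch.} \(\Pi_{\mathcal{S}}=\sum_{j\in J}\boldsymbol{v}_j\boldsymbol{v}_j^\top\) is the orthogonal projector onto \(\mathcal{S}\). Applying it to the expansion from Step 2 keeps exactly the terms with \(j\in J\), because \(\boldsymbol{v}_k^\top\boldsymbol{v}_j=\delta_{kj}\). The same Pythagorean computation then gives the formula for \(\|e_{\mathcal{S}}\|_2^2\).

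\textbf{Step 4: the comparison statement.} Define
\[
E(\boldsymbol{x},\sigma)=\Big(\sum_{j\in J}a_j^4|\langle\boldsymbol{v}_j,\Delta_s(\boldsymbol{x},\sigma)\rangle|^2\Big)^{1/2}.
\]
Step 3 then gives \(\|e_{\mathcal{S}}(\boldsymbol{x},\sigma)\|_2=(\sigma^2/\sigma_y^2)\,E(\boldsymbol{x},\sigma)\). The hypothesis is \(E_h\ge(\sigma_l/\sigma_h)^2E_l\). Multiplying both sides by \(\sigma_h^2/\sigma_y^2>0\) gives \(\sigma_h^2E_h/\sigma_y^2\ge\sigma_l^2E_l/\sigma_y^2\), which is the claimed inequality.

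There is no genuine obstacle here. The only care points are two. First, with a compact SVD one must use that the \(\boldsymbol{v}_j\) are orthonormal but need not span the whole space, so components of \(\Delta_s\) in \(\ker\mathcal{A}\) are annihilated. Second, the sum runs over \(j\le r\) only, so zero singular values contribute nothing. The condition \(\sigma_h>\sigma_l\) is not actually needed for the final implication; it only makes the hypothesis weaker than \(E_h\ge E_l\).
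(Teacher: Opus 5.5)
Your proposal is correct and follows essentially the same route as the paper: substitute the Tweedie difference $\hat{\boldsymbol{x}}_0-\boldsymbol{x}_0^\star=\sigma^2\Delta_s$ into the affine gradient, diagonalize $\mathcal{A}^\top\mathcal{A}=V\operatorname{diag}(a_1^2,\ldots,a_r^2)V^\top$, restrict the sum to $J$ under the projection, and read off the comparison from the projected identity. Your handling of the compact SVD and the explicit rescaling by $\sigma_h^2/\sigma_y^2$ in Step 4 spell out details that the paper's proof leaves implicit.
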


\begin{proof}
By the Tweedie form,
\[
\hat{\boldsymbol{x}}_0(\boldsymbol{x},\sigma)
-
\boldsymbol{x}_0^\star(\boldsymbol{x},\sigma)
=
\sigma^2
\Delta_s(\boldsymbol{x},\sigma).
\]
Since
\[
g(\boldsymbol{x})
=
\frac{1}{\sigma_y^2}
\mathcal{A}^\top
(\mathcal{A}\boldsymbol{x}-\boldsymbol{y}),
\]
we have
\[
g(\hat{\boldsymbol{x}}_0)
-
g(\boldsymbol{x}_0^\star)
=
\frac{1}{\sigma_y^2}
\mathcal{A}^\top\mathcal{A}
(
\hat{\boldsymbol{x}}_0-\boldsymbol{x}_0^\star
)
=
\frac{\sigma^2}{\sigma_y^2}
\mathcal{A}^\top\mathcal{A}
\Delta_s.
\]
Using
\[
\mathcal{A}^\top\mathcal{A}
=
V\operatorname{diag}(a_1^2,\ldots,a_r^2)V^\top
\]
gives
\[
\left\|
\mathcal{A}^\top\mathcal{A}\Delta_s
\right\|_2^2
=
\sum_{j=1}^{r}
a_j^4
\left|
\left\langle
\boldsymbol{v}_j,
\Delta_s
\right\rangle
\right|^2,
\]
which proves the exact spectral identity. Projecting onto
\(\mathcal{S}=\operatorname{span}\{\boldsymbol{v}_j:j\in J\}\) restricts the
same sum to \(j\in J\). The monotonic comparison follows directly from the
displayed projected identity and the stated condition.
\end{proof}

\begin{corollary}[Frequency truncation removes high-frequency mismatch]
\label{truncation}
Assume \(\mathcal{A}\) and the spectral mask \(\mathcal{P}_{\omega}\) are
simultaneously diagonalizable in the Fourier basis, with transfer function
\(a(\xi)\). Let
\[
e_{\mathrm{full}}(\sigma)
=
\frac{\sigma^2}{\sigma_y^2}
\mathcal{A}^\top\mathcal{A}
\Delta_s(\sigma),
\]
and
\[
e_{\omega}(\sigma)
=
\frac{\sigma^2}{\sigma_y^2}
\mathcal{A}^\top
\mathcal{P}_{\omega}^{\top}
\mathcal{P}_{\omega}
\mathcal{A}
\Delta_s(\sigma).
\]
Then
\begin{equation}
\|e_{\mathrm{full}}(\sigma)\|_2^2
-
\|e_{\omega}(\sigma)\|_2^2
=
\frac{\sigma^4}{\sigma_y^4}
\sum_{\xi\notin\Omega_\omega}
|a(\xi)|^4
|\widehat{\Delta_s}(\xi,\sigma)|^2
\ge 0.
\end{equation}
The inequality is strict whenever the score error has nonzero energy on
frequencies outside the exposed passband and the operator has nonzero transfer
there.
\end{corollary}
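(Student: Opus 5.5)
We will work entirely in the unitary Fourier basis, in which both $\mathcal{A}$ and $\mathcal{P}_\omega$ are diagonal by assumption. Write $\mathcal{F}$ for the unitary DFT. Then $\mathcal{A}=\mathcal{F}^{*}\operatorname{diag}(a(\xi))\mathcal{F}$ and $\mathcal{P}_\omega=\mathcal{F}^{*}\operatorname{diag}(\mathbf{1}_{\Omega_\omega}(\xi))\mathcal{F}$, where $\Omega_\omega$ is the exposed passband. Since the DFT is unitary, Parseval lets us compute both squared norms as sums over frequencies of squared Fourier magnitudes. This is the same spectral calculation as in Theorem~\ref{exact}, with the SVD basis $\{\boldsymbol{v}_j\}$ replaced by Fourier modes and singular values replaced by $|a(\xi)|$.

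\textbf{Step 1 (full-band error).} We have $\mathcal{A}^\top\mathcal{A}=\mathcal{A}^{*}\mathcal{A}=\mathcal{F}^{*}\operatorname{diag}(|a(\xi)|^2)\mathcal{F}$. Applying Parseval gives
\[
\|e_{\mathrm{full}}(\sigma)\|_2^2=\frac{\sigma^4}{\sigma_y^4}\sum_{\xi}|a(\xi)|^4|\widehat{\Delta_s}(\xi,\sigma)|^2,
\]
which is exactly the identity of Theorem~\ref{exact} written in Fourier coordinates.

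\textbf{Step 2 (truncated error).} $\mathcal{P}_\omega$ is an orthogonal projector and is diagonal with entries in $\{0,1\}$, so $\mathcal{P}_\omega^\top\mathcal{P}_\omega=\mathcal{P}_\omega$. It also commutes with $\mathcal{A}$ and $\mathcal{A}^\top$. Hence $\mathcal{A}^\top\mathcal{P}_\omega^\top\mathcal{P}_\omega\mathcal{A}$ has Fourier symbol $|a(\xi)|^2\mathbf{1}_{\Omega_\omega}(\xi)$. Squaring the indicator leaves it unchanged, so
\[
\|e_{\omega}(\sigma)\|_2^2=\frac{\sigma^4}{\sigma_y^4}\sum_{\xi\in\Omega_\omega}|a(\xi)|^4|\widehat{\Delta_s}(\xi,\sigma)|^2.
\]

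\textbf{Step 3 (conclusion).} Subtracting the two sums leaves exactly the terms with $\xi\notin\Omega_\omega$. Each of these terms is nonnegative, which proves the identity and the inequality. Strictness holds when at least one frequency $\xi\notin\Omega_\omega$ has both $a(\xi)\neq0$ and $\widehat{\Delta_s}(\xi,\sigma)\neq0$; that term is then strictly positive.

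The only delicate point is in Step 2: justifying that $\mathcal{P}_\omega$ and $\mathcal{A}$ commute and share one real-orthogonal/unitary diagonalization. The difficulty is that $\mathcal{A}$ is real, its Fourier symbol is complex, and the mask must be conjugate-symmetric. A radial mask satisfies this conjugate symmetry, so $\mathcal{P}_\omega$ is a real orthogonal projector and transposes coincide with adjoints. Once that is checked, everything else is Parseval bookkeeping.
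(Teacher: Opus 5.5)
Your proposal is correct and follows the paper's own argument. Both diagonalize $\mathcal{A}$ and $\mathcal{P}_\omega$ in the Fourier basis, read off the coefficients $\frac{\sigma^2}{\sigma_y^2}|a(\xi)|^2\widehat{\Delta_s}(\xi,\sigma)$ and $\frac{\sigma^2}{\sigma_y^2}m_\omega(\xi)|a(\xi)|^2\widehat{\Delta_s}(\xi,\sigma)$, and subtract squared norms via Parseval. Your extra check that the radial mask is conjugate-symmetric, so $\mathcal{P}_\omega$ is a real orthogonal projector with $\mathcal{P}_\omega^\top\mathcal{P}_\omega=\mathcal{P}_\omega$ and transposes equal adjoints, is something the paper leaves implicit in its simultaneous-diagonalization assumption.
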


\begin{proof}
Under the simultaneous diagonalization assumption, \(\mathcal{A}\) acts in the
Fourier domain by multiplication with \(a(\xi)\), and
\(\mathcal{P}_\omega\) acts by the binary mask \(m_\omega(\xi)\). Hence
\[
\widehat{e_{\mathrm{full}}}(\xi,\sigma)
=
\frac{\sigma^2}{\sigma_y^2}
|a(\xi)|^2
\widehat{\Delta_s}(\xi,\sigma),
\]
whereas
\[
\widehat{e_{\omega}}(\xi,\sigma)
=
\frac{\sigma^2}{\sigma_y^2}
m_\omega(\xi)
|a(\xi)|^2
\widehat{\Delta_s}(\xi,\sigma).
\]
Since \(m_\omega(\xi)=1\) for \(\xi\in\Omega_\omega\) and
\(m_\omega(\xi)=0\) otherwise, subtracting the squared norms gives the stated
identity.
\end{proof}

\begin{theorem}[Inexact likelihood gradients can break descent]
\label{thm:unstable}
Let \(\mathcal{E}\) be \(L\)-smooth. Consider an inexact gradient step
\begin{equation}
\boldsymbol{x}^{+}
=
\boldsymbol{x}
-
\alpha
\left(
\nabla \mathcal{E}(\boldsymbol{x})+\boldsymbol{e}
\right),
\end{equation}
where \(\boldsymbol{e}\) is an additive gradient error and \(\alpha>0\). Then
\begin{equation}
\begin{aligned}
\mathcal{E}\left(\boldsymbol{x}^{+}\right)
\le\,
&
\mathcal{E}(\boldsymbol{x})
-
\alpha
\left(
1-\frac{L \alpha}{2}
\right)
\|\nabla \mathcal{E}(\boldsymbol{x})\|_2^2 \\
&+
\alpha(1+L\alpha)
\left|
\left\langle
\nabla \mathcal{E}(\boldsymbol{x}),
\boldsymbol{e}
\right\rangle
\right|
+
\frac{L \alpha^2}{2}
\|\boldsymbol{e}\|_2^2 .
\end{aligned}
\end{equation}
In particular, even if \(\alpha \leq 1/L\), at any non-stationary point
\(\nabla \mathcal{E}(\boldsymbol{x})\neq \boldsymbol{0}\), there exist errors
\(\boldsymbol{e}\) such that
\(\mathcal{E}(\boldsymbol{x}^{+})>\mathcal{E}(\boldsymbol{x})\).
Moreover, a sufficient condition for guaranteed decrease is
\begin{equation}
\alpha \le \frac{1}{L}
\qquad\text{and}\qquad
\|\boldsymbol{e}\|_2
\le
(\sqrt{5}-2)
\|\nabla \mathcal{E}(\boldsymbol{x})\|_2 .
\end{equation}
\end{theorem}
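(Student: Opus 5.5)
The plan is to start from the standard descent lemma for $L$-smooth functions and substitute the inexact step into it. Write $g=\nabla\mathcal{E}(\boldsymbol{x})$ and $\boldsymbol{d}=\boldsymbol{x}^{+}-\boldsymbol{x}=-\alpha(g+\boldsymbol{e})$. $L$-smoothness gives
\[
\mathcal{E}(\boldsymbol{x}^{+})\le \mathcal{E}(\boldsymbol{x})+\langle g,\boldsymbol{d}\rangle+\tfrac{L}{2}\|\boldsymbol{d}\|_2^2 .
\]
Expanding, we have $\langle g,\boldsymbol{d}\rangle=-\alpha\|g\|_2^2-\alpha\langle g,\boldsymbol{e}\rangle$ and $\|\boldsymbol{d}\|_2^2=\alpha^2(\|g\|_2^2+2\langle g,\boldsymbol{e}\rangle+\|\boldsymbol{e}\|_2^2)$. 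Collecting terms yields
\[
\mathcal{E}(\boldsymbol{x}^{+})\le \mathcal{E}(\boldsymbol{x})-\alpha\bigl(1-\tfrac{L\alpha}{2}\bigr)\|g\|_2^2-\alpha(1-L\alpha)\langle g,\boldsymbol{e}\rangle+\tfrac{L\alpha^2}{2}\|\boldsymbol{e}\|_2^2 .
\]
The cross term is then bounded by $|1-L\alpha|\,|\langle g,\boldsymbol{e}\rangle|\le(1+L\alpha)|\langle g,\boldsymbol{e}\rangle|$, which gives the displayed inequality. This step is routine.

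For the existence claim, the upper bound cannot certify an increase, so I would construct the error directly. Fix any small $t>0$ and take
\[
\boldsymbol{e}=-\bigl(1+\tfrac{t}{\alpha}\bigr)g ,
\]
so that $\boldsymbol{x}^{+}=\boldsymbol{x}+t g$. Because $\mathcal{E}$ is differentiable, $\mathcal{E}(\boldsymbol{x}+tg)=\mathcal{E}(\boldsymbol{x})+t\|g\|_2^2+o(t)$. Since $g\neq\boldsymbol{0}$, this is strictly larger than $\mathcal{E}(\boldsymbol{x})$ for all sufficiently small $t$. The argument holds for any $\alpha>0$, in particular for $\alpha\le 1/L$. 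Alternatively, the lower-bound half of $L$-smoothness, $\mathcal{E}(\boldsymbol{x}+tg)\ge\mathcal{E}(\boldsymbol{x})+t\|g\|_2^2-\tfrac{L}{2}t^2\|g\|_2^2$, gives an explicit admissible range $t<2/L$.

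For the sufficient condition, set $\|\boldsymbol{e}\|_2=\kappa\|g\|_2$ and use $\alpha\le 1/L$ to bound the three coefficients: $\alpha(1-\tfrac{L\alpha}{2})\ge\tfrac{\alpha}{2}$, $\alpha(1+L\alpha)\le 2\alpha$, and $\tfrac{L\alpha^2}{2}\le\tfrac{\alpha}{2}$. Applying Cauchy--Schwarz to $|\langle g,\boldsymbol{e}\rangle|$ then gives
\[
\mathcal{E}(\boldsymbol{x}^{+})-\mathcal{E}(\boldsymbol{x})\le \tfrac{\alpha}{2}\|g\|_2^2\,(\kappa^2+4\kappa-1).
\]
The quadratic $\kappa^2+4\kappa-1$ is nonpositive exactly for $0\le\kappa\le\sqrt5-2$, which recovers the stated threshold.

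The only delicate point is the boundary case. At $\kappa=\sqrt5-2$ this chain gives only non-increase. Strict decrease therefore holds for $\kappa<\sqrt5-2$ (with $g\neq\boldsymbol{0}$), or at equality whenever one of the coefficient bounds is strict. I would state "decrease" in the non-strict sense at the endpoint and strict below it.
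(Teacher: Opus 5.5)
Your proposal is correct and follows the paper's proof step for step. Both use the same descent-lemma expansion with the cross term bounded by $(1+L\alpha)\left|\langle \nabla\mathcal{E}(\boldsymbol{x}),\boldsymbol{e}\rangle\right|$, the same adversarial error $\boldsymbol{e}=-(1+t/\alpha)\nabla\mathcal{E}(\boldsymbol{x})$, and the same coefficient bounds under $\alpha\le 1/L$, which lead to the quadratic with root $\sqrt{5}-2$. Your observation that the endpoint gives only non-increase matches the paper, which also concludes only $\mathcal{E}(\boldsymbol{x}^{+})\le\mathcal{E}(\boldsymbol{x})$.
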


\begin{proof}
By \(L\)-smoothness, for any direction \(\boldsymbol{d}\),
\[
\mathcal{E}(\boldsymbol{x}+\boldsymbol{d})
\le
\mathcal{E}(\boldsymbol{x})
+
\left\langle
\nabla \mathcal{E}(\boldsymbol{x}),
\boldsymbol{d}
\right\rangle
+
\frac{L}{2}
\|\boldsymbol{d}\|_2^2 .
\]
Set
\[
\boldsymbol{d}
=
-\alpha
\left(
\nabla \mathcal{E}(\boldsymbol{x})+\boldsymbol{e}
\right).
\]
Then
\[
\begin{aligned}
\mathcal{E}(\boldsymbol{x}^{+})
\le\,
&
\mathcal{E}(\boldsymbol{x})
-
\alpha
\|\nabla \mathcal{E}(\boldsymbol{x})\|_2^2
-
\alpha
\left\langle
\nabla \mathcal{E}(\boldsymbol{x}),
\boldsymbol{e}
\right\rangle \\
&+
\frac{L\alpha^2}{2}
\left\|
\nabla \mathcal{E}(\boldsymbol{x})+\boldsymbol{e}
\right\|_2^2 .
\end{aligned}
\]
Expanding the squared norm gives
\[
\begin{aligned}
\mathcal{E}(\boldsymbol{x}^{+})
\le\,
&
\mathcal{E}(\boldsymbol{x})
-
\alpha
\left(
1-\frac{L\alpha}{2}
\right)
\|\nabla\mathcal{E}(\boldsymbol{x})\|_2^2 \\
&-
\alpha(1-L\alpha)
\left\langle
\nabla\mathcal{E}(\boldsymbol{x}),
\boldsymbol{e}
\right\rangle
+
\frac{L\alpha^2}{2}
\|\boldsymbol{e}\|_2^2 .
\end{aligned}
\]
Since
\[
-\alpha(1-L\alpha)
\left\langle
\nabla\mathcal{E}(\boldsymbol{x}),
\boldsymbol{e}
\right\rangle
\le
\alpha(1+L\alpha)
\left|
\left\langle
\nabla\mathcal{E}(\boldsymbol{x}),
\boldsymbol{e}
\right\rangle
\right|,
\]
we obtain the stated bound.

To show that descent can fail, assume
\(\nabla \mathcal{E}(\boldsymbol{x})\neq \boldsymbol{0}\). For any sufficiently
small \(t>0\), differentiability implies
\[
\mathcal{E}
\left(
\boldsymbol{x}
+
t\nabla \mathcal{E}(\boldsymbol{x})
\right)
>
\mathcal{E}(\boldsymbol{x}).
\]
Choose
\[
\boldsymbol{e}
=
-
\left(
1+\frac{t}{\alpha}
\right)
\nabla \mathcal{E}(\boldsymbol{x}).
\]
Then
\[
\boldsymbol{x}^{+}
=
\boldsymbol{x}
-
\alpha
\left(
\nabla \mathcal{E}(\boldsymbol{x})+\boldsymbol{e}
\right)
=
\boldsymbol{x}
+
t\nabla \mathcal{E}(\boldsymbol{x}),
\]
so \(\mathcal{E}(\boldsymbol{x}^{+})>\mathcal{E}(\boldsymbol{x})\).

For the sufficient decrease condition, assume \(\alpha\le 1/L\). Then
\[
1-\frac{L\alpha}{2}\ge \frac12,
\qquad
1+L\alpha\le 2,
\qquad
\frac{L\alpha^2}{2}\le \frac{\alpha}{2}.
\]
Using
\[
\left|
\left\langle
\nabla \mathcal{E}(\boldsymbol{x}),
\boldsymbol{e}
\right\rangle
\right|
\le
\|\nabla \mathcal{E}(\boldsymbol{x})\|_2
\|\boldsymbol{e}\|_2,
\]
we obtain
\[
\mathcal{E}(\boldsymbol{x}^{+})-\mathcal{E}(\boldsymbol{x})
\le
\alpha
\|\nabla \mathcal{E}(\boldsymbol{x})\|_2^2
\left(
-\frac12
+
2\rho
+
\frac{\rho^2}{2}
\right),
\qquad
\rho
:=
\frac{\|\boldsymbol{e}\|_2}
{\|\nabla \mathcal{E}(\boldsymbol{x})\|_2}.
\]
The quadratic
\[
-\frac12+2\rho+\frac{\rho^2}{2}\le 0
\]
holds for \(\rho\in[0,\sqrt{5}-2]\). Therefore
\[
\|\boldsymbol{e}\|_2
\le
(\sqrt{5}-2)
\|\nabla \mathcal{E}(\boldsymbol{x})\|_2
\]
guarantees
\[
\mathcal{E}(\boldsymbol{x}^{+})
\le
\mathcal{E}(\boldsymbol{x}).
\]
\end{proof}

\begin{corollary}[Band-limited likelihood reduces curvature and sensitivity]
\label{sensitive}
Define a filtered measurement energy
\begin{equation}
\mathcal{E}_{\omega}(\boldsymbol{x})
\triangleq
\frac{1}{2\sigma_y^2}
\left\|
\mathcal{P}_{\omega}
(
\mathcal{A}\boldsymbol{x}-\boldsymbol{y}
)
\right\|_2^2,
\end{equation}
where \(\mathcal{P}_{\omega}\) is a linear operator with
\(\|\mathcal{P}_{\omega}\|_2\le 1\), such as the ideal orthogonal low-pass
projector associated with Eq.~\eqref{eq:PF_def}. Then
\(\mathcal{E}_{\omega}\) is \(L_{\omega}\)-smooth with
\begin{equation}
L_{\omega}
=
\frac{1}{\sigma_y^2}
\left\|
\mathcal{A}^{\top}
\mathcal{P}_{\omega}^{\top}
\mathcal{P}_{\omega}
\mathcal{A}
\right\|_2
\le
\frac{\|\mathcal{A}\|_2^2}{\sigma_y^2}
\|\mathcal{P}_{\omega}\|_2^2
\le
\frac{\|\mathcal{A}\|_2^2}{\sigma_y^2}
=
L.
\end{equation}
Consequently, the gradient Lipschitz constant and the smoothness-driven
step-size restriction are reduced under the band-limited likelihood. This helps
explain why full-band enforcement at high noise levels can be more schedule- and
operator-sensitive.
\end{corollary}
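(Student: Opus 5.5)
The proof mirrors Lemma~\ref{lem:cond_smooth}, with the filtered residual map $\mathcal{P}_\omega\mathcal{A}$ in place of $\mathcal{A}$. The plan is to compute the gradient and Hessian of $\mathcal{E}_\omega$ explicitly, read off the exact smoothness constant, and then compare operator norms using submultiplicativity.

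First, $\mathcal{E}_\omega(\boldsymbol{x})=\frac{1}{2\sigma_y^2}\|\mathcal{P}_\omega\mathcal{A}\boldsymbol{x}-\mathcal{P}_\omega\boldsymbol{y}\|_2^2$ is a quadratic in $\boldsymbol{x}$ whose operator is $B:=\mathcal{P}_\omega\mathcal{A}$. Differentiating gives
\[
\nabla\mathcal{E}_\omega(\boldsymbol{x})=\frac{1}{\sigma_y^2}\mathcal{A}^\top\mathcal{P}_\omega^\top\mathcal{P}_\omega(\mathcal{A}\boldsymbol{x}-\boldsymbol{y}),\qquad
\nabla^2\mathcal{E}_\omega=\frac{1}{\sigma_y^2}\mathcal{A}^\top\mathcal{P}_\omega^\top\mathcal{P}_\omega\mathcal{A}.
\]
The Hessian is constant, so
\[
\nabla\mathcal{E}_\omega(\boldsymbol{x}_1)-\nabla\mathcal{E}_\omega(\boldsymbol{x}_2)=\nabla^2\mathcal{E}_\omega\,(\boldsymbol{x}_1-\boldsymbol{x}_2).
\]
Taking operator norms shows that $\mathcal{E}_\omega$ is $L_\omega$-smooth with $L_\omega=\sigma_y^{-2}\|\mathcal{A}^\top\mathcal{P}_\omega^\top\mathcal{P}_\omega\mathcal{A}\|_2$. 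This constant is tight, because the Hessian is symmetric positive semidefinite.

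Second, I bound $L_\omega$. Since $\|B^\top B\|_2=\|B\|_2^2$ and $\|B\|_2\le\|\mathcal{P}_\omega\|_2\|\mathcal{A}\|_2$, we get
\[
L_\omega\le\sigma_y^{-2}\|\mathcal{A}\|_2^2\|\mathcal{P}_\omega\|_2^2.
\]
The hypothesis $\|\mathcal{P}_\omega\|_2\le1$ then gives $L_\omega\le\|\mathcal{A}\|_2^2/\sigma_y^2=L$, with $L$ the constant from Lemma~\ref{lem:cond_smooth}. For the ideal orthogonal projector, $\mathcal{P}_\omega^\top\mathcal{P}_\omega=\mathcal{P}_\omega$ and $\|\mathcal{P}_\omega\|_2\le1$ holds automatically. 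The masked spectrum $P^F_\omega$ gives the same energy by Parseval under the unitary normalization, so the bound applies to the implemented loss as well.

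Third, the ``consequently'' clause is interpretive. Any smoothness-based step-size restriction, for example $\alpha\le 1/L$ in Theorem~\ref{thm:unstable}, relaxes to $\alpha\le1/L_\omega$, which is no more restrictive. The same calculation bounds the gradient error induced by a perturbation $\boldsymbol{\delta}$: it is $\nabla^2\mathcal{E}_\omega\boldsymbol{\delta}$, whose norm is at most $L_\omega\|\boldsymbol{\delta}\|_2$. This is the band-limited analogue of Theorem~\ref{gradient_mismatch}.

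The only point needing care is that $\mathcal{P}_\omega$ may be complex-valued, as with the Fourier mask. In that case I would replace the transpose by the adjoint and use that the real part of a Hermitian quadratic form gives a real symmetric Hessian. Beyond that, no step is a real obstacle.
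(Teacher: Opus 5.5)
Your proposal is correct and is the argument the paper relies on: the paper gives no separate proof of this corollary, since it is the Hessian computation of Lemma~\ref{lem:cond_smooth} with $\mathcal{A}$ replaced by $B=\mathcal{P}_\omega\mathcal{A}$, followed by $\|B^\top B\|_2=\|B\|_2^2\le\|\mathcal{P}_\omega\|_2^2\|\mathcal{A}\|_2^2$. Your remark on handling a complex Fourier mask via the adjoint is a reasonable refinement; it is not needed here, because the paper works with the real residual-space orthogonal projector $\mathcal{P}_\omega$.
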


\begin{corollary}[Mixed frequency exposure directly attenuates likelihood-gradient error]
\label{cor:mixed_gradient_error}
Let the frequency-exposure objective at outer step \(i\) be
\begin{equation}
\mathcal{E}_i(\boldsymbol{x})
=
\frac{1}{2\sigma_y^2}
\left[
(1-\lambda_i)
\|\mathcal{A}\boldsymbol{x}-\boldsymbol{y}\|_2^2
+
\lambda_i
\left\|
\mathcal{P}_{\omega_i}
(
\mathcal{A}\boldsymbol{x}-\boldsymbol{y}
)
\right\|_2^2
\right],
\end{equation}
where \(\mathcal{P}_{\omega_i}\) is the orthogonal frequency projector onto the
exposed passband and \(\lambda_i\in[0,1]\). Define
\begin{equation}
Q_i
:=
(1-\lambda_i)\boldsymbol{I}
+
\lambda_i
\mathcal{P}_{\omega_i}^{\top}
\mathcal{P}_{\omega_i}.
\end{equation}
Assume that the clean-estimate error induced by the score approximation satisfies
\begin{equation}
\hat{\boldsymbol{x}}_0
-
\boldsymbol{x}_0^\star
=
\sigma^2
\Delta_s(\boldsymbol{x},\sigma),
\end{equation}
where \(\boldsymbol{x}_0^\star\) denotes the oracle clean estimate and
\(\Delta_s(\boldsymbol{x},\sigma)\) denotes the score-estimation error. Then the
likelihood-gradient mismatch satisfies
\begin{equation}
\boldsymbol{e}_i(\sigma)
:=
\nabla \mathcal{E}_i(\hat{\boldsymbol{x}}_0)
-
\nabla \mathcal{E}_i(\boldsymbol{x}_0^\star)
=
\frac{\sigma^2}{\sigma_y^2}
\mathcal{A}^{\top}
Q_i
\mathcal{A}
\Delta_s(\boldsymbol{x},\sigma).
\end{equation}
Moreover, if \(\mathcal{A}\) and \(\mathcal{P}_{\omega_i}\) are simultaneously
diagonalized in the Fourier basis, with transfer function \(a(\xi)\) and binary
mask \(m_{\omega_i}(\xi)\in\{0,1\}\), then
\begin{equation}
\|\boldsymbol{e}_i(\sigma)\|_2^2
=
\frac{\sigma^4}{\sigma_y^4}
\sum_{\xi}
q_i(\xi)^2
|a(\xi)|^4
|\widehat{\Delta_s}(\xi,\sigma)|^2,
\end{equation}
where
\begin{equation}
q_i(\xi)
=
(1-\lambda_i)
+
\lambda_i m_{\omega_i}(\xi).
\end{equation}
Consequently, compared with the full-band likelihood-gradient mismatch
\begin{equation}
\|\boldsymbol{e}_{\mathrm{full}}(\sigma)\|_2^2
=
\frac{\sigma^4}{\sigma_y^4}
\sum_{\xi}
|a(\xi)|^4
|\widehat{\Delta_s}(\xi,\sigma)|^2,
\end{equation}
the mixed frequency-exposure objective satisfies
\begin{equation}
\|\boldsymbol{e}_{\mathrm{full}}(\sigma)\|_2^2
-
\|\boldsymbol{e}_i(\sigma)\|_2^2
=
\frac{\sigma^4}{\sigma_y^4}
\sum_{\xi}
\left(1-q_i(\xi)^2\right)
|a(\xi)|^4
|\widehat{\Delta_s}(\xi,\sigma)|^2
\ge 0.
\end{equation}
For an ideal binary low-pass mask, this reduction can be written as
\begin{equation}
\|\boldsymbol{e}_{\mathrm{full}}(\sigma)\|_2^2
-
\|\boldsymbol{e}_i(\sigma)\|_2^2
=
\frac{\sigma^4}{\sigma_y^4}
\sum_{\xi\notin\Omega_{\omega_i}}
\left[1-(1-\lambda_i)^2\right]
|a(\xi)|^4
|\widehat{\Delta_s}(\xi,\sigma)|^2
\ge 0.
\end{equation}
Thus, frequency exposure directly attenuates the likelihood-gradient error by
suppressing the contribution of unexposed frequency bands. This effect is
distinct from, and complementary to, the reduction of the effective curvature and
the resulting relaxation of smoothness-driven step-size constraints.
\end{corollary}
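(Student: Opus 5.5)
The proof follows the template of Theorem~\ref{exact} and Corollary~\ref{truncation}; the only new ingredient is the mixing operator \(Q_i\). First I would compute the gradient of \(\mathcal{E}_i\) directly. Since both terms are quadratic in \(\boldsymbol{x}\), we get
\[
\nabla\mathcal{E}_i(\boldsymbol{x})=\frac{1}{\sigma_y^2}\Big[(1-\lambda_i)\mathcal{A}^\top(\mathcal{A}\boldsymbol{x}-\boldsymbol{y})+\lambda_i\mathcal{A}^\top\mathcal{P}_{\omega_i}^\top\mathcal{P}_{\omega_i}(\mathcal{A}\boldsymbol{x}-\boldsymbol{y})\Big]=\frac{1}{\sigma_y^2}\mathcal{A}^\top Q_i(\mathcal{A}\boldsymbol{x}-\boldsymbol{y}).
\]
The gradient is affine in \(\boldsymbol{x}\), so the \(\boldsymbol{y}\)-terms cancel in the difference. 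This gives \(\boldsymbol{e}_i=\sigma_y^{-2}\mathcal{A}^\top Q_i\mathcal{A}(\hat{\boldsymbol{x}}_0-\boldsymbol{x}_0^\star)\). Substituting the assumed identity \(\hat{\boldsymbol{x}}_0-\boldsymbol{x}_0^\star=\sigma^2\Delta_s\), as in Lemma~\ref{lem:tweedie_amp}, yields the first claimed formula.

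Next I would pass to the Fourier domain under the simultaneous-diagonalization assumption, exactly as in the proof of Corollary~\ref{truncation}. There \(\mathcal{A}\) acts by multiplication with \(a(\xi)\), and \(\mathcal{A}^\top\) by \(\overline{a(\xi)}\) (the unitary DFT makes the adjoint the complex conjugate). The projector \(\mathcal{P}_{\omega_i}^\top\mathcal{P}_{\omega_i}\) acts by \(m_{\omega_i}(\xi)^2=m_{\omega_i}(\xi)\), so \(Q_i\) acts by \(q_i(\xi)\). Hence
\[
\widehat{\boldsymbol{e}_i}(\xi)=\frac{\sigma^2}{\sigma_y^2}q_i(\xi)|a(\xi)|^2\widehat{\Delta_s}(\xi,\sigma),
\]
and Parseval's identity, available because of the normalized transform, gives the sum formula for \(\|\boldsymbol{e}_i\|_2^2\). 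The full-band formula is the special case \(\lambda_i=0\), i.e.\ \(q\equiv1\); it agrees with Theorem~\ref{exact} and Corollary~\ref{truncation}.

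Subtracting the two sums termwise gives the factor \(1-q_i(\xi)^2\). For nonnegativity I note that \(q_i(\xi)\in[1-\lambda_i,1]\subset[0,1]\), because \(\lambda_i\in[0,1]\) and \(m\in\{0,1\}\). Each summand is therefore nonnegative. For the binary-mask form I split over \(\xi\in\Omega_{\omega_i}\), where \(m=1\), \(q=1\) and the term vanishes, and \(\xi\notin\Omega_{\omega_i}\), where \(q=1-\lambda_i\). The outside terms give the factor \(1-(1-\lambda_i)^2\).

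I expect no real obstacle. The only points needing care are bookkeeping: the adjoint in the complex Fourier domain (using \(|a|^2\) rather than \(a^2\)), the idempotence \(m^2=m\), and the fact that \(\mathcal{P}^\top\) for the real-space orthogonal projector matches the conjugate mask. For that last point I would state that a real orthogonal projector is self-adjoint, so \(\mathcal{P}^\top\mathcal{P}=\mathcal{P}\).
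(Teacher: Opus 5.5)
Your proposal is correct and follows the paper's proof essentially step for step: compute $\nabla\mathcal{E}_i(\boldsymbol{x})=\sigma_y^{-2}\mathcal{A}^\top Q_i(\mathcal{A}\boldsymbol{x}-\boldsymbol{y})$, take the difference and substitute $\sigma^2\Delta_s$, diagonalize in the Fourier basis so that $Q_i$ acts by $q_i(\xi)$, and then split the sum over the passband and its complement using $q_i(\xi)\in[0,1]$. Your extra bookkeeping (the adjoint giving $|a(\xi)|^2$, and $m^2=m$ from the projector being self-adjoint and idempotent) makes explicit what the paper's proof uses implicitly.
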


\begin{proof}
The gradient of \(\mathcal{E}_i\) is
\begin{equation}
\nabla \mathcal{E}_i(\boldsymbol{x})
=
\frac{1}{\sigma_y^2}
\mathcal{A}^{\top}
\left[
(1-\lambda_i)\boldsymbol{I}
+
\lambda_i
\mathcal{P}_{\omega_i}^{\top}
\mathcal{P}_{\omega_i}
\right]
(
\mathcal{A}\boldsymbol{x}-\boldsymbol{y}
)
=
\frac{1}{\sigma_y^2}
\mathcal{A}^{\top}
Q_i
(
\mathcal{A}\boldsymbol{x}-\boldsymbol{y}
).
\end{equation}
Therefore,
\begin{align}
\boldsymbol{e}_i(\sigma)
&=
\nabla \mathcal{E}_i(\hat{\boldsymbol{x}}_0)
-
\nabla \mathcal{E}_i(\boldsymbol{x}_0^\star)
\\
&=
\frac{1}{\sigma_y^2}
\mathcal{A}^{\top}
Q_i
\mathcal{A}
(
\hat{\boldsymbol{x}}_0-\boldsymbol{x}_0^\star
).
\end{align}
Using
\[
\hat{\boldsymbol{x}}_0-\boldsymbol{x}_0^\star
=
\sigma^2
\Delta_s(\boldsymbol{x},\sigma)
\]
gives
\begin{equation}
\boldsymbol{e}_i(\sigma)
=
\frac{\sigma^2}{\sigma_y^2}
\mathcal{A}^{\top}
Q_i
\mathcal{A}
\Delta_s(\boldsymbol{x},\sigma).
\end{equation}

When \(\mathcal{A}\) and \(\mathcal{P}_{\omega_i}\) are diagonalized in the
Fourier basis, \(\mathcal{A}\) acts by multiplication with \(a(\xi)\), and
\(Q_i\) acts by multiplication with
\[
q_i(\xi)
=
(1-\lambda_i)
+
\lambda_i m_{\omega_i}(\xi).
\]
Hence the Fourier coefficient of \(\boldsymbol{e}_i(\sigma)\) at frequency
\(\xi\) is
\[
\widehat{\boldsymbol{e}_i}(\xi,\sigma)
=
\frac{\sigma^2}{\sigma_y^2}
q_i(\xi)
|a(\xi)|^2
\widehat{\Delta_s}(\xi,\sigma),
\]
which implies
\[
\|\boldsymbol{e}_i(\sigma)\|_2^2
=
\frac{\sigma^4}{\sigma_y^4}
\sum_{\xi}
q_i(\xi)^2
|a(\xi)|^4
|\widehat{\Delta_s}(\xi,\sigma)|^2.
\]
The full-band case corresponds to \(q_i(\xi)=1\) for all \(\xi\). Subtracting
the mixed-objective error from the full-band error yields
\[
\|\boldsymbol{e}_{\mathrm{full}}(\sigma)\|_2^2
-
\|\boldsymbol{e}_i(\sigma)\|_2^2
=
\frac{\sigma^4}{\sigma_y^4}
\sum_{\xi}
\left(1-q_i(\xi)^2\right)
|a(\xi)|^4
|\widehat{\Delta_s}(\xi,\sigma)|^2.
\]
Since \(\lambda_i\in[0,1]\) and \(m_{\omega_i}(\xi)\in\{0,1\}\), we have
\(q_i(\xi)\in[0,1]\), and therefore \(1-q_i(\xi)^2\ge 0\).

For an ideal binary low-pass mask, \(m_{\omega_i}(\xi)=1\) for
\(\xi\in\Omega_{\omega_i}\) and \(m_{\omega_i}(\xi)=0\) for
\(\xi\notin\Omega_{\omega_i}\). Thus \(q_i(\xi)=1\) inside the exposed passband
and \(q_i(\xi)=1-\lambda_i\) outside it. The reduction therefore vanishes inside
\(\Omega_{\omega_i}\) and equals \(1-(1-\lambda_i)^2\) outside the passband.
\end{proof}

\begin{lemma}[Non-expansiveness of spectral masking]
\label{lem:spectral_nonexp}
Let \(P^F_{\omega}\) be defined in Eq.~\eqref{eq:PF_def} with the unitary
normalization of \(\mathcal{F}\). Then for any residual \(\boldsymbol{r}\),
\begin{equation}
\|P^F_{\omega}(\boldsymbol{r})\|_2
\le
\|\boldsymbol{r}\|_2.
\end{equation}
\end{lemma}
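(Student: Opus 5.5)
The plan is to factor \(P^F_\omega\) into three maps and check that each has operator norm at most one with respect to the sum-of-squared-magnitudes norm \(\|Z\|_2^2=\sum_{c,u,v}|Z_{c,u,v}|^2\) fixed in Sec.~\ref{log_density}. Concretely, I will write
\[
P^F_\omega(\boldsymbol{r}) = M_\omega \odot \mathcal{J}\big(\mathcal{F}(\boldsymbol{r})\big),
\]
so the three factors are the unitary DFT \(\mathcal{F}\), the shift \(\mathcal{J}\), and pointwise multiplication by the binary mask \(M_\omega\). I will then chain the three estimates.

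\emph{Step 1.} With the \(1/\sqrt{HW}\) normalization, \(\mathcal{F}\) acts channelwise as a unitary matrix on \(\mathbb{C}^{H\times W}\). This follows from the orthogonality relation
\[
\sum_{h}\exp\!\big(2\pi i (u-u')h/H\big)=H\,\mathbf{1}(u\equiv u' \bmod H),
\]
together with its analogue in \(w\). Summing over channels then gives Parseval's identity, \(\|\mathcal{F}(\boldsymbol{r})\|_2=\|\boldsymbol{r}\|_2\).

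\emph{Step 2.} The shift \(\mathcal{J}\) is a cyclic permutation of the frequency indices \((u,v)\) within each channel. A permutation only reorders the entries, so it leaves the multiset of magnitudes \(|Z_{c,u,v}|\) unchanged and is therefore an isometry.

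\emph{Step 3.} Since \(M_\omega(u,v)\in\{0,1\}\), applying the mask gives
\[
\big|M_\omega(u,v)\,Z_{c,u,v}\big|^2 \le |Z_{c,u,v}|^2
\]
entrywise. Summing over \((c,u,v)\) yields \(\|M_\omega\odot Z\|_2\le\|Z\|_2\).

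Chaining the three steps gives
\[
\|P^F_\omega(\boldsymbol{r})\|_2 \le \|\mathcal{J}\mathcal{F}(\boldsymbol{r})\|_2 = \|\mathcal{F}(\boldsymbol{r})\|_2 = \|\boldsymbol{r}\|_2 .
\]
A byproduct is that equality holds exactly when the spectrum of \(\boldsymbol{r}\) vanishes outside the passband, which matches the band-limited residual-energy interpretation stated after Eq.~\eqref{eq:PF_def}.

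The argument is routine, and the only point that needs care is Step 1. There I must use that the normalization is the unitary one, \(1/\sqrt{HW}\) rather than the unnormalized DFT. I must also confirm that Parseval's identity holds for real inputs regarded as complex vectors, and that the mask is broadcast across channels rather than mixing them. For resolution-changing operators the same argument applies verbatim on the measurement grid \(H'\times W'\).
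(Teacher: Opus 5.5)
Your proposal is correct and takes essentially the same approach as the paper, which also factors \(P^F_\omega\) into the unitary DFT (Parseval), the norm-preserving frequency-shift permutation, and the binary mask, treated as an orthogonal projection in the Fourier domain. Your extra details are correct elaborations of that argument: the orthogonality relation establishing unitarity, the entrywise bound for the mask, and the equality case.
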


\begin{proof}
The frequency shift is a permutation and hence norm-preserving. With the chosen
normalization, \(\mathcal{F}\) is unitary, and Parseval's identity gives
\[
\|\mathcal{F}(\boldsymbol{r})\|_2
=
\|\boldsymbol{r}\|_2.
\]
Multiplication by \(M_{\omega}\) is an orthogonal projection in the Fourier
domain and is therefore non-expansive.
\end{proof}

\begin{theorem}[Low-pass-only likelihood induces non-identifiability]
\label{thm:nonident}
Fix a cutoff radius \(\omega\) and define the low-pass loss
\begin{equation}
\mathcal{L}_{\mathrm{freq}}(\boldsymbol{x})
=
\left\|
P^F_{\omega}
(
\mathcal{A}\boldsymbol{x}-\boldsymbol{y}
)
\right\|_2^2.
\end{equation}
If there exists \(\boldsymbol{d}\neq \boldsymbol{0}\) such that
\[
P^F_{\omega}(\mathcal{A}\boldsymbol{d})=\boldsymbol{0},
\]
then
\begin{equation}
\mathcal{L}_{\mathrm{freq}}(\boldsymbol{x}+\boldsymbol{d})
=
\mathcal{L}_{\mathrm{freq}}(\boldsymbol{x})
\qquad
\text{for all }\boldsymbol{x}.
\end{equation}
Hence the low-pass data term alone cannot distinguish \(\boldsymbol{x}\) along
the subspace \(\ker(P^F_{\omega}\mathcal{A})\).
\end{theorem}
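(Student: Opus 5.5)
The argument is a direct linearity computation, so the plan is short. First observe that the map \(\boldsymbol{r}\mapsto P^F_{\omega}(\boldsymbol{r})\) is linear. The shift \(\mathcal{J}\) is a coordinate permutation, \(\mathcal{F}\) is linear, and multiplication by the fixed mask \(M_\omega\) is linear, as already used in Lemma~\ref{lem:spectral_nonexp}. Since \(\mathcal{A}\) is linear in the analysis setting of this section, the composition \(\boldsymbol{x}\mapsto P^F_{\omega}(\mathcal{A}\boldsymbol{x})\) is linear as well.

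Next, write the residual at the shifted point as \(\mathcal{A}(\boldsymbol{x}+\boldsymbol{d})-\boldsymbol{y}=(\mathcal{A}\boldsymbol{x}-\boldsymbol{y})+\mathcal{A}\boldsymbol{d}\). Apply \(P^F_{\omega}\) and use linearity to get
\[
P^F_{\omega}\big(\mathcal{A}(\boldsymbol{x}+\boldsymbol{d})-\boldsymbol{y}\big)
=
P^F_{\omega}(\mathcal{A}\boldsymbol{x}-\boldsymbol{y})
+
P^F_{\omega}(\mathcal{A}\boldsymbol{d}).
\]
By hypothesis the second term is \(\boldsymbol{0}\), so the masked spectra at \(\boldsymbol{x}\) and \(\boldsymbol{x}+\boldsymbol{d}\) coincide entrywise. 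Taking the sum of squared magnitudes gives \(\mathcal{L}_{\mathrm{freq}}(\boldsymbol{x}+\boldsymbol{d})=\mathcal{L}_{\mathrm{freq}}(\boldsymbol{x})\) for every \(\boldsymbol{x}\).

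For the non-identifiability conclusion, note that \(\ker(P^F_{\omega}\mathcal{A})\) is a linear subspace, being the kernel of a linear map. Any \(t\boldsymbol{d}\), or any element of this kernel, satisfies the same hypothesis. Hence \(\mathcal{L}_{\mathrm{freq}}\) is constant on every affine slice \(\boldsymbol{x}+\ker(P^F_{\omega}\mathcal{A})\). Equivalently, its gradient is orthogonal to that kernel, and the data term alone cannot separate points along it.

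There is no genuine obstacle. The only point requiring care is that \(P^F_{\omega}\) outputs complex spectra, so linearity must be taken over \(\mathbb{R}\)-inputs with complex outputs, and \(\|\cdot\|_2^2\) must be understood as the sum of squared magnitudes defined earlier. Both are immediate from the definitions.
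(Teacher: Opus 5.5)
Your proposal is correct and follows the paper's own argument: linearity of \(P^F_{\omega}\) splits the shifted residual as \(P^F_{\omega}(\mathcal{A}\boldsymbol{x}-\boldsymbol{y})+P^F_{\omega}(\mathcal{A}\boldsymbol{d})\), and the hypothesis kills the second term. Your remarks on \(\ker(P^F_{\omega}\mathcal{A})\) being a subspace and on \(\mathbb{R}\)-linearity with complex outputs are harmless additions to the same route.
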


\begin{proof}
By linearity of \(P^F_{\omega}\),
\[
\mathcal{L}_{\mathrm{freq}}(\boldsymbol{x}+\boldsymbol{d})
=
\left\|
P^F_{\omega}
(
\mathcal{A}\boldsymbol{x}-\boldsymbol{y}
)
+
P^F_{\omega}
(
\mathcal{A}\boldsymbol{d}
)
\right\|_2^2.
\]
If \(P^F_{\omega}(\mathcal{A}\boldsymbol{d})=\boldsymbol{0}\), the value is
unchanged.
\end{proof}

%========================================================
% Lemma: Haar detail insensitivity
%========================================================
\begin{lemma}[Low-pass data term is insensitive to Haar details under idealization]
\label{lp_insensitive_detail}
Let \(W\) denote an orthonormal Haar transform and let
\[
\boldsymbol{z}=W\boldsymbol{x}
\]
be its coefficients, with
\[
\boldsymbol{z}
=
(\boldsymbol{z}_{\mathrm{c}},\boldsymbol{z}_{\mathrm{d}})
\]
denoting coarse and detail blocks. Let \(\mathcal{P}_{\omega_i}\) denote the
ideal low-pass orthogonal projector induced by the spectral mask in
Sec.~\ref{log_density} with cutoff radius \(\omega_i\). Assume that the
composition
\[
\mathcal{P}_{\omega_i}\circ\mathcal{A}\circ W^\top
\]
annihilates Haar-detail components in the idealized early regime, i.e.,
\begin{equation}
\mathcal{P}_{\omega_i}
\mathcal{A}
W^\top
(\boldsymbol{0},\boldsymbol{z}_{\mathrm{d}})
=
\boldsymbol{0}
\qquad
\text{for all } \boldsymbol{z}_{\mathrm{d}}.
\end{equation}
Then the low-pass residual depends only on \(\boldsymbol{z}_{\mathrm{c}}\), and
\begin{equation}
\nabla_{\boldsymbol{z}_{\mathrm{d}}}
\left\|
\mathcal{P}_{\omega_i}
\left(
\mathcal{A}(W^\top \boldsymbol{z})
-
\boldsymbol{y}
\right)
\right\|_2^2
=
\boldsymbol{0}.
\end{equation}
When the annihilation condition holds approximately, the gradient along the
Haar-detail coordinates is correspondingly small.
\end{lemma}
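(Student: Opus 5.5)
The argument is a direct linear-algebra computation in the Haar coordinates. By orthonormality of $W$ we have $W^\top W=\boldsymbol{I}$. Any coefficient vector splits as
\[
\boldsymbol{z}=(\boldsymbol{z}_{\mathrm{c}},\boldsymbol{0})+(\boldsymbol{0},\boldsymbol{z}_{\mathrm{d}}).
\]
Since $\mathcal{P}_{\omega_i}$, $\mathcal{A}$ and $W^\top$ are all linear, the low-pass residual decomposes as
\[
\mathcal{P}_{\omega_i}\bigl(\mathcal{A}W^\top\boldsymbol{z}-\boldsymbol{y}\bigr)
=\mathcal{P}_{\omega_i}\bigl(\mathcal{A}W^\top(\boldsymbol{z}_{\mathrm{c}},\boldsymbol{0})-\boldsymbol{y}\bigr)
+\mathcal{P}_{\omega_i}\mathcal{A}W^\top(\boldsymbol{0},\boldsymbol{z}_{\mathrm{d}}).
\]
The annihilation hypothesis kills the second term for every $\boldsymbol{z}_{\mathrm{d}}$. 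The residual is therefore a function of $\boldsymbol{z}_{\mathrm{c}}$ alone, which is the first claim. This is the same mechanism as Theorem~\ref{thm:nonident}, now with the whole detail block lying in $\ker(\mathcal{P}_{\omega_i}\mathcal{A}W^\top)$.

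For the gradient statement, set $B:=\mathcal{P}_{\omega_i}\mathcal{A}W^\top$ and write it blockwise as $B=[B_{\mathrm{c}}\;B_{\mathrm{d}}]$. The hypothesis is exactly $B_{\mathrm{d}}=0$. The chain rule gives
\[
\nabla_{\boldsymbol{z}_{\mathrm{d}}}\|B\boldsymbol{z}-\mathcal{P}_{\omega_i}\boldsymbol{y}\|_2^2
=2B_{\mathrm{d}}^\top\bigl(B\boldsymbol{z}-\mathcal{P}_{\omega_i}\boldsymbol{y}\bigr)=\boldsymbol{0}.
\]
Alternatively, a function that does not depend on $\boldsymbol{z}_{\mathrm{d}}$ has zero partial gradient in $\boldsymbol{z}_{\mathrm{d}}$. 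If the residual is complex, as in the $P^F$ spectral form, I would first invoke the unitary Parseval equivalence between $P^F_\omega$ and $\mathcal{P}_\omega$ stated before Corollary~\ref{sensitive}. This lets me work with the real orthogonal projector, so that $B^\top$ is the ordinary adjoint.

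For the approximate statement, I would make "approximately" precise as $\|B_{\mathrm{d}}\|_2\le\delta$. The same formula then gives
\[
\|\nabla_{\boldsymbol{z}_{\mathrm{d}}}\|\le 2\delta\,\|\mathcal{P}_{\omega_i}(\mathcal{A}W^\top\boldsymbol{z}-\boldsymbol{y})\|_2\le 2\delta\,\|\mathcal{A}W^\top\boldsymbol{z}-\boldsymbol{y}\|_2,
\]
where the last step uses Lemma~\ref{lem:spectral_nonexp}, i.e. $\|\mathcal{P}_{\omega_i}\|\le1$. The bound is linear in $\delta$, which gives "correspondingly small."

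I do not expect a real obstacle. The only delicate point is this last step: choosing a quantitative meaning for "approximately," and making sure the adjoint and Parseval bookkeeping is consistent when passing between the complex spectral mask and the real projector.
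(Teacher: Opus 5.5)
Your proof is correct and follows the paper's argument: you split $\boldsymbol{z}$ by linearity, use the annihilation hypothesis to remove the detail contribution, and conclude that the low-pass residual, and hence its squared norm, is constant in $\boldsymbol{z}_{\mathrm{d}}$, so the partial gradient vanishes. Your explicit block formula $2B_{\mathrm{d}}^\top(B\boldsymbol{z}-\mathcal{P}_{\omega_i}\boldsymbol{y})$ and the resulting $2\delta$ bound also make the ``approximately'' clause quantitative, which the paper asserts without proof.
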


\begin{proof}
By linearity,
\[
W^\top(\boldsymbol{z}_{\mathrm{c}},\boldsymbol{z}_{\mathrm{d}})
=
W^\top(\boldsymbol{z}_{\mathrm{c}},\boldsymbol{0})
+
W^\top(\boldsymbol{0},\boldsymbol{z}_{\mathrm{d}}).
\]
The annihilation assumption gives
\[
\mathcal{P}_{\omega_i}
\mathcal{A}
W^\top
(\boldsymbol{0},\boldsymbol{z}_{\mathrm{d}})
=
\boldsymbol{0}.
\]
Therefore,
\[
\mathcal{P}_{\omega_i}
\left(
\mathcal{A}
W^\top
(\boldsymbol{z}_{\mathrm{c}},\boldsymbol{z}_{\mathrm{d}})
-
\boldsymbol{y}
\right)
=
\mathcal{P}_{\omega_i}
\left(
\mathcal{A}
W^\top
(\boldsymbol{z}_{\mathrm{c}},\boldsymbol{0})
-
\boldsymbol{y}
\right),
\]
which is independent of \(\boldsymbol{z}_{\mathrm{d}}\). Hence the squared
low-pass residual is constant along the Haar-detail coordinates, and its
gradient with respect to \(\boldsymbol{z}_{\mathrm{d}}\) is zero.
\end{proof}

%========================================================
% Corollary: prior-dominated details
%========================================================
\begin{corollary}[Detail coefficients remain prior-dominated in the idealized low-pass regime]
\label{detail_prior_dominated}
This corollary analyzes the low-pass component of the intermediate posterior and
is intended to explain why detail directions are weakly constrained early in the
trajectory. Consider the idealized low-pass component of the intermediate target
at outer step \(i\),
\begin{equation}
\pi_i(\boldsymbol{x}_0)
\propto
\exp\left(
-
\frac{1}{2\beta_i^2}
\left\|
\mathcal{P}_{\omega_i}
\left(
\mathcal{A}(\boldsymbol{x}_0)-\boldsymbol{y}
\right)
\right\|_2^2
\right)
\cdot
\mathcal{N}
\left(
\boldsymbol{x}_0;
\hat{\boldsymbol{x}}_{0,i},
\sigma_i^2\boldsymbol{I}
\right).
\end{equation}
Let
\[
\boldsymbol{z}=W\boldsymbol{x}_0,
\qquad
\hat{\boldsymbol{z}}_i
=
W\hat{\boldsymbol{x}}_{0,i}.
\]
Under the conditions of Lemma~\ref{lp_insensitive_detail}, the conditional
distribution of detail coefficients is approximately
\begin{equation}
\boldsymbol{z}_{\mathrm{d}}
\mid
(\boldsymbol{y},\boldsymbol{z}_{\mathrm{c}})
\approx
\mathcal{N}
\left(
\hat{\boldsymbol{z}}_{\mathrm{d},i},
\sigma_i^2\boldsymbol{I}
\right).
\end{equation}
In particular, when \(\sigma_i\) is large, the posterior variance of
\(\boldsymbol{z}_{\mathrm{d}}\) remains large, so Langevin refinement can
introduce high-variance and operator-dependent spurious details in the detail
bands.
\end{corollary}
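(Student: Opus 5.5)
The plan is to rewrite the intermediate target $\pi_i$ in Haar coordinates and then read off the conditional law of the detail block from a product factorization. Because $W$ is orthonormal, the change of variables $\boldsymbol{x}_0=W^\top\boldsymbol{z}$ has unit Jacobian. It also maps the isotropic Gaussian anchor to an isotropic Gaussian in coefficient space:
\[
\mathcal{N}\left(W^\top\boldsymbol{z};\hat{\boldsymbol{x}}_{0,i},\sigma_i^2\boldsymbol{I}\right)
=
\mathcal{N}\left(\boldsymbol{z};\hat{\boldsymbol{z}}_i,\sigma_i^2\boldsymbol{I}\right),
\]
since $\|W^\top\boldsymbol{z}-\hat{\boldsymbol{x}}_{0,i}\|_2=\|\boldsymbol{z}-\hat{\boldsymbol{z}}_i\|_2$. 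Isotropy with identity covariance then lets this anchor factor exactly as
\[
\mathcal{N}\left(\boldsymbol{z}_{\mathrm{c}};\hat{\boldsymbol{z}}_{\mathrm{c},i},\sigma_i^2\boldsymbol{I}\right)\,
\mathcal{N}\left(\boldsymbol{z}_{\mathrm{d}};\hat{\boldsymbol{z}}_{\mathrm{d},i},\sigma_i^2\boldsymbol{I}\right),
\]
with no cross terms between the coarse and detail blocks.

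Next, I invoke Lemma~\ref{lp_insensitive_detail}. Under the exact annihilation assumption, its proof shows that the low-pass residual $\mathcal{P}_{\omega_i}(\mathcal{A}W^\top\boldsymbol{z}-\boldsymbol{y})$ equals $\mathcal{P}_{\omega_i}(\mathcal{A}W^\top(\boldsymbol{z}_{\mathrm{c}},\boldsymbol{0})-\boldsymbol{y})$. Consequently the likelihood factor $\exp(-\|\cdot\|_2^2/2\beta_i^2)$ is a function $h(\boldsymbol{z}_{\mathrm{c}})$ of the coarse block alone. The transformed density therefore has the form
\[
h(\boldsymbol{z}_{\mathrm{c}})\,
\mathcal{N}(\boldsymbol{z}_{\mathrm{c}};\hat{\boldsymbol{z}}_{\mathrm{c},i},\sigma_i^2\boldsymbol{I})\cdot
\mathcal{N}(\boldsymbol{z}_{\mathrm{d}};\hat{\boldsymbol{z}}_{\mathrm{d},i},\sigma_i^2\boldsymbol{I}).
\]
Conditioning on $\boldsymbol{z}_{\mathrm{c}}$ (with $\boldsymbol{y}$ fixed) and normalizing in $\boldsymbol{z}_{\mathrm{d}}$ cancels the coarse factor entirely. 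This gives the stated Gaussian, with equality in the exact idealization. The variance claim is then immediate: each detail coordinate has conditional variance $\sigma_i^2$, which is large when $\sigma_i$ is large. The ULA dynamics of Eq.~\eqref{ula_update} target this law, so their detail iterates inherit that spread.

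The remaining work is the "approximately" clause, which I expect to be the main obstacle because it needs a quantitative meaning. I would measure the deviation from annihilation by writing $B:=\mathcal{P}_{\omega_i}\mathcal{A}W^\top_{\mathrm{d}}$, where $W^\top_{\mathrm{d}}$ is the restriction of $W^\top$ to the detail block, and assuming $\|B\|_2\le\delta$. Completing the square in $\boldsymbol{z}_{\mathrm{d}}$ shows the conditional law is exactly Gaussian, with precision
\[
\sigma_i^{-2}\boldsymbol{I}+\beta_i^{-2}B^\top B .
\]
Its mean differs from $\hat{\boldsymbol{z}}_{\mathrm{d},i}$ by a term of order $\delta\,\sigma_i^2\beta_i^{-2}\,\|\text{residual}\|$. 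Both the precision correction and the mean shift vanish as $\delta\to0$, and each is controlled by the ratio $\delta^2\sigma_i^2/\beta_i^2$. I would state the approximation in exactly this form.

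The main difficulty here is interpretive rather than technical. For large $\sigma_i$, the ratio $\delta^2\sigma_i^2/\beta_i^2$ is small only if $\delta$ is genuinely small in the early regime. I would therefore present the approximate claim conditionally on that regime, noting that this is exactly the early high-noise setting the corollary describes.
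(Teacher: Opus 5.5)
Your proposal is correct and follows the paper's proof. Orthonormality of $W$ keeps the anchor isotropic in Haar coordinates, Lemma~\ref{lp_insensitive_detail} makes the low-pass likelihood a function of $\boldsymbol{z}_{\mathrm{c}}$ alone, and the resulting product factorization gives the detail conditional, exactly under exact annihilation. Your completing-the-square treatment of the approximate regime goes beyond the paper, which only treats exact annihilation and writes ``$\approx$'' informally. The precision $\sigma_i^{-2}\boldsymbol{I}+\beta_i^{-2}B^\top B$ and the mean shift $-\beta_i^{-2}(\sigma_i^{-2}\boldsymbol{I}+\beta_i^{-2}B^\top B)^{-1}B^\top\boldsymbol{r}$, with $\boldsymbol{r}=\mathcal{P}_{\omega_i}(\mathcal{A}W^\top(\boldsymbol{z}_{\mathrm{c}},\hat{\boldsymbol{z}}_{\mathrm{d},i})-\boldsymbol{y})$, are right. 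One correction: in units of $\sigma_i$ the shift is at most $(\delta\sigma_i/\beta_i)(\|\boldsymbol{r}\|_2/\beta_i)$, which is first order in $\delta$ and depends on the residual size, so it is not governed by $\delta^2\sigma_i^2/\beta_i^2$ alone.
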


\begin{proof}
Since \(W\) is orthonormal, the anchor Gaussian remains isotropic in Haar
coordinates:
\[
\mathcal{N}
\left(
\boldsymbol{x}_0;
\hat{\boldsymbol{x}}_{0,i},
\sigma_i^2\boldsymbol{I}
\right)
=
\mathcal{N}
\left(
\boldsymbol{z};
\hat{\boldsymbol{z}}_i,
\sigma_i^2\boldsymbol{I}
\right).
\]
Lemma~\ref{lp_insensitive_detail} implies that the low-pass term depends only on
\(\boldsymbol{z}_{\mathrm{c}}\) and is constant with respect to
\(\boldsymbol{z}_{\mathrm{d}}\) in the idealized low-pass regime. Therefore the
joint density factorizes into a term depending on \(\boldsymbol{z}_{\mathrm{c}}\)
and an independent Gaussian term in \(\boldsymbol{z}_{\mathrm{d}}\). This gives
\[
\boldsymbol{z}_{\mathrm{d}}
\mid
(\boldsymbol{y},\boldsymbol{z}_{\mathrm{c}})
\approx
\mathcal{N}
\left(
\hat{\boldsymbol{z}}_{\mathrm{d},i},
\sigma_i^2\boldsymbol{I}
\right).
\]
\end{proof}

\end{document}